\def\RR{\mathbb{R}}
\numberwithin{equation}{section}
\definecolor{darkgreen}{RGB}{0,179,0}
\newcommand{\myref}[2]{\hyperref[#2]{#1 \ref*{#2}}}
\newcommand{\na}{\textcolor{gray}{\footnotesize N/A}}
\newcommand{\tnan}{---}
\newtheorem{theorem}{Theorem}[section]
\newtheorem{lemma}{Lemma}[section]
\theoremstyle{definition} \newtheorem{definition}{Definition}[section]
\begin{document}

\title{ELF: Exact-Lipschitz Based Universal Density Approximator Flow}

\author{%
  Achintya Gopal \\
  Bloomberg Quant Research\\
  % 731 Lexington Avenue \\
  % New York, NY 10022 \\
    \texttt{achintyagopal@gmail.com} \\
}

\date{\displaydate{date}}
\maketitle

% \newcommand\blfootnote[1]{%
%   \begingroup
%   \renewcommand\thefootnote{}\footnotetext{#1}%
%   \addtocounter{footnote}{0}%
%   \endgroup
% }
% \blfootnote{Correspondence to \texttt{achintyagopal@gmail.com}}

\begin{abstract}
Normalizing flows have grown more popular over the last few years; however, they continue to be computationally expensive, making them difficult to be accepted into the broader machine learning community. In this paper, we introduce a simple one-dimensional one-layer network that has closed form Lipschitz constants; using this, we introduce a new Exact-Lipschitz Flow (ELF) that combines the ease of sampling from residual flows with the strong performance of autoregressive flows. Further, we show that ELF is provably a universal density approximator, more computationally and parameter efficient compared to a multitude of other flows, and achieves state-of-the-art performance on multiple large-scale datasets.
\end{abstract}

\section{Introduction}

Normalizing flows have become more popular within the last few years; however, they continue to have limitations compared to other generative models, more specifically that they are computationally expensive in terms of memory and time.

Early implementations of Normalizing Flows were coupling layers \citep{dinh2014nice,Dinh2016NVP, Kingma2018Glow} and autoregressive flows \citep{MAF2017, IAF2016}. These have easy to compute log-likelihoods;
however, coupling layers tend to need quite a few parameters to achieve strong performance and autoregressive flows are extremely expensive to sample from.
The newer technique of residual flows \citep{Chen2019ResidualFlows} allows for models that are built on standard components and have inductive biases that favor simpler functions \citep{QuARFlows2020}; however, these have the problem of being expensive in terms of time for computing log-likelihoods and training, as well as require quite a few layers for strong performance.

Since the introduction of these models, there have been many developments that have lead to improvement in parameter efficiency such as FFJORD \citep{FFJORD2018}, a continuous normalizing flow, that has a dynamic number of layers. However, this too can have computational problems as having a few dynamics layers can lead to hundreds of implicit layers.

Among the flows introduced, the ones with provable universal approximation capability are Affine Coupling Layers \citep{dinh2014nice,Dinh2016NVP,CFINN2020}, Neural Autoregressive Flows (NAF, \cite{NAF2018}), Block NAFs (BNAF, \cite{BNAF2019}), Sum-of-Squares Polynomial Flow \citep{SOSFlow2019}, and Convex Potential Flows (CP-Flow, \cite{huang2021cpflow}). Though these have been shown to be universal approximators, they do not necessarily translate into faster, more efficient training, and some of the flows listed require the expensive sampling routine of autoregressive flows.

In this paper, we introduce a new universal density approximator flow, Exact-Lipschitz Flows (ELF), based on residual flows and autoregressive flows, retaining the strong performance and closed-form log-likelihoods of autoregressive flows with the efficiency of sampling from residual flows. We achieve this by introducing a simple class of one-dimensional one-layer networks with closed form Lipschitz constants from which we generalize to higher dimensions using hypernetworks \citep{ha2016hypernetworks}. Further, we show our flow achieves state of the art performance on many tabular and image datasets.

\section{Related Work}

\subsection{Flows}

\cite{Rezende2015NF} introduced planar flows and radial flows to use in the context of variational inference. Sylvester flows created by \citet{Sylvester2018} enhanced planar flows by increasing the capacity of each individual transform. Both planar flows and Sylvester flows can be seen as special cases of residual flows. The capacity of these models were studied by \citet{Zhifeng2019}.

\citet{MADE2015} introduced large autoregressive models which were then used in flows by Inverse Autoregressive Flows (IAF, \cite{IAF2016}) and Masked Autoregressive flows (MAF, \cite{MAF2017}). Larger autoregressive flows were created by Neural Autoregressive Flows (NAF, \cite{NAF2018}) and Block NAFs (BNAF, \cite{BNAF2019}), both of which have better log-likelihoods but cannot be sampled from efficiently.

Early state of the art performance on images with flows was achieved by coupling layers such as NICE \citep{dinh2014nice}, RealNVP \citep{Dinh2016NVP}, Glow \citep{Kingma2018Glow} and Flow++ \citep{Flow++2019}. Affine coupling layers have been shown to be universal density approximators in the case of a sufficient number of layers \citep{CFINN2020}.

The likelihood performance of images was further improved by variational dequantization \citep{Flow++2019}. Additional improvements were achieved by masked convolutions \citep{MaCow2019} and adding additional dimensions to the input \citep{ANF2020,VFlow2020}.

Whereas autoregressive methods and coupling layers have block structures in their Jacobian, Residual Flows \citep{Chen2019ResidualFlows} have a dense Jacobian; FFJORD \citep{FFJORD2018} is a continuous normalizing flow based on Neural ODEs \citep{NODE2018} and can be viewed as a continuous version of residual flows. \citet{zhuang2021mali} improved the training of FFJORD achieving state of the art performance among uniformly dequantized flows.

\subsection{Lipschitz Functions}

To the best of our knowledge, the first upper-bound on the Lipschitz constant of a neural network was described by \citet{Szegedy2014} as the product of the spectral norms of linear layers. Regularizing for 1-Lipschitz functions has been used in the context of GANs \citep{wgan2017,Gulrajani2017}; 1-Lipschitz was further enforced in GANs by using the power iteration method to approximate the Lipschitz constant of individual layers \citep{SNGAN2018}. Residual Flows \citep{Chen2019ResidualFlows} also used the power iteration method to ensure 1-Lipschitz in its residual blocks.

In general, computing the Lipschitz constant even for two layer neural networks is NP-hard \citep{Virmaux2018}. However, this theorem is for any arbitrary network, meaning any $n$ dimensional input, any $m$ dimensional output, and any non-linear activation. In this work, we focus on a specific subset of networks in which the Lipschitz constant can be computed efficiently. \cite{Anil2019} introduced a new activation function and methods to have $1$-Lipschitz neural networks of arbitrary depth. Further approaches to estimate the Lipschitz constant have used semidefinite programs \citep{Fazlyab2019Lipschitz}.

\section{Background}

Suppose that we wish to formulate a joint distribution on an $n$-dimensional real vector $x$. A flow-based approach treats $x$ as the result of a transformation $f^{-1}$ applied to an underlying vector $z$ sampled from a base distribution $p_z(z)$.
The generative process for flows is defined as:
\begin{align*}
   z & \sim p_z(z) 
\\ x &= f^{-1}(z) 
\end{align*}
where $p_z$ is often a Normal distribution and $f$ is an invertible function.  Using change of variables, the log likelihood of $x$ is
$$ \log p_x(x) = \log p_z\left (f(x) \right) + \log \abs{\text{det}\left(\frac{\partial f(x)}{\partial x}\right)} $$
To train flows (i.e. maximize the log likelihood of data points), we need to be able to compute the logarithm of the absolute value of the determinant of the Jacobian of $f$, also called the \textit{log-determinant}.
To construct large normalizing flows, we can compose smaller ones as this is still invertible and the log-determinant of this composition is the sum of the individual log-determinants.

Due to the required mathematical property of invertibility, multiple transformations can be composed, and the composition is guaranteed to be invertible.
Thus, in theory, a potentially complex transformation can be built up from a series a smaller, simpler transformations with tractable log-determinants.

Constructing a Normalizing Flow model in this way provides two obvious applications: drawing samples using the generative process and evaluating the probability density of the modeled distribution by computing $p_x(x)$. These require evaluating the inverse transformation $f$, the log-determinant, and the density $p_z(z)$. In practice, if either $f$ or $f^{-1}$ turns out to be inefficient, then one or the other of these two applications can become intractable. For evaluating the density, in particular, computing the log-determinant can be an additional trouble spot.
A determinant can be computed in $O(n^3)$ time for an arbitrary $n$-dimensional data space. However, in many applications of flows, such as images, $n$ is large, and a $O(n^3)$ cost per evaluation is simply too high to be useful. Therefore, in flow-based modeling, there are recurring themes of imposing constraints on the model that guarantee invertible transformations and log-determinants that can be computed efficiently.

\subsection{Autoregressive Flows}

For a multivariate distribution, the probability density of a data point can be computed using the chain rule: 
\begin{equation}
 p(x_1,\dots,x_n) = \prod_{i=1}^{n} p(x_i|x_{<i}) 
 \end{equation}
By using a conditional univariate normalizing flow $f_\theta(x_i|x_{<i})$ such as an affine transformation for each univariate density, we get an autoregressive flow \citep{MAF2017}. Given that its Jacobian is triangular, the determinant is easy to compute as it is the product of the diagonal of the Jacobian. 
These models have a tradeoff where the log-likelihood is parallelizable but the sampling process is sequential, or vice versa depending on parametrization \citep{IAF2016}. In this paper, due to the fact the Jacobian is triangular, we refer to $f_\theta$ as a \textit{triangular function}.
Given a naive implementation of sampling from autoregressive models, the number of times an autoregressive function would need to be called to sample would be $d$ times.

\subsection{Residual Flows}

A residual flow \citep{Chen2019ResidualFlows} is a residual network $\left(f(x) = x + g(x)\right)$ where the Lipschitz constant (\myref{Definition}{defn:lipschitz}) of $g$ is strictly less than one. This constraint on the Lipschitz constant ensures invertibility and is enforced using power iteration method to spectral normalize $g$; the transform is invertible using Banach's fixed point algorithm (\myref{Algorithm}{alg:fixed_point_iteration}) where the convergence rate is exponential in the number of iterations and is faster for smaller Lipschitz constants \citep{Behrmann2019}:
\begin{equation}
 \norm{x - x_n}_2 \leq \frac{\text{Lip}(g)^n}{1 - \text{Lip}(g)} \norm{x_1 - x_0}_2 
 \end{equation}

\begin{algorithm}[t]
   \caption{Inverse of Residual Flow via Fixed Point Iteration}
   \label{alg:fixed_point_iteration}
\begin{algorithmic}
   \STATE {\bfseries Input:} data $y$, residual block $g$, number of iterations $n$
   \STATE Initialize $x_0 = y$.
   \FOR{$i=1$ {\bfseries to} $n$}
   \STATE $x_i = y - g(x_{i-1})$
   \ENDFOR
\end{algorithmic}
\end{algorithm}

Evaluating the density given a residual flow is expensive as the log-determinant is computed by estimating the Taylor series:
\begin{equation}\label{eqn:resid_logdet}
 \ln\abs{J_f(x)} = \sum_{k=1}^{\infty} (-1)^{k+1} \frac{\text{tr}(J_g^k)}{k} 
\end{equation}
where the Skilling-Hutchinson estimator \citep{skilling1989eigenvalues} is used to estimate the trace and the Russian Roulette estimator \citep{RussianRoulette} is used to estimate the infinite series.

\section{ELF (Exact-Lipschitz Flows)}

One of the bottlenecks in the capacity of residual flows is, as shown by \citet{Behrmann2019}, that a single flow can transform the log-determinant up to:
$$ d \ln\left(1 - \text{Lip}(g) \right) \leq \ln\abs{\text{det}\ J_{f}(x)} \leq d \ln\left(1 + \text{Lip}(g) \right) $$
where $d$ is the dimensionality of the data, $g$ is the residual connection and $f = I + g$. Both the number of layers and Lipschitz constant of $g$ affect the expansion and contraction bounds of these flows; to make matters worse, since the spectral normalization of $g$ uses an upper bound of the Lipschitz constant that has been shown to be loose \citep{Virmaux2018}, the number of layers required to achieve a target expansion or contraction increases.
The main contribution of our residual flow is that the Lipschitz constant can be calculated efficiently and exactly. 

\subsection{Universal 1-D Lipschitz Function}\label{sec:one_d_lipschitz}

\begin{figure}[!bt]
\begin{subfigure}{0.45\linewidth}
\centering
\centerline{\includegraphics[width=\columnwidth]{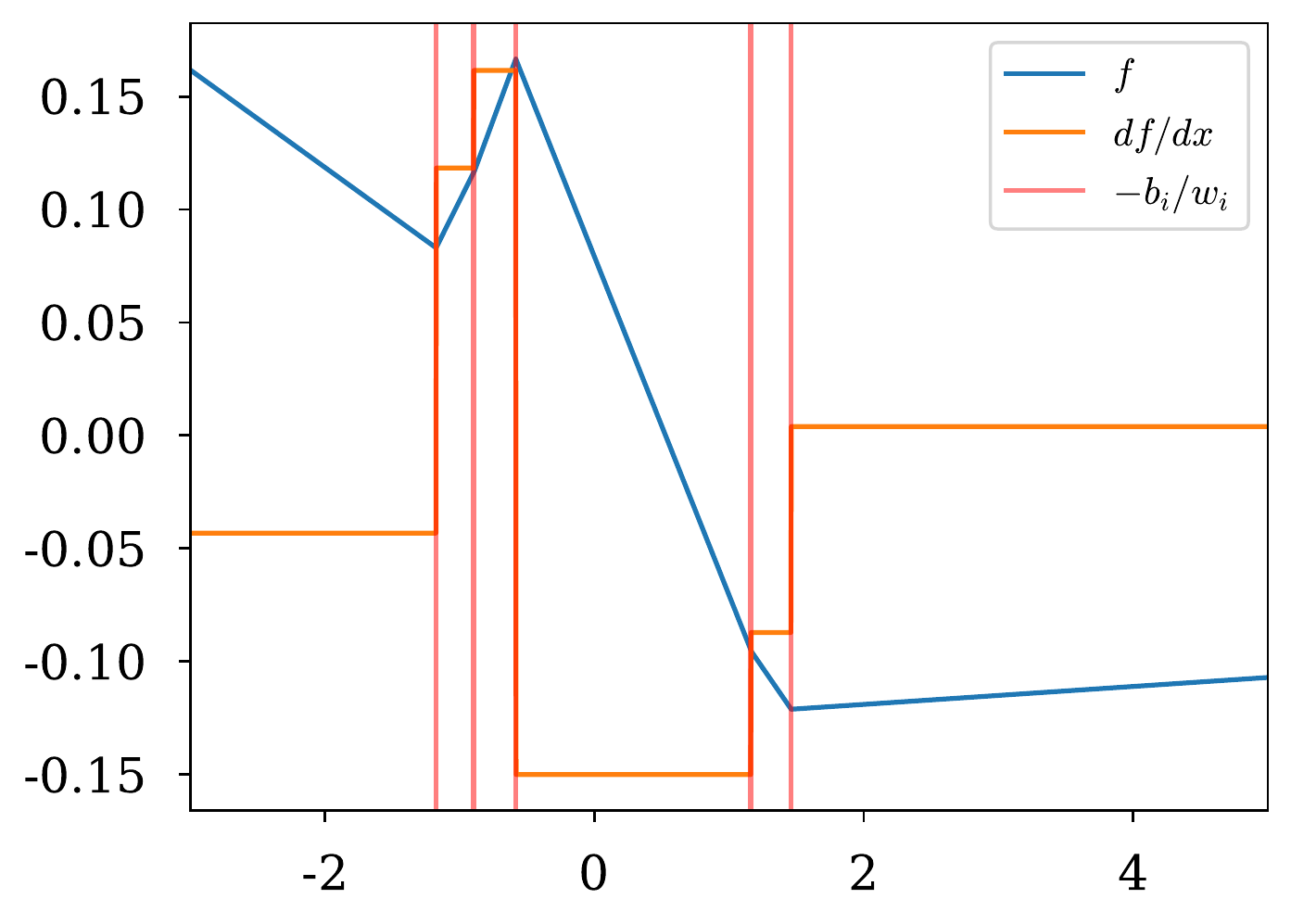}}
\caption{ReLU}
\label{fig:one_hidden_layer}
\end{subfigure}\hfill
\begin{subfigure}{0.45\linewidth}
\centering
\centerline{\includegraphics[width=\columnwidth]{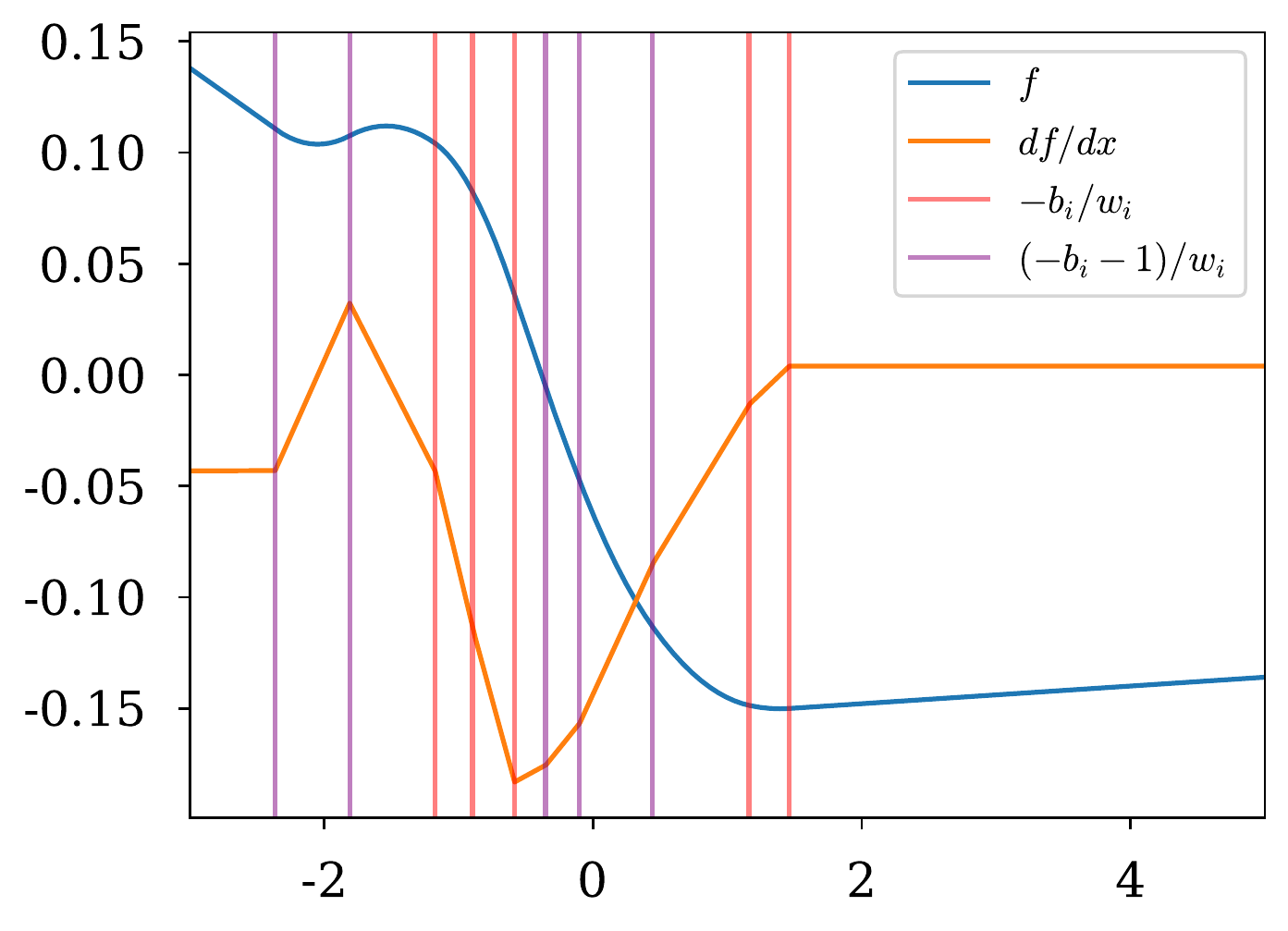}}
\caption{FELU}
\label{fig:one_hidden_layer_felu}
\end{subfigure}
\caption{Example of randomly initialized one layer networks with hidden size $H = 5$. Through the use of a quadratic piecewise activation function, computing the Lipschitz constant (the maximum absolute value of the gradient) requires simply evaluating the gradient at the ends of the pieces.}
\end{figure}

We start with the one dimensional case and generalize to higher dimensions in \myref{Section}{sec:elf_ar}. 
In \myref{Figure}{fig:one_hidden_layer}, we show an example of a randomly initialized one layer network with ReLU, both the function and its first derivative. The key observation here is that for a network with hidden size $H$, there are $H+1$ different gradients and so the Lipschitz can be computed with $H + 1$ gradient evaluations.

\begin{wrapfigure}[16]{r}{0.4\textwidth}
\centering
\centerline{\includegraphics[width=0.4\textwidth]{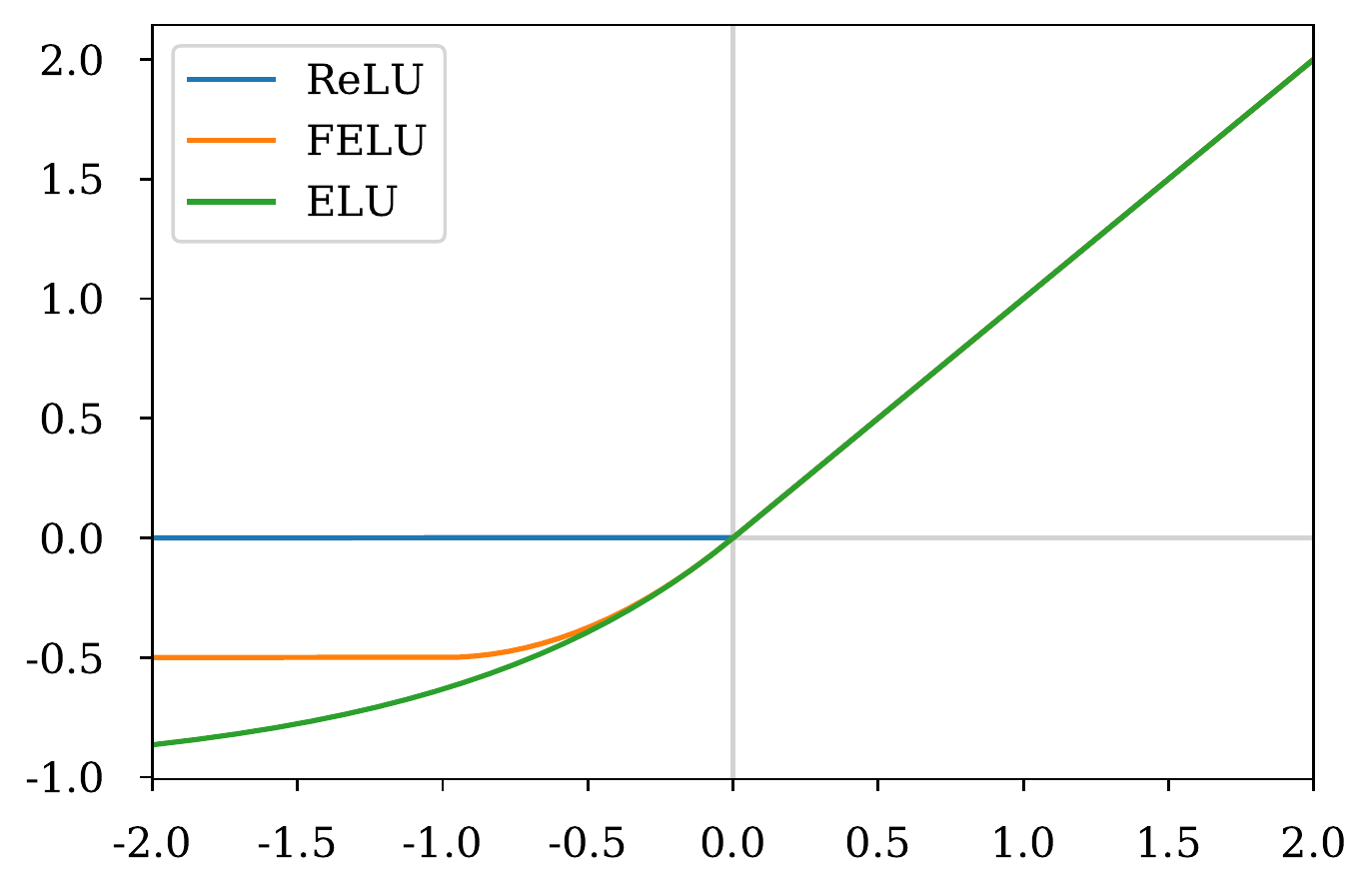}}
\caption{A comparison of ReLU, FELU, and ELU. The main shared components of FELU and ELU is the non-zero second derivative near zero and negative functions values for $x < 0$.}
\label{fig:felu}
\end{wrapfigure}
We can generalize this method of computing the Lipschitz constant by using any activation function that is piecewise linear and quadratic. Since ELU \citep{ELU2015} (a piecewise function with an exponential component) has been found to give strong performance in flows \citep{Chen2019ResidualFlows} and we were unable to successfully train a ReLU Residual flow (\myref{Appendix}{sec:relu_flow}), we created Fake ELU (FELU):
$$
\text{FELU}(x) = \begin{cases} 
  x & x > 0 \\
  \frac{(x + 1)^2}{2} - \frac{1}{2} & -1\leq x\leq 0 \\
  - \frac{1}{2} & x < -1
\end{cases}
$$
shown in \myref{Figure}{fig:felu}. \myref{Figure}{fig:one_hidden_layer_felu} shows the same network as in \myref{Figure}{fig:one_hidden_layer} but with FELU. 
Though there are not a finite number of unique gradients as in ReLU, within each piecewise component, the maximum absolute gradient can be found by simply checking the two ends of the piece. 

Though not explored in this paper, other quadratic piecewise activation functions could be used where an activation function with $N$ pieces would require $((N - 1) \cdot H + 1)$ gradient evaluations to compute the Lipschitz constant. For the activation to be 1-Lipschitz, it needs to have linear pieces on the two ends.

We show that this one-dimensional architecture (a one-layer neural network with FELU) is a universal approximator of Lipschitz functions (\myref{Lemma}{lemma:felu_approx}). 

\subsection{Universal 1-D Distributions}\label{sec:one_d_dist_elf}

We can use the one-dimensional architecture from \myref{Section}{sec:one_d_lipschitz} to create a new one-dimensional distribution based on residual flows which we call Exact-Lipchitz Flows (ELF).

In residual flows ($f = I + g$), any neural network can be used for $g$ where, if we write $g$ as $g_L \circ \dots \circ g_1 $, an upper bound on $\text{Lip}(g)$ is 
\begin{equation}
 \text{Lip}(g) \leq \prod_{i=1}^{L} \text{Lip}(g_i)  
\end{equation}
and so to normalize $g$, each function $g_i$ is normalized independently.

For ELF, we use the following for $g(x)$:
\begin{equation}
 b_2 + \sum_{i=1}^{H} w_{2,i}\ \text{FELU} (w_{1, i} x + b_{1,i}) 
 \end{equation}
where $w_{1,i}, w_{2,i}, b_{1,i}, b_2, x \in \RR$, or, in other words, a one layer network with FELU as the activation function. Instead of normalizing each weight matrix independently, we normalize by the exact Lipschitz constant, as was shown in \myref{Section}{sec:one_d_lipschitz}. More specifically, we use the maximum of:
\begin{equation}\label{eqn:elf_lip}
 \max_{i \in [1,\dots,H]} \abs{\ \frac{\partial g}{\partial x}\Bigg|_{x=-\frac{b_{1,i}}{w_{1,i}}}} \qquad \text{ and } \max_{i \in [1,\dots,H]} \abs{\ \frac{\partial g}{\partial x}\Bigg|_{x=-\frac{1 + b_{1,i}}{w_{1,i}}}}
\end{equation}

By composing this variant of residual flows with an affine layer (\myref{Figure}{fig:arch}), we show in \myref{Lemma}{sec:universality_proof} that this is a universal density approximator of one dimensional distributions. The key idea of the proof is that since the composition of ELF and an affine transformation is a universal approximator of monotonic functions (\myref{Lemma}{lemma:t_to_lipschitz}), this implies a universal density approximator of one dimensional distributions (\myref{Lemma}{lemma:cfinn}).

\begin{figure}
\begin{center}
\scalebox{0.8}{
\begin{tikzpicture}[
  scale=0.9,
  every neuron/.style={
    circle,
    % draw,
    minimum size=0.3cm,
    thick
  },
  every data/.style={
    rectangle,
    % draw,
    minimum size=0.4cm,
    thick
  },
]

  \node [align=center,every neuron/.try, data 1/.try, minimum width=0.3cm] (input-x)  at ($ (2.5, 0) $) {\small$x$};

  \node [align=center,every neuron/.try, data 1/.try, minimum width=0.3cm, draw] (w3) at ($ (input-x) + (2.5, 0) $) {\small$h_3$};
  \node [align=center,every neuron/.try, data 1/.try, minimum width=0.3cm, draw] (w2) at ($ (w3) + (0, 1.0) $) {\small$h_2$};
  \node [align=center,every neuron/.try, data 1/.try, minimum width=0.3cm, draw] (w1) at ($ (w2) + (0, 1.0) $) {\small$h_1$};
  \node [align=center,every neuron/.try, data 1/.try, minimum width=0.3cm, draw] (w4) at ($ (w3) - (0, 1.0) $) {\small$h_4$};
  \node [align=center,every neuron/.try, data 1/.try, minimum width=0.3cm, draw] (w5) at ($ (w4) - (0, 1.0) $) {\small$h_5$};

  \node [align=center,every neuron/.try, data 1/.try, minimum width=0.3cm, draw] (x) at ($ (w3) + (2.5, 0) $) {\small$+$};

  \node [align=center,every neuron/.try, data 1/.try, minimum width=0.3cm] (aff_t) at ($ (x) + (2.0, 0) $) {\small{ActNorm}};

  \node [align=center,every neuron/.try, data 1/.try, minimum width=0.3cm] (x2) at ($ (aff_t) + (1.75, 0) $) {\small$z$};

  \draw [black,solid,->] ($(input-x.east)$) -- ($(w1.west)$);
  \draw [black,solid,->] ($(input-x.east)$) -- ($(w2.west)$);
  \draw [black,solid,->] ($(input-x.east)$) -- ($(w3.west)$);
  \draw [black,solid,->] ($(input-x.east)$) -- ($(w4.west)$);
  \draw [black,solid,->] ($(input-x.east)$) -- ($(w5.west)$);

  \draw [black,solid,->] ($(w1.east)$) -- ($(x.west)$);
  \draw [black,solid,->] ($(w2.east)$) -- ($(x.west)$);
  \draw [black,solid,->] ($(w3.east)$) -- ($(x.west)$);
  \draw [black,solid,->] ($(w4.east)$) -- ($(x.west)$);
  \draw [black,solid,->] ($(w5.east)$) -- ($(x.west)$);
  \draw [black,solid,->] ($(x.east)$) -- ($(aff_t.west)$);
  \draw [black,solid,->] ($(aff_t.east)$) -- ($(x2.west)$);

  \draw [black,solid,->] ($(input-x.east)$) to [out=-85,in=-95, looseness=2.5] ($(x.west)$);

  \node[fit={ ($(input-x.west |- w1.north) - (0.2,0)$) ($(x.east |- w5.south) + (0,-0.75) $)}, blue,thick,dotted, draw] (calib-rect) {} ;

  \node [align=center,blue] (label)  at ($ (w1)+(-2.5,0.1) $) {\small{ELF}};

\end{tikzpicture}
}
\end{center}
\caption{Diagram of our one dimensional universal density approximator flow. ActNorm denotes an elementwise affine transformation with data dependent initialization \citep{Kingma2018Glow}}\label{fig:arch}
\end{figure}
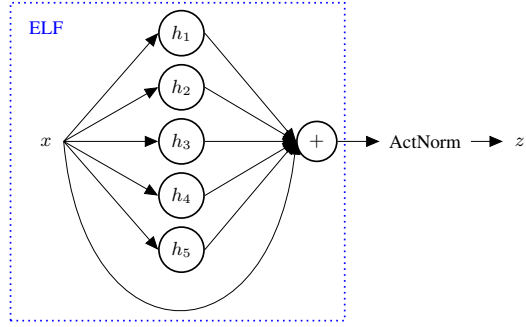

\subsection{Universal Approximation of Higher Dimensional Distributions}\label{sec:elf_ar}

To generalize to multivariate distributions, we use an autoregressive hypernetwork and refer to this as ELF-AR. 

Similar to autoregressive flows \citep{Oord2016PixelRNN,MAF2017,IAF2016,NAF2018,BNAF2019}, we use MADE \citep{MADE2015} to create large autoregressive networks that ensures the ordering is preserved. The output of the MADE is then the parameters of ELF. More specifically, we can write the autoregressive hypernetwork as:
\begin{equation}\label{eqn:elf_hnet}
 g(x)_t = b_2(x_{<t}) + \sum_{i=1}^{H} w_{2,i}(x_{<t})\ \text{FELU}\left( w_{1,i}(x_{<t})\cdot\ x_t + b_{1,i}(x_{<t})\right)
\end{equation}
where $x, g(x) \in \RR^{d}$, $t \leq d$, $H$ is the hidden size of ELF, and $b_2$,$w_{2,i}$, $w_{1,i}$, and $b_{1,i}$ are all hypernetworks that output a scalar. In terms of implementation, to maximize parallelism, we output all $3H +1 $ parameters in one call. Importantly, the hypernetwork is only over ELF, not the elementwise affine transformation in \myref{Figure}{fig:arch}. Once the hypernetwork outputs the parameters of ELF, each output dimension $g(x)_t$ is then normalized by the Lipschitz constant using \myref{Equation}{eqn:elf_lip}.

In \myref{Theorem}{thm:main_theorem}, we show that this architecture is a universal density approximator where the key idea of the proof is that the construction of one-dimensional 1-Lipschitz FELU Networks is continuous, hence we can use hypernetworks to create a universal approximator of triangular 1-Lipschitz functions. Thus creating a universal approximator of triangular functions by composing the ELF-AR with an elementwise affine transformation, \citet{CFINN2020} showed that this implies a universal density approximator.

In summary, the architecture introduced in this section uses a hypernetwork to parametrize ELF (\myref{Equation}{eqn:elf_hnet}). At which point, we compute the Lipschitz constant of each ELF (over every dimension) generated using \myref{Equation}{eqn:elf_lip} and normalize each ELF flow when the computed Lipschitz constant is greater than one. Complexity analysis and implementation details of \myref{Equation}{eqn:elf_lip} can be found in \myref{Appendix}{sec:complexity_analysis} and \myref{Appendix}{sec:cuda_code}. After normalizing by the Lipschitz constant, we can evaluate the function $f$ (\myref{Equation}{eqn:elf_hnet}) as well as its log-determinant in closed form:
\begin{equation}
\log \abs{\frac{f(x)}{dx}} = \log\abs{\sum_{t=1}^{d} \left[1 +  \frac{g(x)_t}{dx_t}\right]}
\end{equation}
Unlike Residual Flows, the log-determinant does not require estimating an infinite series (\myref{Equation}{eqn:resid_logdet}). By normalizing by the Lipschitz constant, inverting ELF (sampling) is efficient (\myref{Section}{sec:image_sample_eff}) since, instead of having to inverting each dimension sequentially as would be done for autoregressve flows, we apply the fixed point algorithm to the full input. 

\section{Experiments}

In this section, we test the approximation capabilities of ELF on simulated data (\myref{Section}{sec:lipschitz_approx}, \myref{Section}{sec:synthetic_data}), and performance on density estimation on tabular data (\myref{Section}{sec:tabular_data}) and image datasets (\myref{Section}{sec:image_data}).

\subsection{Approximation Power for Lipschitz Functions}\label{sec:lipschitz_approx}

\begin{wrapfigure}[16]{r}{0.45\textwidth}
\centering
\vspace{-1.5em}
\centerline{\includegraphics[width=0.45\textwidth]{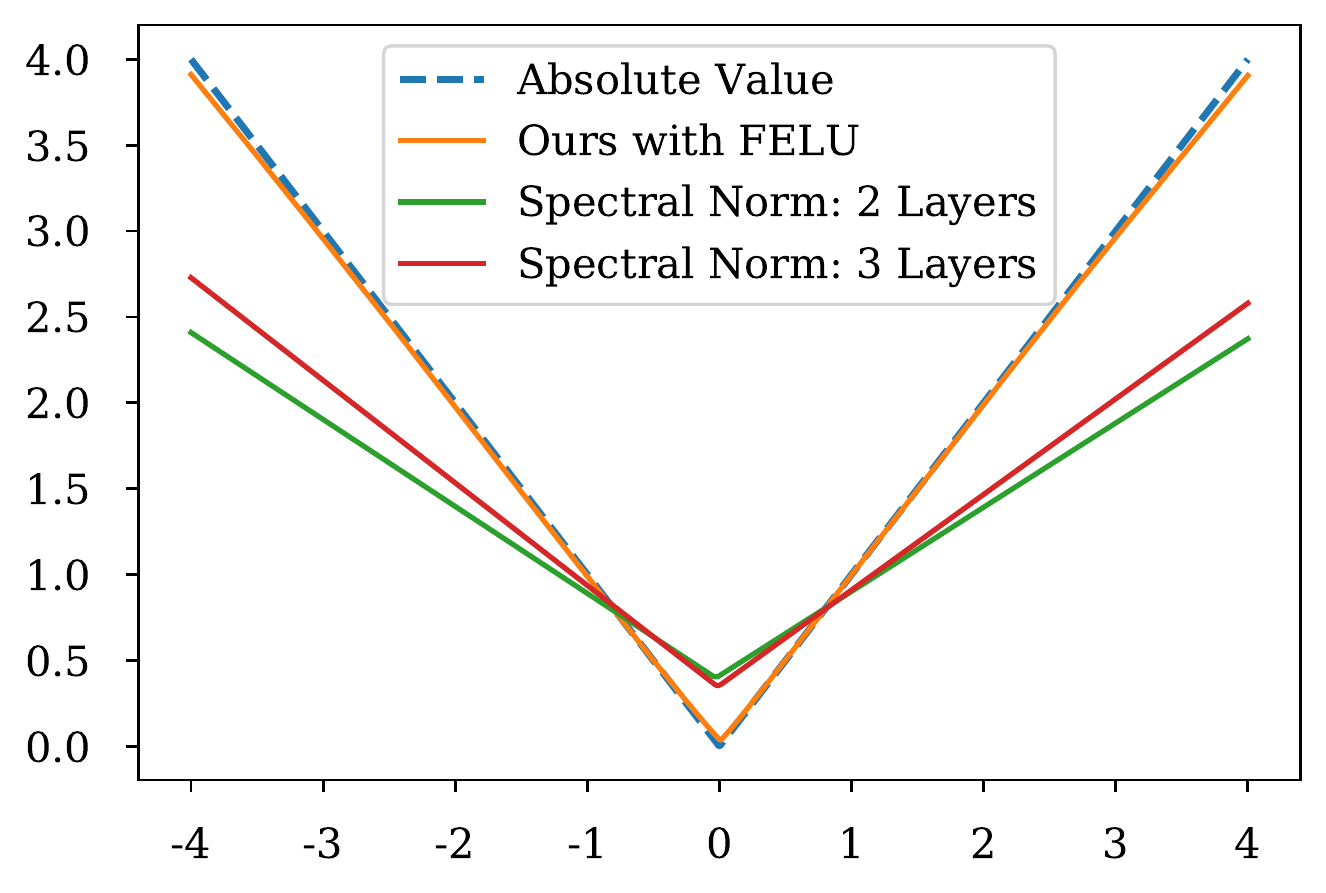}}
\caption{Comparison of using our closed form Lipschitz computation versus spectral normalization. Using closed form Lipschitz allows for absolute value to be learnable.}\label{fig:abs_val}
\end{wrapfigure}

For a simple use case comparing the power of exact Lipschitz computation versus using the product of the spectral norms of the linear layers, similar to \citet{Anil2019}, we check how well absolute value can be learned since, as was proven by \citet{huster2018limitations}, norm-constrained (such as spectral norm) ReLU networks are provably unable to approximate simple functions such as absolute value. In \myref{Figure}{fig:abs_val}, we can see that even if we use multiple layers to learn absolute value, spectral norm is too inaccurate to allow for learning absolute value, even though absolute value is a 1-Lipschitz function. On the other hand, by using an exact Lipschitz computation, our architecture is able to learn absolute value.

\subsection{Synthetic Data}\label{sec:synthetic_data}

Though ELF-AR with FELU is provably a universal approximator, this might not translate into improved performance in practice when training with gradient descent. To test this out, we compare the performance of one large ELF-AR against other universal density approximators. In \myref{Figure}{fig:synth_ring}, we can see, in comparison to autoregressive models, Glow, NAFs and CP Flows, ELF-AR is able to learn  a mixture of Gaussians quite well with only one transformation. However, ELF-AR's performance in sparse regions of the data suggests room for improvement in terms of extrapolation of the density function.

In \myref{Table}{tbl:synth_data}, we compare the runtime between the different flows. Though Affine Coupling is significantly faster than the other methods, it also has significantly worse performance. Further, we can see that ELF-AR, though a bit slower than the other flows in terms of sampling and inference, has stronger performance.

Performance on an additional synthetic dataset is in \myref{Appendix}{sec:synth_checkboard}. More details on the models trained is in \myref{Appendix}{sec:synth_architecture}.

\begin{figure}[!tb]
\begin{subfigure}{0.3\linewidth}
\centering
\centerline{\includegraphics[width=\columnwidth]{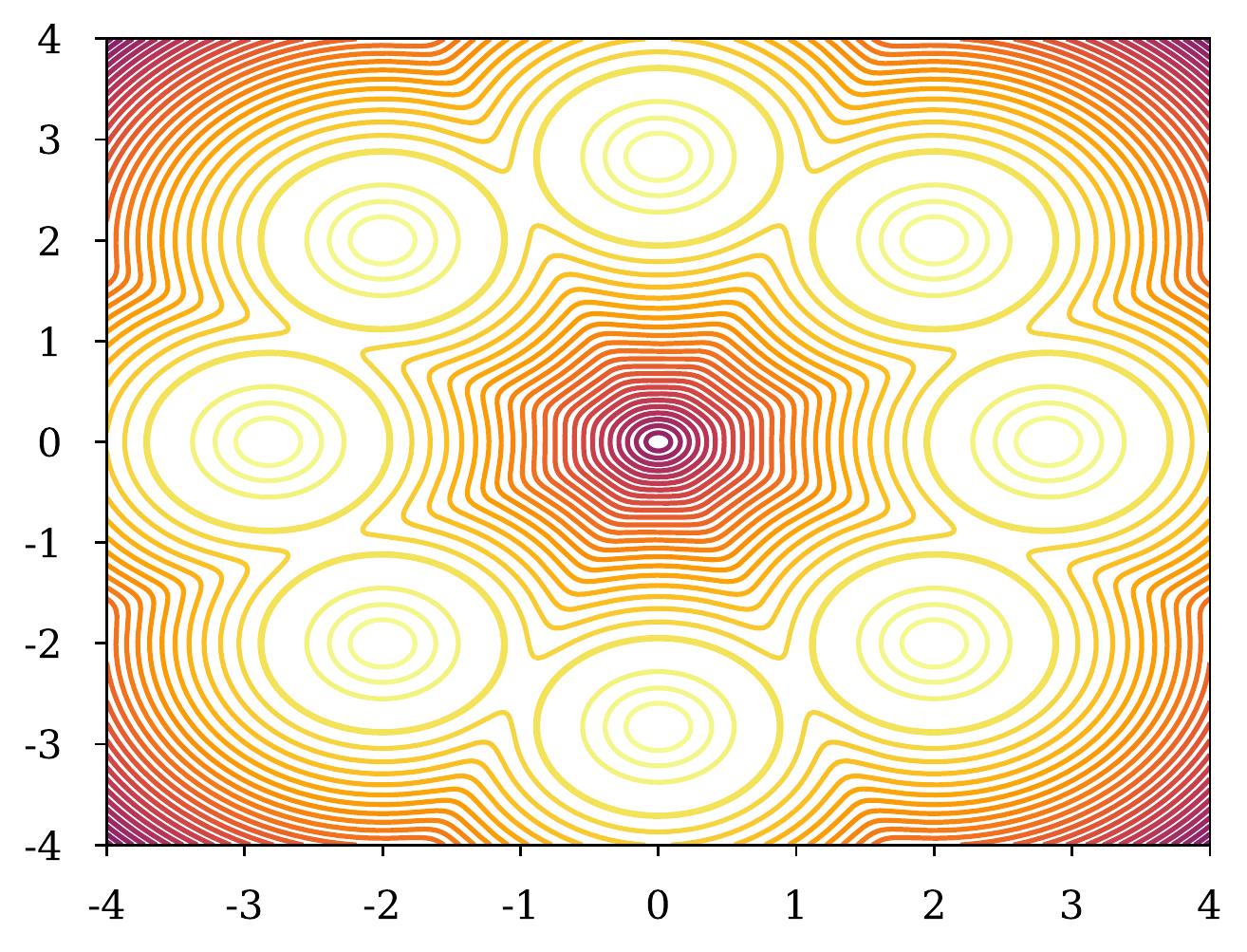}}
\caption{\textbf{True}}\label{fig:synth_true}
\end{subfigure} \hfill
\begin{subfigure}{0.3\linewidth}
\centering
\centerline{\includegraphics[width=\columnwidth]{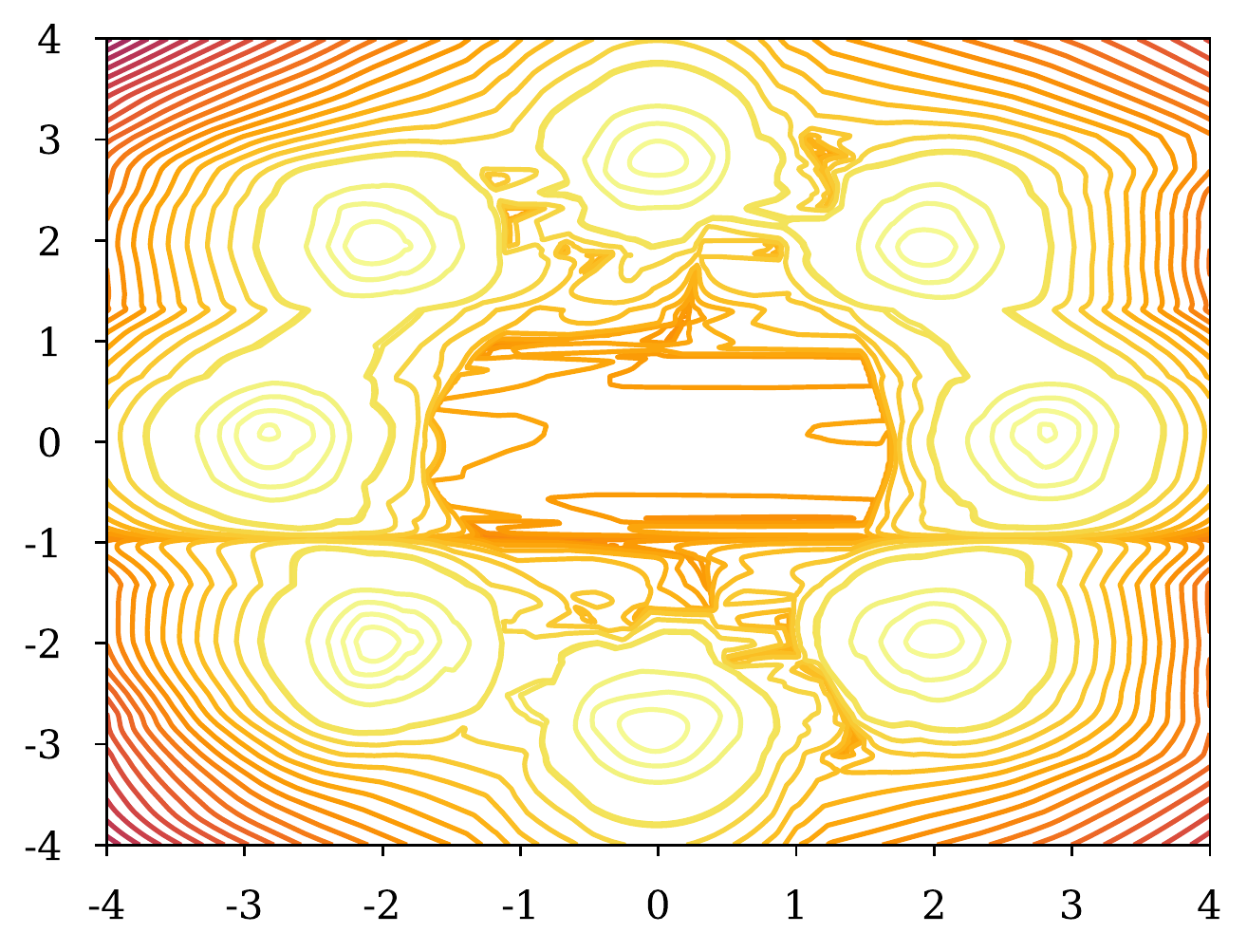}}
\caption{ELF-AR (Ours)}\label{fig:elf_ar_eightgauss_contour}
\end{subfigure} \hfill
\begin{subfigure}{0.3\linewidth}
\centering
\centerline{\includegraphics[width=\columnwidth]{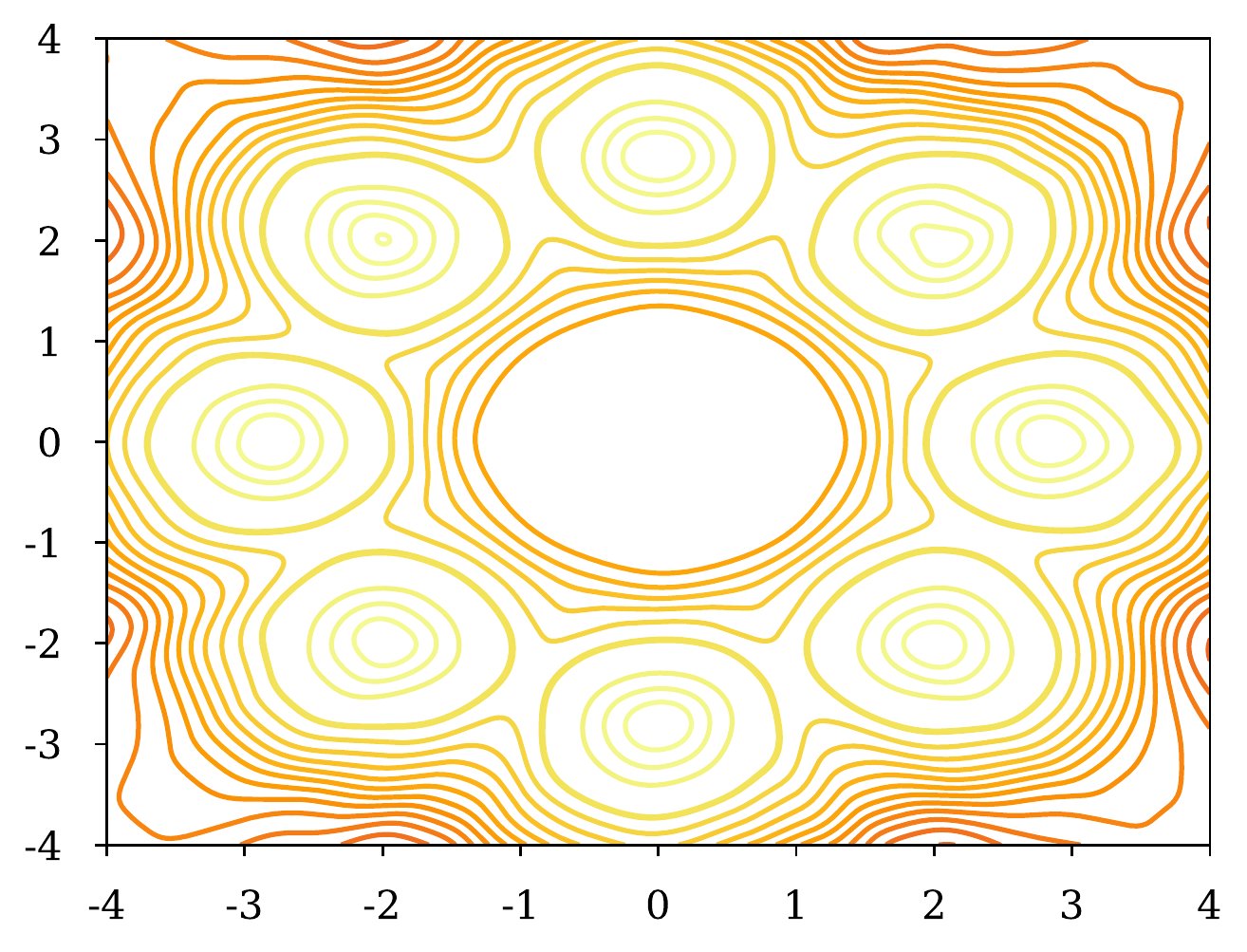}}
\caption{CP-Flow}
\end{subfigure}

\begin{subfigure}{0.3\linewidth}
\centering
\centerline{\includegraphics[width=\columnwidth]{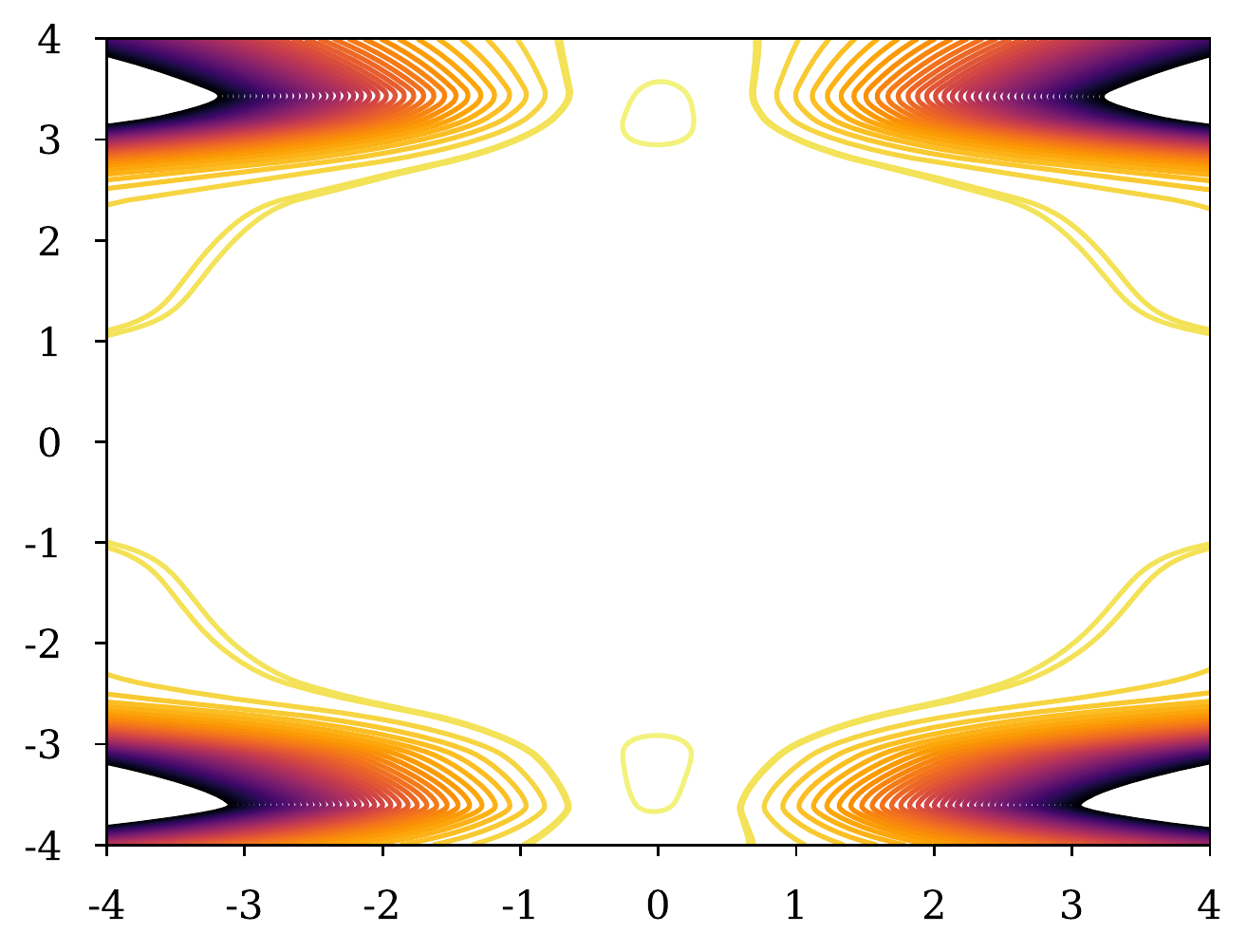}}
\caption{Glow (IAF)}
\end{subfigure} \hfill
\begin{subfigure}{0.3\linewidth}
\centering
\centerline{\includegraphics[width=\columnwidth]{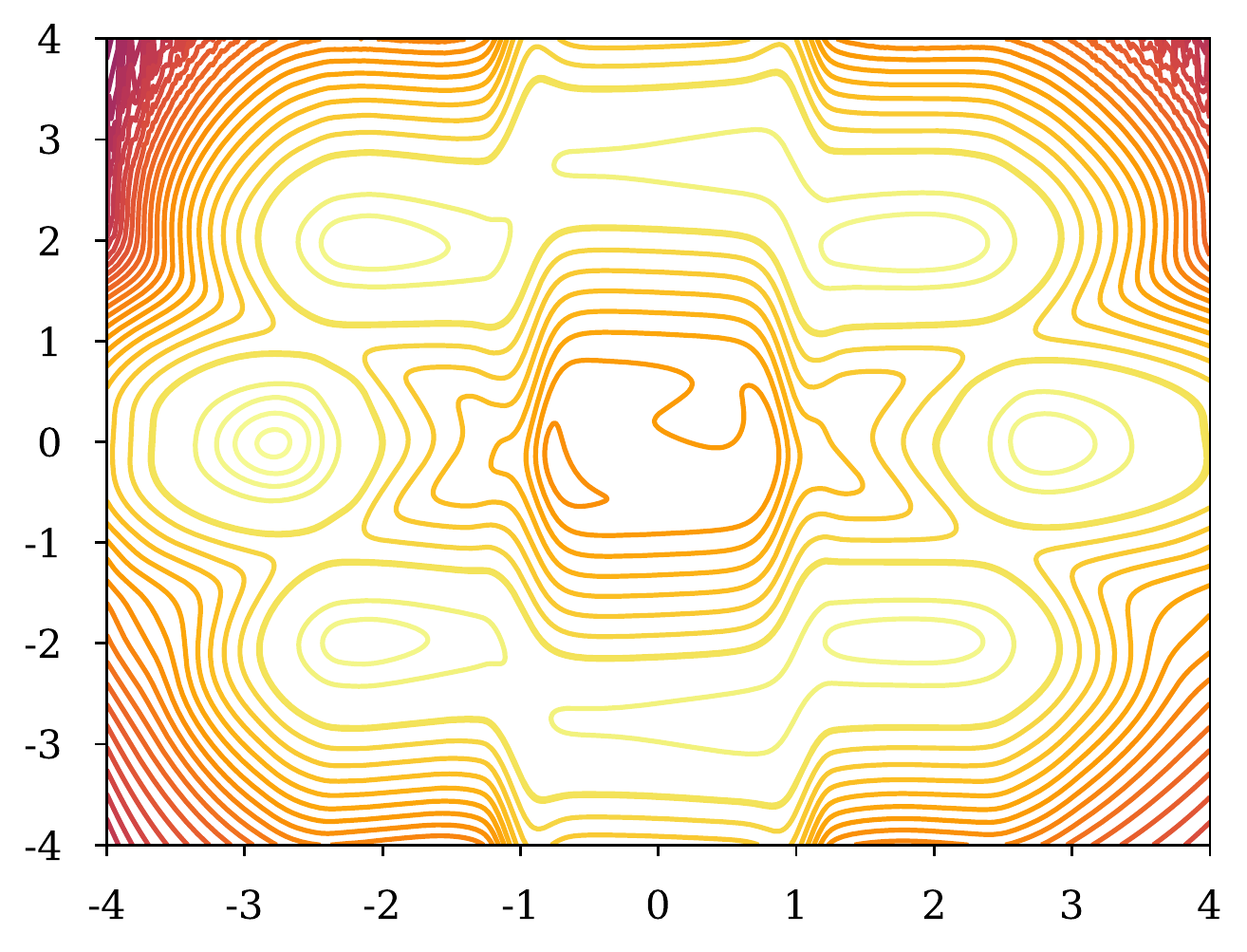}}
\caption{NAF}
\end{subfigure} \hfill
\begin{subfigure}{0.3\linewidth}
\centering
\centerline{\includegraphics[width=\columnwidth]{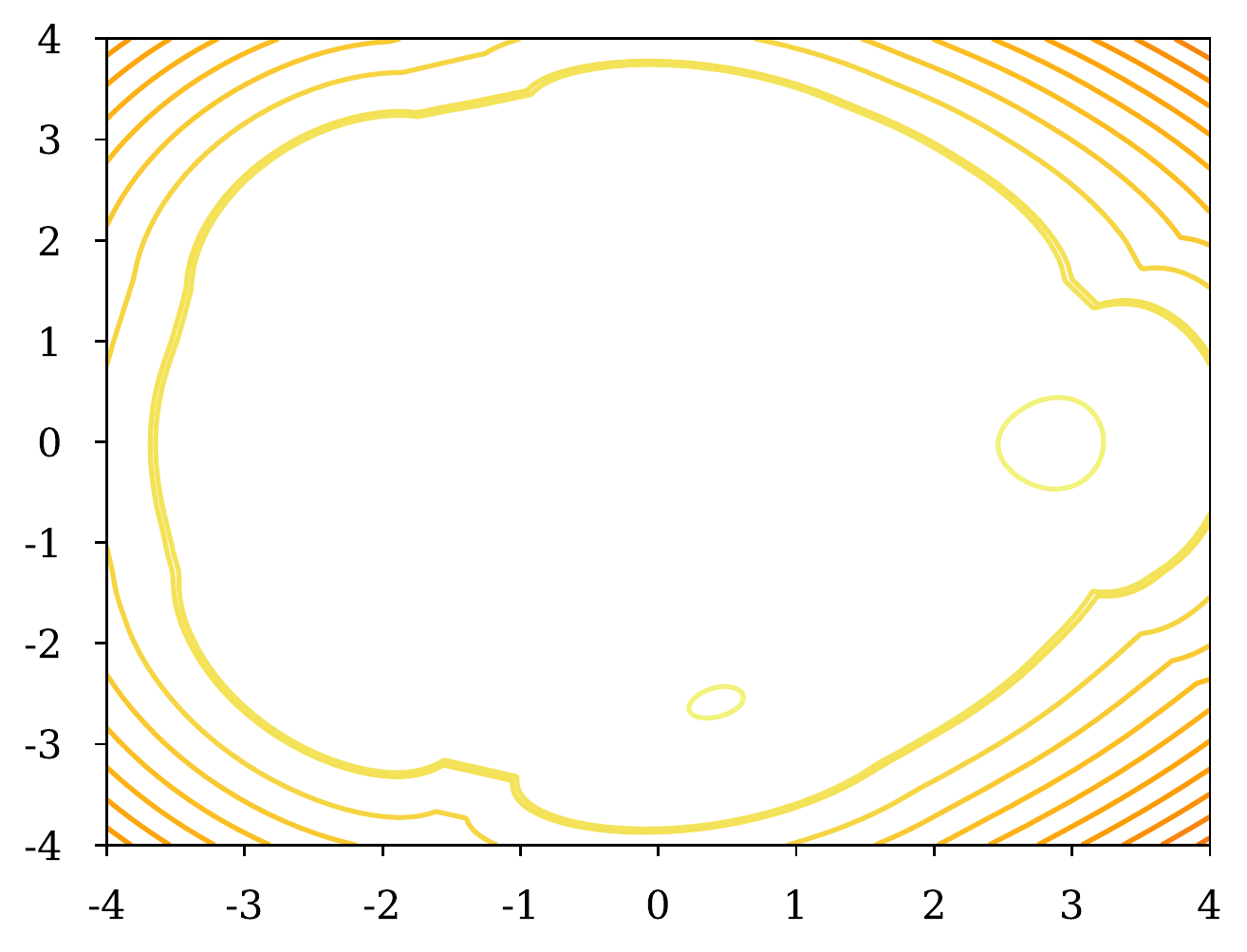}}
\caption{Residual Flow}
\end{subfigure}
\caption{The contour plots in log space of mixture of Gaussians. The levels shown in each subfigure are the same.}\label{fig:synth_ring}
\end{figure}

\begin{table*}[!tb]
\caption{Comparison of different flows on fitting to mixture of eight Gaussians (\myref{Figure}{fig:synth_true}) where the architectures for each were chosen so that they had the same number of layers and approximately equivalent number of parameters (250K). $^\dagger$Implementation taken from \href{https://github.com/CW-Huang/CP-Flow}{https://github.com/CW-Huang/CP-Flow} (MIT License).}\label{tbl:synth_data}
\centering
\begin{tabular}{lcccc}
\toprule
\multirow{1}{*}{\textbf{Model}}& {Log-Likelihood} & {Train Time (m)}& {Inference Time (s)} & {Sample Speed (s)}
 \\
\midrule
Affine Coupling & -4.0 & 1.8 & 0.005 & 0.003  \\ 
NAF$^\dagger$ & -3.0 & 15.5 & 0.02 & \na  \\ 
Residual Flow & -3.5 & 10.1 & 0.06 & 0.23  \\ 
CP-Flow$^\dagger$ & \textbf{-2.8} & 33.7 & 0.2 & 3.2  \\ 
\midrule
ELF-AR (Ours) & \textbf{-2.8} & 4.2 & 0.04 & 0.27  \\ 
\bottomrule
\end{tabular}
\end{table*}

\subsection{Tabular Data}\label{sec:tabular_data}

\begin{table*}[!tb]
\centering
\caption{Log-likelihood on the test set (higher is better) for $4$ datasets \citep{UCIDatasets} from UCI machine learning and BSDS300 \citep{BSDS3000}, all preprocessed by \citet{MAF2017}. We report average ($\pm \text{std}$) across $3$ independently trained models. }
\begin{tabular}{lccccc}
\toprule
\multirow{1}{*}{\textbf{Model}} & POWER & GAS & HEPMASS & MINIBOONE & BSDS300 \\
\midrule
Real NVP & $0.17${\tiny$\pm .01$} & $8.33${\tiny$\pm .14$} & $-18.71${\tiny$\pm .02$} & $-13.55${\tiny$\pm .49$} & $153.28${\tiny$\pm 1.78$} \\
Glow & $0.17${\tiny$\pm .01$} & $8.15${\tiny$\pm .40$} & $-18.92${\tiny$\pm .08$} & $-11.35${\tiny$\pm .07$} & $155.07${\tiny$\pm .03$} \\
MADE MoG & $0.40${\tiny$\pm .01$} & $8.47${\tiny$\pm .02$} & $-15.15${\tiny$\pm .02$} & $-12.27${\tiny$\pm .47$} & $153.71${\tiny$\pm .28$} \\
MAF-affine & $0.24${\tiny$\pm .01$} & $10.08${\tiny$\pm .02$} & $-17.73${\tiny$\pm .02$} & $-12.24${\tiny$\pm .45$} & $155.69${\tiny$\pm .28$} \\
MAF-affine MoG & $0.30${\tiny$\pm .01$} & $9.59${\tiny$\pm .02$} & $-17.39${\tiny$\pm .02$} & $-11.68${\tiny$\pm .44$} & $156.36${\tiny$\pm .28$} \\
FFJORD & $0.46${\tiny$\pm .01$} & $8.59${\tiny$\pm .12$} & $-14.92${\tiny$\pm .08$} & $-10.43${\tiny$\pm .04$} & $157.40${\tiny$\pm .19$} \\
CP-Flow & $0.52${\tiny$\pm .01$} & $10.36${\tiny$\pm .03$} & $-16.93${\tiny$\pm .08$} & $-10.58${\tiny$\pm .07$} & $154.99${\tiny$\pm .08$} \\
NAF & $\mathbf{0.62}${\tiny$\mathbf{\pm .01}$} & $11.96${\tiny$\pm .33$} & $-15.09${\tiny$\pm .40$} & $\mathbf{-8.86}${\tiny$\mathbf{\pm .15}$} & $157.43${\tiny$\pm .30$} \\
BNAF & $0.61${\tiny$\pm .01$} & $12.06${\tiny$\pm .09$} & $-14.71${\tiny$\pm .38$} & $-8.95${\tiny$\pm .07$} & $157.36${\tiny$\pm .03$} \\
TAN & $0.60${\tiny$\pm .01$} & $12.06${\tiny$\pm .02$} & $\mathbf{-13.78}${\tiny$\mathbf{\pm .02}$} & $-11.01${\tiny$\pm .48$} & $\mathbf{159.80}${\tiny$\mathbf{\pm .07}$} \\
\midrule
ELF-AR (Ours) & $\mathbf{0.62}${\tiny$\mathbf{\pm .01}$} & $\mathbf{12.55}${\tiny$\mathbf{\pm .03}$} & $-14.05${\tiny$\pm .03$} & $-9.60${\tiny$\pm .04$} & $157.81${\tiny$\pm .09$} \\
\bottomrule
\end{tabular}
\label{tab:tabular_density}
\end{table*}

For our tabular experiments, we use the datasets preprocessed by \citet{MAF2017}\footnote{CC by 4.0}. We compare ELF-AR against Real NVP \citep{Dinh2016NVP}, Glow \citep{Kingma2018Glow}, MADE \citep{MADE2015}, MAF \citep{MAF2017}, NAF \citep{NAF2018}, BNAF \citep{BNAF2019}, FFJORD \citep{FFJORD2018}, CP-Flow \citep{huang2021cpflow}, and TAN \citep{Oliva2018TAN}.

In \myref{Table}{tab:tabular_density}, we report average log-likelihoods evaluated on held-out test sets, for the best hyperparameters found via grid search. The search was focused on weight decay and the width of the hypernetwork and of ELF. In all datasets, ELF-AR performs better than Real NVP, Glow, MADE, MAF, FFJORD, and CP-Flow, and it performs comparably or better to NAF, BNAF, and TAN. 
More details on architecture and training details are in \myref{Appendix}{sec:tabular_architecture}.

\subsection{Image Data}\label{sec:image_data}

\begin{table*}[!t]
\caption{Bits-per-dim (\myref{Appendix}{sec:bpd}) estimates of standard image benchmarks (the lower the better). For variationally dequantized flows, we give the bits per dim for the uniformly dequantized versions in parenthesis.}\label{tbl:image_sota}
\centering
\begin{tabular}{lllll}
\toprule
\multirow{1}{*}{\textbf{Model}}  & {MNIST} & {CIFAR-10}& {Imagenet 32} & {Imagenet 64}
\\
\midrule
\textbf{Uniform Dequantization Flows} \\
RealNVP \citep{Dinh2016NVP}  & $1.06$ & $3.49$ & $4.28$ & $3.98$ \\ 
Glow \citep{Kingma2018Glow}  & $1.05$ & $3.35$ & $4.09$ & $3.81$ \\ 
FFJORD \citep{FFJORD2018} & $0.99$ & $3.40$ & \tnan & \tnan \\ 
MALI \citep{zhuang2021mali} & $0.87$ & $3.26$ & \tnan & \tnan \\ 
Residual Flow \citep{Chen2019ResidualFlows} & $0.97$ & $3.28$ & $4.01$ & $3.76$ \\ 
\midrule
\textbf{Variational Dequantization Flows} \\
Flow++ \citep{Flow++2019} & \tnan & $3.08 $\small{\ (3.29)}$ $ & $3.86$ & $3.69$ \\ 
MaCow \citep{MaCow2019} & \tnan & $3.16$ & \tnan & $3.69$ \\ 
ANF \citep{ANF2020} & $0.93$ & $3.05$ & $3.92$ & $3.66$ \\ 
VFlow \citep{VFlow2020} & \tnan & $\mathbf{2.98}$ & $\mathbf{3.83}$ & $3.66$ \\ 
\midrule
ELF-AR (Ours) & $\mathbf{0.85} $\small{\ (0.87)}$ $ & $3.06 $\small{\ (3.24)}$ $ & $3.92$ & $\mathbf{3.63}$ \\ 
\bottomrule
\end{tabular}
\end{table*}

We compare ELF-AR with variational dequantization against other state of the art flows. In \myref{Table}{tbl:image_sota}, we can see that we are able to achieve state of the art performance on MNIST \citep{lecun2010mnist} and Imagenet 64 \citep{chrabaszcz2017downsampled} while having competitive performance on the other image datasets. One possible direction of improvement to close the gap between ELF-AR and other state of the art flows on CIFAR-10 \citep{Krizhevsky09CIFAR10} and Imagenet 32 \citep{chrabaszcz2017downsampled} might be the use of attention \citep{vaswani2017attention}, as was done in Flow++ \citep{Flow++2019} and VFlow \citep{VFlow2020}.

More details on architecture and training details are in \myref{Appendix}{sec:architecture}.

\subsubsection{Sampling Efficiency}\label{sec:image_sample_eff}

The main benefit of ELF-AR over other autoregressive flows is the ability to use a more efficient sampling algorithm. For our best CIFAR-10 model, the average number of function evaluations required was 70; however, this count is not representative of the fact that different layers in the network take more function evaluations than others to invert. The runtime for sampling 100 images is 70s and the runtime, if we were to run the hypernetwork 3072 times (the dimensionality of CIFAR-10), would be 2600s. However, 3072 function evaluations is still not fully representative of the sampling time since the function introduced in \myref{Section}{sec:one_d_dist_elf} does not have a closed form inverse (similar to NAFs). Thus, the speedup of approximately 37 times is a lower bound of the true speedup. Though there are methods to use iterative methods for sampling from an autoregressive model \citep{song21aaccelerating,wiggers20arsampling}, these methods do not easily lend themselves to models where the univariate function (e.g. ELF) does not have a closed-form inverse.

\subsubsection{Parameter Efficiency}\label{sec:image_param_eff}

\begin{table*}[!tb]
\caption{Comparison of efficiency among uniform dequantized discrete flow-models specifically mentioning promoting fewer parameters. $^\dagger$Taken from~\citet{huang2021cpflow}. $^\ddagger$Obtained from official open source code.}\label{tbl:img_eff}
\centering
\begin{tabular}{lcccc}
\toprule
     & \multicolumn{2}{c}{MNIST}
     & \multicolumn{2}{c}{CIFAR-10}
\\
 \cmidrule(lr){2-3}
 \cmidrule(lr){4-5}
 & {Bits/dim} & {Param. Count}
 & {Bits/dim} & {Param. Count}
\\
\midrule
Glow \citep{Kingma2018Glow} & $1.06$  & \na & $3.35$ & 44.0M$^\dagger$  \\ 
RQ-NSF \citep{NSF2019} & \tnan  & \na & $3.38$ & 11.8M$^\dagger$  \\ 
Residual Flow \citep{Chen2019ResidualFlows} & $0.97$  & 16.6M$^\ddagger$ & $3.28$ & 25.2M$^\ddagger$  \\ 
CP-Flow \citep{huang2021cpflow} & $1.02$  & 2.9M$^\dagger$ & $3.40$ & 1.9M$^\dagger$  \\ 
\midrule
% Exact lipschitz ablation & $ $  & 1.9M & $ $ & 1.9M  \\ 
ELF-AR (Ours) & $\mathbf{0.92}$  & 1.9M & $\mathbf{3.33}$ & 1.9M  \\ 
% AR Residual Flow & -  & - & $3.27$ & 31.9M  \\ 
\bottomrule
\end{tabular}
\end{table*}

For evaluating the efficiency of ELF-AR, we measured the performance on MNIST and CIFAR-10 with approximately 2 million parameters without variational dequantization, with only 24 hours of training on a single Tesla V100-SXM2-32GB GPU. In \myref{Table}{tbl:img_eff}, we have the performance of ELF-AR alongside other models with their corresponding parameter counts.  With fewer parameters than any other model in the table, we were able to outperform all other flows except Residual Flows on CIFAR-10 and achieve state of the art among discrete flow models on MNIST with both a large margin in bits-per-dim and number of parameters.
Results for variationally dequantized models can be found in \myref{Appendix}{sec:vflow_comparison}.

\section{Limitations}\label{sec:limitations}

Though we were able to achieve strong performance, the method does seem to find bad local minimas when the data is highly multimodal. Specifically in the mixture of Eight Gaussians, when the clusters are smaller, there is a higher tendency to find bad solutions. We leave it to future work to find ways to improve this component which might help improve the overall performance in larger scale datasets.

Another limitation is that, due to the use of hypernetworks, modeling higher dimensional tabular data can lead to large parameter counts. Further, from \myref{Figure}{fig:elf_ar_eightgauss_contour}, it can be seen that ELF-AR does not seem to have the same inductive bias towards simple solutions as CP-Flow \citep{huang2021cpflow} and Residual Flows \citep{Chen2019ResidualFlows,QuARFlows2020}.

\section{Conclusion and Future Work}

In this work, we have introduced a new normalizing flow that allows for more efficient training, evaluation, and sampling compared to previous flows while achieving state of the art on multiple large-scale datasets. Specifically, we introduce a simple one-dimensional one-layer network that has closed form Lipschitz constants and introduce a new Exact-Lipschitz Flow (ELF) that combines the ease of sampling from residual flows with the strong performance of autoregressive flows. 

Though our model is provably a universal approximator, there is still a gap in performance between the state of the art flow for CIFAR-10 and our flow; future work entails exploring if using attention would further close the gap in performance.

As the flow we have introduced is a universal approximator that allows for more efficient sampling than autoregressive models, future research can further explore if the gap in performance of flows and autoregressive models such as PixelCNN++ \citep{PixelCNN++} is from dequantizating a discrete task, and what, if any, are the benefits of stacking autoregressive flows.

\section*{Acknowledgements}
We thank Wenjing Ge (Bloomberg Quant Research) for comments on the paper and reviewing the proofs.

\newpage
\bibliographystyle{chicago}
\bibliography{./biblio}

\begin{thebibliography}{}

\bibitem[\protect\citeauthoryear{Anil, Lucas, and Grosse}{Anil
  et~al.}{2019}]{Anil2019}
Anil, C., J.~Lucas, and R.~Grosse (2019, 09--15 Jun).
\newblock Sorting out {L}ipschitz function approximation.
\newblock In K.~Chaudhuri and R.~Salakhutdinov (Eds.), {\em Proceedings of the
  36th International Conference on Machine Learning}, Volume~97 of {\em
  Proceedings of Machine Learning Research}, pp.\  291--301. PMLR.

\bibitem[\protect\citeauthoryear{Arjovsky, Chintala, and Bottou}{Arjovsky
  et~al.}{2017}]{wgan2017}
Arjovsky, M., S.~Chintala, and L.~Bottou (2017, 06--11 Aug).
\newblock {W}asserstein generative adversarial networks.
\newblock In D.~Precup and Y.~W. Teh (Eds.), {\em Proceedings of the 34th
  International Conference on Machine Learning}, Volume~70 of {\em Proceedings
  of Machine Learning Research}, International Convention Centre, Sydney,
  Australia, pp.\  214--223. PMLR.

\bibitem[\protect\citeauthoryear{Behrmann, Grathwohl, Chen, Duvenaud, and
  Jacobsen}{Behrmann et~al.}{2019}]{Behrmann2019}
Behrmann, J., W.~Grathwohl, R.~T.~Q. Chen, D.~Duvenaud, and J.-H. Jacobsen
  (2019, 09--15 Jun).
\newblock Invertible residual networks.
\newblock In K.~Chaudhuri and R.~Salakhutdinov (Eds.), {\em Proceedings of the
  36th International Conference on Machine Learning}, Volume~97 of {\em
  Proceedings of Machine Learning Research}, Long Beach, California, USA, pp.\
  573--582. PMLR.

\bibitem[\protect\citeauthoryear{{Cao}, {Aziz}, and {Titov}}{{Cao}
  et~al.}{2019}]{BNAF2019}
{Cao}, N.~D., W.~{Aziz}, and I.~{Titov} (2019).
\newblock Block neural autoregressive flow.
\newblock In {\em Conference on Uncertainty in Artificial Intelligence},
  Volume~3, pp.\  1723--1735. UAI.

\bibitem[\protect\citeauthoryear{Chen, Lu, Chenli, Zhu, and Tian}{Chen
  et~al.}{2020}]{VFlow2020}
Chen, J., C.~Lu, B.~Chenli, J.~Zhu, and T.~Tian (2020, 13--18 Jul).
\newblock {VF}low: More expressive generative flows with variational data
  augmentation.
\newblock In H.~D. III and A.~Singh (Eds.), {\em Proceedings of the 37th
  International Conference on Machine Learning}, Volume 119 of {\em Proceedings
  of Machine Learning Research}, pp.\  1660--1669. PMLR.

\bibitem[\protect\citeauthoryear{Chen, Behrmann, Duvenaud, and Jacobsen}{Chen
  et~al.}{2019}]{Chen2019ResidualFlows}
Chen, R. T.~Q., J.~Behrmann, D.~K. Duvenaud, and J.-H. Jacobsen (2019).
\newblock Residual flows for invertible generative modeling.
\newblock In H.~Wallach, H.~Larochelle, A.~Beygelzimer, F.~d\' Alch\'{e}-Buc,
  E.~Fox, and R.~Garnett (Eds.), {\em Advances in Neural Information Processing
  Systems 32}, pp.\  9916--9926. Curran Associates, Inc.

\bibitem[\protect\citeauthoryear{Chen, Rubanova, Bettencourt, and
  Duvenaud}{Chen et~al.}{2018}]{NODE2018}
Chen, R. T.~Q., Y.~Rubanova, J.~Bettencourt, and D.~K. Duvenaud (2018).
\newblock Neural ordinary differential equations.
\newblock In S.~Bengio, H.~Wallach, H.~Larochelle, K.~Grauman, N.~Cesa-Bianchi,
  and R.~Garnett (Eds.), {\em Advances in Neural Information Processing Systems
  31}, pp.\  6571--6583. Curran Associates, Inc.

\bibitem[\protect\citeauthoryear{Chrabaszcz, Loshchilov, and Hutter}{Chrabaszcz
  et~al.}{2017}]{chrabaszcz2017downsampled}
Chrabaszcz, P., I.~Loshchilov, and F.~Hutter (2017).
\newblock A downsampled variant of imagenet as an alternative to the cifar
  datasets.
\newblock {\em arXiv preprint arXiv:1707.08819\/}.

\bibitem[\protect\citeauthoryear{{Clevert}, {Unterthiner}, and
  {Hochreiter}}{{Clevert} et~al.}{2015}]{ELU2015}
{Clevert}, D.-A., T.~{Unterthiner}, and S.~{Hochreiter} (2015).
\newblock {Fast and Accurate Deep Network Learning by Exponential Linear Units
  (ELUs)}.
\newblock In {\em International Conference on Learning Representations}.

\bibitem[\protect\citeauthoryear{Cybenko}{Cybenko}{1989}]{cybenko1989approximation}
Cybenko, G. (1989).
\newblock Approximation by superpositions of a sigmoidal function.
\newblock {\em Mathematics of control, signals and systems\/}~{\em 2\/}(4),
  303--314.

\bibitem[\protect\citeauthoryear{{Dinh}, {Krueger}, and {Bengio}}{{Dinh}
  et~al.}{2014}]{dinh2014nice}
{Dinh}, L., D.~{Krueger}, and Y.~{Bengio} (2014, October).
\newblock {NICE: Non-linear Independent Components Estimation}.
\newblock {\em arXiv e-prints\/}, arXiv:1410.8516.

\bibitem[\protect\citeauthoryear{{Dinh}, {Sohl-Dickstein}, and {Bengio}}{{Dinh}
  et~al.}{2017}]{Dinh2016NVP}
{Dinh}, L., J.~{Sohl-Dickstein}, and S.~{Bengio} (2017).
\newblock {Density estimation using Real NVP}.
\newblock {\em International Conference on Learning Representations\/}.

\bibitem[\protect\citeauthoryear{Dua and Graff}{Dua and
  Graff}{2017}]{UCIDatasets}
Dua, D. and C.~Graff (2017).
\newblock {UCI} machine learning repository.

\bibitem[\protect\citeauthoryear{Durkan, Bekasov, Murray, and
  Papamakarios}{Durkan et~al.}{2019}]{NSF2019}
Durkan, C., A.~Bekasov, I.~Murray, and G.~Papamakarios (2019).
\newblock Neural spline flows.
\newblock In H.~Wallach, H.~Larochelle, A.~Beygelzimer, F.~d\textquotesingle
  Alch\'{e}-Buc, E.~Fox, and R.~Garnett (Eds.), {\em Advances in Neural
  Information Processing Systems}, Volume~32. Curran Associates, Inc.

\bibitem[\protect\citeauthoryear{Fazlyab, Robey, Hassani, Morari, and
  Pappas}{Fazlyab et~al.}{2019}]{Fazlyab2019Lipschitz}
Fazlyab, M., A.~Robey, H.~Hassani, M.~Morari, and G.~Pappas (2019).
\newblock Efficient and accurate estimation of lipschitz constants for deep
  neural networks.
\newblock In H.~Wallach, H.~Larochelle, A.~Beygelzimer, F.~d\textquotesingle
  Alch\'{e}-Buc, E.~Fox, and R.~Garnett (Eds.), {\em Advances in Neural
  Information Processing Systems}, Volume~32. Curran Associates, Inc.

\bibitem[\protect\citeauthoryear{Germain, Gregor, Murray, and
  Larochelle}{Germain et~al.}{2015}]{MADE2015}
Germain, M., K.~Gregor, I.~Murray, and H.~Larochelle (2015, 07--09 Jul).
\newblock Made: Masked autoencoder for distribution estimation.
\newblock In F.~Bach and D.~Blei (Eds.), {\em Proceedings of the 32nd
  International Conference on Machine Learning}, Volume~37 of {\em Proceedings
  of Machine Learning Research}, Lille, France, pp.\  881--889. PMLR.

\bibitem[\protect\citeauthoryear{{Gopal}}{{Gopal}}{2020}]{QuARFlows2020}
{Gopal}, A. (2020, September).
\newblock {Quasi-Autoregressive Residual (QuAR) Flows}.
\newblock {\em arXiv e-prints\/}, arXiv:2009.07419.

\bibitem[\protect\citeauthoryear{Grathwohl, Chen, Bettencourt, and
  Duvenaud}{Grathwohl et~al.}{2019}]{FFJORD2018}
Grathwohl, W., R.~T.~Q. Chen, J.~Bettencourt, and D.~Duvenaud (2019).
\newblock Scalable reversible generative models with free-form continuous
  dynamics.
\newblock In {\em International Conference on Learning Representations}.

\bibitem[\protect\citeauthoryear{Gulrajani, Ahmed, Arjovsky, Dumoulin, and
  Courville}{Gulrajani et~al.}{2017}]{Gulrajani2017}
Gulrajani, I., F.~Ahmed, M.~Arjovsky, V.~Dumoulin, and A.~C. Courville (2017).
\newblock Improved training of wasserstein gans.
\newblock In I.~Guyon, U.~V. Luxburg, S.~Bengio, H.~Wallach, R.~Fergus,
  S.~Vishwanathan, and R.~Garnett (Eds.), {\em Advances in Neural Information
  Processing Systems}, Volume~30. Curran Associates, Inc.

\bibitem[\protect\citeauthoryear{Ha, Dai, and Le}{Ha
  et~al.}{2016}]{ha2016hypernetworks}
Ha, D., A.~Dai, and Q.~V. Le (2016).
\newblock Hypernetworks.
\newblock {\em arXiv preprint arXiv:1609.09106\/}.

\bibitem[\protect\citeauthoryear{Ho, Chen, Srinivas, Duan, and Abbeel}{Ho
  et~al.}{2019}]{Flow++2019}
Ho, J., X.~Chen, A.~Srinivas, Y.~Duan, and P.~Abbeel (2019, 09--15 Jun).
\newblock Flow++: Improving flow-based generative models with variational
  dequantization and architecture design.
\newblock In K.~Chaudhuri and R.~Salakhutdinov (Eds.), {\em Proceedings of the
  36th International Conference on Machine Learning}, Volume~97 of {\em
  Proceedings of Machine Learning Research}, Long Beach, California, USA, pp.\
  2722--2730. PMLR.

\bibitem[\protect\citeauthoryear{Huang, Chen, Tsirigotis, and Courville}{Huang
  et~al.}{2021}]{huang2021cpflow}
Huang, C.-W., R.~T.~Q. Chen, C.~Tsirigotis, and A.~Courville (2021).
\newblock Convex potential flows: Universal probability distributions with
  optimal transport and convex optimization.
\newblock In {\em International Conference on Learning Representations}.

\bibitem[\protect\citeauthoryear{{Huang}, {Dinh}, and {Courville}}{{Huang}
  et~al.}{2020}]{ANF2020}
{Huang}, C.-W., L.~{Dinh}, and A.~{Courville} (2020, February).
\newblock {Augmented Normalizing Flows: Bridging the Gap Between Generative
  Flows and Latent Variable Models}.
\newblock {\em arXiv e-prints\/}, arXiv:2002.07101.

\bibitem[\protect\citeauthoryear{Huang, Krueger, Lacoste, and Courville}{Huang
  et~al.}{2018}]{NAF2018}
Huang, C.-W., D.~Krueger, A.~Lacoste, and A.~Courville (2018, 10--15 Jul).
\newblock Neural autoregressive flows.
\newblock In J.~Dy and A.~Krause (Eds.), {\em Proceedings of the 35th
  International Conference on Machine Learning}, Volume~80 of {\em Proceedings
  of Machine Learning Research}, Stockholmsmässan, Stockholm Sweden, pp.\
  2078--2087. PMLR.

\bibitem[\protect\citeauthoryear{Huster, Chiang, and Chadha}{Huster
  et~al.}{2018}]{huster2018limitations}
Huster, T., C.-Y.~J. Chiang, and R.~Chadha (2018).
\newblock Limitations of the lipschitz constant as a defense against
  adversarial examples.
\newblock In {\em Joint European Conference on Machine Learning and Knowledge
  Discovery in Databases}, pp.\  16--29. Springer.

\bibitem[\protect\citeauthoryear{Jaini, Selby, and Yu}{Jaini
  et~al.}{2019}]{SOSFlow2019}
Jaini, P., K.~A. Selby, and Y.~Yu (2019, 09--15 Jun).
\newblock Sum-of-squares polynomial flow.
\newblock In K.~Chaudhuri and R.~Salakhutdinov (Eds.), {\em Proceedings of the
  36th International Conference on Machine Learning}, Volume~97 of {\em
  Proceedings of Machine Learning Research}, pp.\  3009--3018. PMLR.

\bibitem[\protect\citeauthoryear{{Kahn}}{{Kahn}}{1955}]{RussianRoulette}
{Kahn}, H. (1955).
\newblock {\em Use of Different Monte Carlo Sampling Techniques}.
\newblock RAND Corporation.

\bibitem[\protect\citeauthoryear{{Kingma} and {Ba}}{{Kingma} and
  {Ba}}{2015}]{Adam2015}
{Kingma}, D.~P. and J.~{Ba} (2015).
\newblock Adam: A method for stochastic optimization.
\newblock In {\em International Conference on Machine Learning}.

\bibitem[\protect\citeauthoryear{Kingma and Dhariwal}{Kingma and
  Dhariwal}{2018}]{Kingma2018Glow}
Kingma, D.~P. and P.~Dhariwal (2018).
\newblock Glow: Generative flow with invertible 1x1 convolutions.
\newblock In S.~Bengio, H.~Wallach, H.~Larochelle, K.~Grauman, N.~Cesa-Bianchi,
  and R.~Garnett (Eds.), {\em Advances in Neural Information Processing Systems
  31}, pp.\  10215--10224. Curran Associates, Inc.

\bibitem[\protect\citeauthoryear{Kingma, Salimans, Jozefowicz, Chen, Sutskever,
  and Welling}{Kingma et~al.}{2016}]{IAF2016}
Kingma, D.~P., T.~Salimans, R.~Jozefowicz, X.~Chen, I.~Sutskever, and
  M.~Welling (2016).
\newblock Improved variational inference with inverse autoregressive flow.
\newblock In D.~D. Lee, M.~Sugiyama, U.~V. Luxburg, I.~Guyon, and R.~Garnett
  (Eds.), {\em Advances in Neural Information Processing Systems 29}, pp.\
  4743--4751. Curran Associates, Inc.

\bibitem[\protect\citeauthoryear{Kong and Chaudhuri}{Kong and
  Chaudhuri}{2020}]{Zhifeng2019}
Kong, Z. and K.~Chaudhuri (2020, 26--28 Aug).
\newblock The expressive power of a class of normalizing flow models.
\newblock In S.~Chiappa and R.~Calandra (Eds.), {\em Proceedings of the Twenty
  Third International Conference on Artificial Intelligence and Statistics},
  Volume 108 of {\em Proceedings of Machine Learning Research}, pp.\
  3599--3609. PMLR.

\bibitem[\protect\citeauthoryear{Krizhevsky and Hinton}{Krizhevsky and
  Hinton}{2009}]{Krizhevsky09CIFAR10}
Krizhevsky, A. and G.~Hinton (2009).
\newblock Learning multiple layers of features from tiny images.
\newblock {\em Master's thesis, Department of Computer Science, University of
  Toronto\/}.

\bibitem[\protect\citeauthoryear{LeCun and Cortes}{LeCun and
  Cortes}{2010}]{lecun2010mnist}
LeCun, Y. and C.~Cortes (2010).
\newblock {MNIST} handwritten digit database.

\bibitem[\protect\citeauthoryear{Ma, Kong, Zhang, and Hovy}{Ma
  et~al.}{2019}]{MaCow2019}
Ma, X., X.~Kong, S.~Zhang, and E.~Hovy (2019).
\newblock Macow: Masked convolutional generative flow.
\newblock In H.~Wallach, H.~Larochelle, A.~Beygelzimer, F.~d\textquotesingle
  Alch\'{e}-Buc, E.~Fox, and R.~Garnett (Eds.), {\em Advances in Neural
  Information Processing Systems}, Volume~32. Curran Associates, Inc.

\bibitem[\protect\citeauthoryear{Martin, Fowlkes, Tal, and Malik}{Martin
  et~al.}{2001}]{BSDS3000}
Martin, D., C.~Fowlkes, D.~Tal, and J.~Malik (2001, July).
\newblock A database of human segmented natural images and its application to
  evaluating segmentation algorithms and measuring ecological statistics.
\newblock In {\em Proc. 8th Int'l Conf. Computer Vision}, Volume~2, pp.\
  416--423.

\bibitem[\protect\citeauthoryear{Miyato, Kataoka, Koyama, and Yoshida}{Miyato
  et~al.}{2018}]{SNGAN2018}
Miyato, T., T.~Kataoka, M.~Koyama, and Y.~Yoshida (2018).
\newblock Spectral normalization for generative adversarial networks.
\newblock In {\em International Conference on Learning Representations}.

\bibitem[\protect\citeauthoryear{Oliva, Dubey, Zaheer, Poczos, Salakhutdinov,
  Xing, and Schneider}{Oliva et~al.}{2018}]{Oliva2018TAN}
Oliva, J., A.~Dubey, M.~Zaheer, B.~Poczos, R.~Salakhutdinov, E.~Xing, and
  J.~Schneider (2018, 10--15 Jul).
\newblock Transformation autoregressive networks.
\newblock In J.~Dy and A.~Krause (Eds.), {\em Proceedings of the 35th
  International Conference on Machine Learning}, Volume~80 of {\em Proceedings
  of Machine Learning Research}, pp.\  3898--3907. PMLR.

\bibitem[\protect\citeauthoryear{Oord, Kalchbrenner, and Kavukcuoglu}{Oord
  et~al.}{2016}]{Oord2016PixelRNN}
Oord, A.~V., N.~Kalchbrenner, and K.~Kavukcuoglu (2016, 20--22 Jun).
\newblock Pixel recurrent neural networks.
\newblock In M.~F. Balcan and K.~Q. Weinberger (Eds.), {\em Proceedings of The
  33rd International Conference on Machine Learning}, Volume~48 of {\em
  Proceedings of Machine Learning Research}, New York, New York, USA, pp.\
  1747--1756. PMLR.

\bibitem[\protect\citeauthoryear{Papamakarios, Pavlakou, and
  Murray}{Papamakarios et~al.}{2017}]{MAF2017}
Papamakarios, G., T.~Pavlakou, and I.~Murray (2017).
\newblock Masked autoregressive flow for density estimation.
\newblock In I.~Guyon, U.~V. Luxburg, S.~Bengio, H.~Wallach, R.~Fergus,
  S.~Vishwanathan, and R.~Garnett (Eds.), {\em Advances in Neural Information
  Processing Systems 30}, pp.\  2338--2347. Curran Associates, Inc.

\bibitem[\protect\citeauthoryear{Polyak and Juditsky}{Polyak and
  Juditsky}{1992}]{Polyak1992}
Polyak, B. and A.~Juditsky (1992, 07).
\newblock Acceleration of stochastic approximation by averaging.
\newblock {\em SIAM Journal on Control and Optimization\/}~{\em 30}, 838--855.

\bibitem[\protect\citeauthoryear{Rezende and Mohamed}{Rezende and
  Mohamed}{2015}]{Rezende2015NF}
Rezende, D. and S.~Mohamed (2015, 07--09 Jul).
\newblock Variational inference with normalizing flows.
\newblock In F.~Bach and D.~Blei (Eds.), {\em Proceedings of the 32nd
  International Conference on Machine Learning}, Volume~37 of {\em Proceedings
  of Machine Learning Research}, Lille, France, pp.\  1530--1538. PMLR.

\bibitem[\protect\citeauthoryear{{Salimans}, {Karpathy}, {Chen}, and
  {Kingma}}{{Salimans} et~al.}{2017}]{PixelCNN++}
{Salimans}, T., A.~{Karpathy}, X.~{Chen}, and D.~P. {Kingma} (2017).
\newblock Pixelcnn++: Improving the pixelcnn with discretized logistic mixture
  likelihood and other modifications.
\newblock In {\em International Conference on Learning Representations}.

\bibitem[\protect\citeauthoryear{Skilling}{Skilling}{1989}]{skilling1989eigenvalues}
Skilling, J. (1989).
\newblock The eigenvalues of mega-dimensional matrices.
\newblock In {\em Maximum Entropy and Bayesian Methods}, pp.\  455--466.
  Springer.

\bibitem[\protect\citeauthoryear{Song, Meng, Liao, and Ermon}{Song
  et~al.}{2021}]{song21aaccelerating}
Song, Y., C.~Meng, R.~Liao, and S.~Ermon (2021, 18--24 Jul).
\newblock Accelerating feedforward computation via parallel nonlinear equation
  solving.
\newblock In M.~Meila and T.~Zhang (Eds.), {\em Proceedings of the 38th
  International Conference on Machine Learning}, Volume 139 of {\em Proceedings
  of Machine Learning Research}, pp.\  9791--9800. PMLR.

\bibitem[\protect\citeauthoryear{Szegedy, Zaremba, Sutskever, Bruna, Erhan,
  Goodfellow, and Fergus}{Szegedy et~al.}{2014}]{Szegedy2014}
Szegedy, C., W.~Zaremba, I.~Sutskever, J.~Bruna, D.~Erhan, I.~Goodfellow, and
  R.~Fergus (2014).
\newblock Intriguing properties of neural networks.
\newblock In {\em International Conference on Learning Representations}.

\bibitem[\protect\citeauthoryear{Teshima, Ishikawa, Tojo, Oono, Ikeda, and
  Sugiyama}{Teshima et~al.}{2020}]{CFINN2020}
Teshima, T., I.~Ishikawa, K.~Tojo, K.~Oono, M.~Ikeda, and M.~Sugiyama (2020).
\newblock Coupling-based invertible neural networks are universal
  diffeomorphism approximators.
\newblock In H.~Larochelle, M.~Ranzato, R.~Hadsell, M.~F. Balcan, and H.~Lin
  (Eds.), {\em Advances in Neural Information Processing Systems}, Volume~33,
  pp.\  3362--3373. Curran Associates, Inc.

\bibitem[\protect\citeauthoryear{van~den Berg, Hasenclever, Tomczak, and
  Welling}{van~den Berg et~al.}{2018}]{Sylvester2018}
van~den Berg, R., L.~Hasenclever, J.~M. Tomczak, and M.~Welling (2018).
\newblock Sylvester normalizing flows for variational inference.
\newblock In A.~Globerson and R.~Silva (Eds.), {\em Proceedings of the
  Thirty-Fourth Conference on Uncertainty in Artificial Intelligence, {UAI}
  2018, Monterey, California, USA, August 6-10, 2018}, pp.\  393--402. {AUAI}
  Press.

\bibitem[\protect\citeauthoryear{Vaswani, Shazeer, Parmar, Uszkoreit, Jones,
  Gomez, Kaiser, and Polosukhin}{Vaswani et~al.}{2017}]{vaswani2017attention}
Vaswani, A., N.~Shazeer, N.~Parmar, J.~Uszkoreit, L.~Jones, A.~N. Gomez, L.~u.
  Kaiser, and I.~Polosukhin (2017).
\newblock Attention is all you need.
\newblock In I.~Guyon, U.~V. Luxburg, S.~Bengio, H.~Wallach, R.~Fergus,
  S.~Vishwanathan, and R.~Garnett (Eds.), {\em Advances in Neural Information
  Processing Systems}, Volume~30, pp.\  5998--6008. Curran Associates, Inc.

\bibitem[\protect\citeauthoryear{Virmaux and Scaman}{Virmaux and
  Scaman}{2018}]{Virmaux2018}
Virmaux, A. and K.~Scaman (2018).
\newblock Lipschitz regularity of deep neural networks: analysis and efficient
  estimation.
\newblock In S.~Bengio, H.~Wallach, H.~Larochelle, K.~Grauman, N.~Cesa-Bianchi,
  and R.~Garnett (Eds.), {\em Advances in Neural Information Processing
  Systems}, Volume~31. Curran Associates, Inc.

\bibitem[\protect\citeauthoryear{Wiggers and Hoogeboom}{Wiggers and
  Hoogeboom}{2020}]{wiggers20arsampling}
Wiggers, A. and E.~Hoogeboom (2020, 13--18 Jul).
\newblock Predictive sampling with forecasting autoregressive models.
\newblock In H.~D. III and A.~Singh (Eds.), {\em Proceedings of the 37th
  International Conference on Machine Learning}, Volume 119 of {\em Proceedings
  of Machine Learning Research}, pp.\  10260--10269. PMLR.

\bibitem[\protect\citeauthoryear{Zhuang, Dvornek, sekhar tatikonda, and
  s~Duncan}{Zhuang et~al.}{2021}]{zhuang2021mali}
Zhuang, J., N.~C. Dvornek, sekhar tatikonda, and J.~s~Duncan (2021).
\newblock {MALI}: A memory efficient and reverse accurate integrator for neural
  {ODE}s.
\newblock In {\em International Conference on Learning Representations}.

\end{thebibliography}
\newpage

\appendix

\section{Universality}\label{sec:universality_proof}

\begin{definition}\label{defn:lipschitz}
A function $f:\RR^d\rightarrow\RR^d$ is called \textit{Lipschitz continuous} if there exists a constant $L$, such that
$$ \norm{f(x_1) - f(x_2)}_2 \leq L\norm{x_1 - x_2}_2, \forall x_1, x_2 \in \RR^d $$
$f$ is called an \textit{$L$-Lipschitz function}. The smallest $L$ that satisfies the inequality is called the \textit{Lipschitz constant} of $f$, denoted as $\text{Lip}(f)$.
\end{definition}

\begin{definition}
We define $\mathcal{F}_{\sigma, H}$ as the family of functions $f:\RR^1\rightarrow\RR^1$ represented by a single layer neural network with a quadratic-piecewise 1-Lipschitz activation function $\sigma$ and hidden size $H$. $\mathcal{F}_{\text{FELU}, H}$ and $\mathcal{F}_{\text{ReLU}, H}$ are where the activation function is FELU and ReLU, respectively.
\end{definition}

\begin{definition}
We define $\mathcal{L}_{\sigma, H}$ as the family of functions where $g \in \mathcal{L}_{\sigma, H}$ if there exists a function $f \in \mathcal{F}_{\sigma, H}$ such that $g(x) = f(x) $ for all $x \in \RR$ if $\text{Lip}(f) \leq 1$ or $g(x) = f(x) / \text{Lip}(f) $ for all $x \in \RR$. 
\end{definition}

\begin{definition}($L^p$-/sup-universality \citep{CFINN2020})
Let $\mathcal{M}$ be a model which is a set of measurable mappings from $\RR^m\rightarrow\RR^n$. Let $p\in[1,\infty)$ and let $\mathcal{F}$ be a set of measurable mappings $f:U_f\rightarrow\RR^n$ where $U_f$ is a measurable subset of $\RR^m$ which may depend on $f$. We say that $\mathcal{M}$ is an \textit{$L^p$-approximator} for $\mathcal{F}$ if for any $f\in\mathcal{F}$, any $\epsilon > 0$, and any compact subset $K \subset U_f$, there exists a $g \in \mathcal{M}$ such that 
$$\left(\int_K \norm{f(x) - g(x)}_2^p \right)^{1/p} < \epsilon$$

We say that $\mathcal{M}$ is a \text{sup-approximator} for $\mathcal{F}$ if for any $f\in\mathcal{F}$, any $\epsilon > 0$, and any compact subset $K \subset U_f$, there exists a $g \in \mathcal{M}$ such that $\sup_{x \in K} \norm{f(x)-g(x)}_2 < \epsilon$.
\end{definition}

\begin{definition}(Elementwise affine transformation: $\mathcal{A}_d$)
We define $\mathcal{A}_d$ as the set of all elementwise affine transforms $\{x \rightarrow Ax + b | A \in \mathcal{D}, b \in \RR^d \}$ where $\mathcal{D}$ denotes the set of diagonal matrices $A\in\RR^{d\times d}$ with $A_{ii} \in \RR_{+}$.
\end{definition}

\begin{definition}(Triangular transformation: $\mathcal{T}^{\infty}_d$ \citep{CFINN2020})
We define $\mathcal{T}^{\infty}_d$ as the set of all $C^{\infty}$-increasing triangular mappings from $\RR^d \rightarrow \RR^d$. 
Here, a mapping $T=(T_1,\dots,T_d):\RR^d \rightarrow \RR^d$ is increasing triangular if for each $T_k(x) = T_k(x_k, x_{<k})$ is strictly increasing with respect to $x_k$ for a fixed $x_{<k}$ where $x \in \RR^d$.
\end{definition}

\begin{lemma}\label{lemma:cfinn}\citep[Lemma~1]{CFINN2020}
An $L^p$-universal approximator for $\mathcal{T}^{\infty}_d$ is a distributional universal approximator.
\end{lemma}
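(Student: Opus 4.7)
The plan is to reduce distributional approximation to approximation of a triangular transport map via the Knothe--Rosenblatt rearrangement. Fix a base distribution $\pi$ on $\RR^d$ (say, standard Gaussian) and a target distribution $\nu$. To establish distributional universality, I aim to show that for any bounded continuous test function $\varphi$ and any $\epsilon > 0$, there exists $\hat{T} \in \mathcal{M}$ such that $\left|\int \varphi \, d(\hat{T}_*\pi) - \int \varphi \, d\nu\right| < \epsilon$, using only the $L^p$-universality hypothesis on $\mathcal{T}^\infty_d$.

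First I would reduce to a smooth, strictly positive target. Convolve $\nu$ with a Gaussian of variance $\delta$ to obtain a mollified $\nu_\delta$; standard facts give $\nu_\delta \to \nu$ weakly as $\delta \to 0$, so for any fixed bounded continuous $\varphi$, $\left|\int \varphi \, d\nu_\delta - \int \varphi \, d\nu\right|$ becomes $<\epsilon/3$ for $\delta$ small. For such a $C^\infty$ positive density, the Knothe--Rosenblatt construction yields an explicit map $T^* \in \mathcal{T}^\infty_d$ with $T^*_*\pi = \nu_\delta$, where each $T^*_k(x_{1:k})$ is built by composing the conditional CDF of the $k$-th marginal of $\nu_\delta$ (given the first $k-1$ coordinates of $T^*$) with the inverse CDF of $\pi$'s $k$-th coordinate. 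This places $T^*$ inside the class to which the $L^p$-universality hypothesis applies.

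Next, invoke the hypothesis to obtain $\hat{T} \in \mathcal{M}$ with $\bigl(\int_K \|\hat{T}(x) - T^*(x)\|^p \, dx\bigr)^{1/p} < \eta$ on a prescribed compact $K \subset \RR^d$. Choose $K$ large enough that $\pi(K^c) < \epsilon/(6\|\varphi\|_\infty)$ and so that $T^*(K)$ is contained in a compact set $K'$ on which $\varphi$ is uniformly continuous. Splitting,
\begin{align*}
\left| \int \varphi \circ \hat{T} \, d\pi - \int \varphi \circ T^* \, d\pi \right|
&\leq \int_K \left| \varphi(\hat{T}(x)) - \varphi(T^*(x)) \right| d\pi(x) + 2\|\varphi\|_\infty \, \pi(K^c),
\end{align*}
the tail contributes at most $\epsilon/3$. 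Since $\pi$ has bounded density on $K$, the $L^p$ bound on $\hat{T} - T^*$ (in Lebesgue measure) transfers to an $L^p$ bound in $\pi$-measure on $K$, hence convergence in $\pi$-measure on $K$. By uniform continuity of $\varphi$ on $K'$, the integrand in the first term can be made uniformly small (outside a small-measure exceptional set, handled by boundedness of $\varphi$), yielding the remaining $\epsilon/3$.

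The principal obstacle I expect is the passage from $L^p$ closeness with respect to Lebesgue measure on a compact set to closeness of integrals against $\pi$ of the composed test function. This requires simultaneously choosing $K$ to (i) capture most of $\pi$'s mass, (ii) push under $T^*$ into a region of uniform continuity of $\varphi$, and (iii) have the Lebesgue density of $\pi$ bounded so that the $L^p(dx)$ estimate implies an $L^p(d\pi)$ estimate. A secondary subtlety is the smoothing step: one must verify that the Knothe--Rosenblatt map of the mollified target lies in $\mathcal{T}^\infty_d$ and that weak closeness $\nu_\delta \approx \nu$ composes cleanly with the map approximation so that the three error terms are independently controllable.
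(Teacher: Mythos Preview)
The paper does not actually prove this lemma; it merely cites it as Lemma~1 of \cite{CFINN2020} and then immediately uses it as a black box (``Using \myref{Lemma}{lemma:cfinn} and that sup-universality implies $L^p$-universality, we simply need to prove that\dots''). So there is no proof in this paper to compare your proposal against.

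For what it is worth, your proposal follows the standard route taken in \cite{CFINN2020}: reduce to a target with smooth positive density by mollification, invoke the Knothe--Rosenblatt rearrangement to produce a transport map in $\mathcal{T}^\infty_d$, approximate that map in $L^p$ on a large compact set, and then argue that $L^p$-closeness of maps on a set carrying most of the base mass yields weak closeness of the pushforwards. The obstacles you flag (transferring the Lebesgue $L^p$ bound to a $\pi$-measure bound via bounded density, and controlling the tail outside $K$) are exactly the right ones, and your $\epsilon/3$ decomposition handles them. One small point you gloss over: you need $\hat{T}(x)$, not just $T^*(x)$, to land in the region where $\varphi$ is uniformly continuous for most $x\in K$; this follows from the $L^p$ bound via Markov's inequality, giving a small exceptional set that is absorbed by the $2\|\varphi\|_\infty$ term. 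With that patched, the argument is sound.
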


Using \myref{Lemma}{lemma:cfinn} and that sup-universality implies $L^p$-universality, we simply need to prove that a composition of an elementwise affine layer ($\mathcal{A}_d$) and ELF is a sup-universal approximator of $\mathcal{T}^{\infty}_d$.

\begin{definition}(1-Lipschitz triangular transformation: $\mathcal{P}^{\infty}_d$)
We define $\mathcal{P}^{\infty}_d$ as the set of all $C^{\infty}$-increasing 1-Lipschitz triangular mappings from $\RR^d \rightarrow \RR^d$.
Here, a mapping $P=(P_1,\dots,P_d):\RR^d \rightarrow \RR^d$ is increasing triangular if for each $P_k(x) = P_k(x_k, x_{<k})$ is strictly increasing with respect to $x_k$ for a fixed $x_{<k}$ where $x \in \RR^d$ and $P_k$ is 1-Lipschitz with respect to the first argument ($\sup_{x_k} \abs{\frac{\partial P_k}{\partial x_k}} \leq 1$).
\end{definition}

\begin{lemma}\label{lemma:t_to_lipschitz}
Given a set of triangular 1-Lipschitz functions $\mathcal{M}$ that is a sup-universal approximator of $\mathcal{P}^{\infty}_d$, then the set of functions $\{f \circ (I + g)\ \vert\ f \in \mathcal{A}_d, g \in \mathcal{M} \}$ is a sup-universal approximator of $\mathcal{T}^{\infty}_d$.
\end{lemma}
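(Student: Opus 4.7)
Given $T \in \mathcal{T}^\infty_d$, a compact $K \subset \RR^d$, and $\epsilon > 0$, my plan is to build an exact decomposition $T = f^* \circ (I + g^*)$ on $K$, with $f^* \in \mathcal{A}_d$ and $g^* \in \mathcal{P}^\infty_d$, and then to invoke the sup-universality hypothesis on $\mathcal{M}$ to replace $g^*$ by an approximating $g \in \mathcal{M}$ without losing much in sup norm.

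First I would set up the decomposition. Writing $f^*(y) = A y + b$ with $A = \mathrm{diag}(a_1, \dots, a_d)$ and $a_k > 0$, the equation $T = f^* \circ (I + g^*)$ forces, coordinatewise,
$$g^*_k(x_{\leq k}) \;=\; \frac{T_k(x_{\leq k}) - b_k}{a_k} - x_k,$$
and hence $\partial_k g^*_k = \partial_k T_k / a_k - 1$. For $g^*$ to lie in $\mathcal{P}^\infty_d$ (strictly increasing and $1$-Lipschitz in its diagonal variable) I need $\partial_k T_k(x) \in [a_k,\, 2 a_k]$ on the domain. Since $T$ is smooth and $K$ compact, $\ell_k := \inf_K \partial_k T_k$ and $L_k := \sup_K \partial_k T_k$ are finite and positive, and the choice $a_k \in [L_k/2,\, \ell_k]$ works whenever $L_k \leq 2 \ell_k$. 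I would then extend $g^*$ from $K$ to all of $\RR^d$ while preserving $\mathcal{P}^\infty_d$-membership via a partition-of-unity blend with a fixed $\mathcal{P}^\infty_d$ map (say $x_k \mapsto x_k/2$) outside a bounded neighborhood of $K$, and similarly choose the biases $b_k$ so that the blend is smooth.

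Having placed $g^* \in \mathcal{P}^\infty_d$, the sup-universality hypothesis yields $g \in \mathcal{M}$ with $\sup_{x \in K}\|g(x) - g^*(x)\|_2 < \epsilon / \max_k a_k$. Setting $f := f^*$ and using that $A$ has operator norm $\max_k a_k$,
$$\sup_{x \in K} \|T(x) - f(x + g(x))\|_2 \;=\; \sup_{x \in K}\|A(g^*(x) - g(x))\|_2 \;\leq\; \max_k a_k \cdot \sup_{x \in K}\|g^*(x) - g(x)\|_2 \;<\; \epsilon,$$
which is the claimed sup-universality.

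The hard part will be securing the bounded-ratio condition $L_k \leq 2 \ell_k$, which can fail for generic $T \in \mathcal{T}^\infty_d$ (any $T$ with a near-critical point of $\partial_k T_k$ inside $K$ can have $\ell_k$ arbitrarily small while $L_k$ stays positive). To remove the restriction I would first sup-approximate $T$ on $K$ by some $\tilde T \in \mathcal{T}^\infty_d$ whose diagonal partials satisfy the bounded ratio on a slight enlargement of $K$ — e.g., by smoothly lifting $\partial_k T_k$ away from zero using an additive lower-triangular correction whose diagonal is a mollified indicator of the bad region — and then run the decomposition above on $\tilde T$ with error budget $\epsilon/2$. Making this perturbation compatible with the sup norm (rather than only $L^p$) is the delicate technical step.
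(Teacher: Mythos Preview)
Your decomposition $T = f^* \circ (I + g^*)$ with $f^*(y) = Ay + b$ diagonal--affine, and the closing estimate $\sup_K\|T - f^*\circ(I+g)\|_2 \le (\max_k a_k)\sup_K\|g^*-g\|_2$, are exactly the paper's argument. The divergence is only in the choice of $a_k$ and in what you demand of $g^*$.

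The paper simply takes $a_k = D_t := \tfrac12 \sup_{K} \partial_t T_t$. Then $\partial_t g^*_t = \partial_t T_t/D_t - 1 \in (-1,1]$ on $K$ automatically, because $0 < \partial_t T_t \le 2D_t$ by construction of $D_t$. Hence $g^*$ is $1$-Lipschitz in its diagonal variable with no further hypothesis --- in particular no bounded-ratio condition $L_k \le 2\ell_k$, and therefore no need for the preliminary sup-approximation of $T$ by a tamer $\tilde T$ that you flag as the hard part. The paper's proof is three lines.

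The extra difficulty in your route comes from also requiring $g^*$ to be \emph{strictly increasing} in $x_t$ (i.e.\ $\partial_t g^*_t > 0$), because that clause appears in the stated definition of $\mathcal{P}^\infty_d$. The paper's $g^*$ is in general not monotone in $x_t$ (whenever $\ell_k < L_k/2$ its diagonal derivative dips below zero), so strictly speaking the paper never verifies $g^* \in \mathcal{P}^\infty_d$ either; it only checks the $1$-Lipschitz bound and tacitly drops the monotonicity requirement. For the intended application this is harmless, since the one-layer FELU/ReLU constructions that play the role of $\mathcal{M}$ in fact sup-approximate arbitrary $1$-Lipschitz scalar maps (the piecewise-linear interpolation in those lemmas never uses monotonicity of the target). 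Read against the lemma \emph{as literally stated}, your worry is legitimate and the paper's proof has the same gap you are trying to close; your bounded-ratio workaround is a reasonable attack on it, but it is not what the paper does, and you can avoid the whole detour by relaxing to the $1$-Lipschitz condition alone.

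The partition-of-unity extension of $g^*$ from $K$ to all of $\RR^d$ is likewise absent from the paper, which works directly on $K$.
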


\begin{proof}

Given a multivariate continuously differentiable function $T(x)_t = T_{t}(x_t, x_{<t})$ for $t \in [1, m]$ that is strictly monotonic with respect to the first argument when the second argument in fixed (i.e. $T \in \mathcal{T}^{\infty}$), we define $D_t$ as:
$$ \sup_{x_{<t}} \abs{\frac{1}{2} \frac{\partial T_{t}(x_t, x_{<t})}{\partial x_t}} $$

If we divide $T(x)_t$ by $D_t$, then the resulting function is 2-Lipschitz. Further, we can write $T(x)_t = D_t(x_t + \left(\frac{T(x)_t}{D_t} - x_t)\right) = D_t\left(x_t + g(x_t, x_{< t})\right)$ where $g(x_t, x_{< t})$ is a 1-Lipschitz function with respect to the first argument. Say $\hat{g}(x_t, x_{< t})\in\mathcal{M}$ is $\epsilon / D_t$ close to $g(x_t, x_{< t})$, then:
$$ \sup_{x_t} \abs{D_t (x_t + \hat{g}(x_t, x_{< t})) - T(x)_t} \leq \epsilon $$
\end{proof}

From here, we simply need to prove that ELF is a universal approximator of triangular 1-Lipschitz functions.

\begin{lemma}\label{lemma:lip_eps}
Given a compact set $K \subset [a,b]$, if $f:\RR\rightarrow\RR$ and $g:\RR\rightarrow\RR$ are L-Lipschitz and are at least $\epsilon$ close at the endpoint $a$ and $b$, then

$$ \sup_{x\in[a,b]} \abs{f(x) - g(x)} \leq L (b - a) + \epsilon$$

\end{lemma}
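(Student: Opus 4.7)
The plan is to bound $\abs{f(x)-g(x)}$ pointwise by applying the triangle inequality twice---once pivoting through the left endpoint $a$ and once through the right endpoint $b$---and then combining the two bounds by taking the pointwise minimum.

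First I would fix an arbitrary $x \in [a,b]$ and split $\abs{f(x)-g(x)} \leq \abs{f(x)-f(a)} + \abs{f(a)-g(a)} + \abs{g(a)-g(x)}$. The outer two terms are each at most $L(x-a)$ by the Lipschitz hypothesis, and the middle term is at most $\epsilon$ by the endpoint-closeness assumption, so $\abs{f(x)-g(x)} \leq 2L(x-a)+\epsilon$. Running the symmetric argument with $b$ in place of $a$ yields $\abs{f(x)-g(x)} \leq 2L(b-x)+\epsilon$.

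The final step is to combine these: for every $x \in [a,b]$, $\abs{f(x)-g(x)} \leq 2L \min(x-a,\, b-x) + \epsilon$. Since $\min(x-a, b-x) \leq (b-a)/2$ uniformly on $[a,b]$ (with equality at the midpoint), the right-hand side is at most $L(b-a)+\epsilon$. Taking the supremum over $[a,b]$ (which dominates the supremum over the compact $K\subset[a,b]$) gives the claim.

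The only nontrivial observation is that one must pivot through \emph{both} endpoints and take the pointwise minimum; a one-sided triangle-inequality argument only yields the weaker bound $2L(b-a)+\epsilon$, which is a factor of two worse in the Lipschitz term. Once that observation is made, the rest is routine.
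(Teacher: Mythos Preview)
Your proposal is correct and follows essentially the same approach as the paper: pivot through both endpoints via the triangle inequality, apply the Lipschitz bound and the $\epsilon$-closeness at $a$ and $b$, then take the pointwise minimum and use $\min(x-a,b-x)\le (b-a)/2$. Your write-up is in fact a bit cleaner than the paper's, which compresses the chain of inequalities and contains what appear to be typos in the first line.
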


\begin{proof}

\begin{align*}
   \sup_{x\in[a,b]} \abs{f(x) - g(x)} &\leq \sup_{x\in[a,b]} \min\left(\abs{g(x) - f(a)} + \abs{g(a) - g(x)}, \abs{g(x) - f(b)} + \abs{g(b) - g(x)} \right)
\\ &\leq \sup_{x\in[a,b]} \min\left(\abs{g(x) - g(a)} + \epsilon + \abs{g(a) - g(x)}, \abs{g(x) - g(b)} + \epsilon + \abs{g(b) - g(x)} \right)
\\ &\leq \sup_{x\in[a,b]} \min\left(2L \abs{x-a} + \epsilon, 2L \abs{x-b} + \epsilon \right)
\\ &\leq L (b - a) + \epsilon
\end{align*}

\end{proof}

\subsection{1-D Universality for ReLU}

\begin{definition}
We define $\mathcal{G}_{\text{ReLU}, H}$ as the set of functions $f \in \mathcal{F}_{\text{ReLU}, H}$ where the values of the weight matrix in the first layer are one.
\end{definition}

\begin{lemma}\label{lemma:relu_approx}
$\mathcal{G}_{\text{ReLU}, H}$ is a sup-approximator of $\mathcal{P}^{\infty}_1$.
\end{lemma}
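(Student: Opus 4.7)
My plan is to build $g \in \mathcal{G}_{\text{ReLU}, H}$ as the continuous piecewise linear interpolant of $P$ at equally-spaced nodes inside a compact interval containing $K$, and then invoke \myref{Lemma}{lemma:lip_eps} to control the error on each subinterval. The key observation is that although $\mathcal{G}_{\text{ReLU}, H}$ forces the first-layer weights to be $1$, we still have full freedom to place the $H$ breakpoints via the biases $b_{1,i}$ and to tune any real-valued slope increments via $w_{2,i}$, so every desired piecewise linear shape is expressible in this family.

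Concretely, given $P \in \mathcal{P}^\infty_1$, a compact set $K$, and $\epsilon > 0$, I would fix $[a,b] \supseteq K$ and set $H$ with $H > 1 + (b-a)/\epsilon$. Define nodes $t_i = a + (i-1)(b-a)/(H-1)$ for $i = 1,\ldots,H$, and set $b_{1,i} = -t_i$ so that the $i$-th ReLU activates exactly at $t_i$. Let $s_k = [P(t_{k+1}) - P(t_k)]/(t_{k+1} - t_k)$ be the secant slope of $P$ on $[t_k, t_{k+1}]$, which lies in $[0,1]$ because $P$ is $C^\infty$-increasing and $1$-Lipschitz. Choose $w_{2,k} = s_k - s_{k-1}$ (with $s_0 = 0$) for $k = 1, \ldots, H-1$, pick $w_{2,H}$ arbitrarily (say $0$), and set $b_2 = P(a)$. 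A one-line induction on $k$ shows that the piecewise linear function $g$ so defined satisfies $g(t_k) = P(t_k)$ for every $k$, and that $g$ has slope $s_k \in [0,1]$ on each $[t_k, t_{k+1}]$, so $g$ is $1$-Lipschitz on $[a,b]$.

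With $P$ and $g$ both $1$-Lipschitz on each $[t_k, t_{k+1}]$ and matching at both endpoints, \myref{Lemma}{lemma:lip_eps} applied with $L = 1$ and slack $0$ gives
\begin{equation*}
\sup_{x \in [t_k, t_{k+1}]} |P(x) - g(x)| \leq (t_{k+1} - t_k) = \frac{b-a}{H-1} < \epsilon,
\end{equation*}
and taking the supremum over $k$ yields $\sup_{x \in K} |P(x) - g(x)| < \epsilon$, which is the desired sup-approximation.

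I do not expect any real obstacle: the argument is a standard piecewise linear interpolation estimate on a $1$-Lipschitz function, and all the constraints of $\mathcal{G}_{\text{ReLU}, H}$ are satisfied automatically. The only subtle point worth flagging explicitly is that the fixed choice $w_{1,i}=1$ does not cost any expressive power for this purpose, because the secant-slope increments $s_k - s_{k-1}$ may be negative, and the ReLU ansatz accommodates arbitrary real $w_{2,k}$; slope positivity (needed for $g$ being $1$-Lipschitz) is only required of the cumulative sums $s_k$, which is built in by construction.
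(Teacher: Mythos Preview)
Your proposal is correct and follows essentially the same strategy as the paper: partition $[a,b]$ into equally spaced subintervals, realise the piecewise linear interpolant of $P$ at the nodes inside $\mathcal{G}_{\text{ReLU},H}$, and bound the error on each piece via \myref{Lemma}{lemma:lip_eps}. The only difference is cosmetic: the paper encodes the interpolant with \emph{two} ReLUs per interval (an ``on'' unit at $x_i$ and a cancelling ``off'' unit at $x_{i+1}$), forcing the derivative to vanish outside $[a,b]$, whereas you use a single ReLU per node with weights equal to the slope increments $s_k-s_{k-1}$; both are standard ReLU representations of the same piecewise linear function on $[a,b]$, and your version in fact halves the hidden width needed.
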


\begin{proof}

Given some $\epsilon > 0$ and a compact set $K \subset [a,b]$, we choose $H = 2 \lceil{\frac{b - a}{\epsilon}}\rceil$ and divide the range $[a,b]$ into evenly spaced intervals $n=H/2$ intervals: $(x_0=a, x_1),(x_1, x_2),\dots, (x_{n-1}, x_{n}=b)$. We denote the target function as $f$ and the function we are learning:
$$ \hat{f}(x) = b_2 + \sum_{i=1}^{H} w_{i} \text{ReLU}(x + b_{1,i})  $$

For each interval $(x_i, x_{i+1})$, we set $b_{1,2i}$,$b_{1,2i+1}$, $w_{2i}$ and $w_{2i+1}$ such that $\hat{f}(x_i) = f(x_i)$ and $\hat{f}(x_{i+1}) = f(x_{i+1})$. 

Using induction over the intervals, we want $\hat{f} = f$ at each of the end points of the evenly spaced intervals and $\frac{\partial \hat{f}}{\partial x} = 0$ for $x < a$ and $\frac{\partial \hat{f}}{\partial x} = 0$ for $x > b$ (points to the right of the rightmost interval). We first set $b_2 = f(a)$ and all other parameters to zero. 

For the interval $(x_i, x_{i+1})$, we have $\hat{f}(x_i) = f(x_i)$. We set:
\begin{align*}
w_{2i} = \frac{f(x_{i+1}) - f(x_{i})}{x_{i+1} - x_{i}} &\qquad b_{1,2i} = -x_i
\\ w_{2i + 1} = -\frac{f(x_{i+1}) - f(x_{i})}{x_{i+1} - x_{i}} &\qquad b_{1,2i+1} = -x_{i+1}
\end{align*}

From the definition of ReLU, all points $x \leq x_i$ are unaffected by the change in parameters. Further, $\hat{f}(x_i) = f(x_i)$, $\hat{f}(x_{i+1}) = f(x_{i+1})$, and $\frac{\partial \hat{f}}{\partial x} = 0$ for $x > x_{i+1}$.

By construction and from the fact that $f$ is 1-Lipschitz, $\hat{f}$ is 1-Lipschitz. Using \myref{Lemma}{lemma:lip_eps}, within any interval:
$$ \sup_{x\in[x_i,x_{i+1}]} \abs{f(x) - \hat{f}(x)} \leq \epsilon$$

\end{proof}

\subsection{1-D Universality for FELU}

\begin{lemma}\label{lemma:felu_approx}
$\mathcal{F}_{\text{FELU}, H}$ is a sup-approximator of $\mathcal{P}^{\infty}_1$.
\end{lemma}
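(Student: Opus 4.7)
The plan is to reduce the FELU case to the ReLU result of Lemma \ref{lemma:relu_approx} by exploiting the fact that a suitably scaled FELU unit uniformly approximates a ReLU unit. Concretely, define $\phi_\alpha(y) := \alpha \cdot \text{FELU}(y/\alpha)$ for $\alpha > 0$. A short three-case inspection of the definition of FELU shows $\sup_{y \in \RR} |\phi_\alpha(y) - \text{ReLU}(y)| \leq \alpha/2$: on $y \geq 0$ the two functions agree (both equal $y$), on $y \leq -\alpha$ we have $\phi_\alpha(y) = -\alpha/2$ while $\text{ReLU}(y) = 0$, and on the quadratic transition region $[-\alpha, 0]$ both functions lie in $[-\alpha/2,\, 0]$. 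This is the one small computation driving the whole argument.

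Given $f \in \mathcal{P}^{\infty}_1$, a compact set $K \subset [a, b]$, and $\epsilon > 0$, I would first invoke Lemma \ref{lemma:relu_approx} to produce a ReLU network $\hat{f}_R(x) = b_2 + \sum_{i=1}^{H} w_i \, \text{ReLU}(x + b_{1,i}) \in \mathcal{G}_{\text{ReLU}, H}$ with $\sup_{x \in K} |f(x) - \hat{f}_R(x)| < \epsilon/2$. Set $M := \sum_{i=1}^{H} |w_i|$ and choose $0 < \alpha \leq \epsilon / (M + 1)$ (the $+1$ handles the degenerate $M = 0$ case). Define
\[ \hat{f}_F(x) \;=\; b_2 + \sum_{i=1}^{H} w_i \, \alpha \, \text{FELU}\!\left(\frac{x + b_{1,i}}{\alpha}\right), \]
which belongs to $\mathcal{F}_{\text{FELU}, H}$ by reading off first-layer weights $w_{1,i}' = 1/\alpha$, first-layer biases $b_{1,i}' = b_{1,i}/\alpha$, second-layer weights $w_{2,i}' = w_i \alpha$, and output bias $b_2' = b_2$.

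Combining the key bound unit by unit gives $|\hat{f}_F(x) - \hat{f}_R(x)| \leq \alpha M / 2 \leq \epsilon/2$ pointwise on $\RR$, and then the triangle inequality yields $\sup_{x \in K} |f(x) - \hat{f}_F(x)| \leq \epsilon$, as required for sup-approximation. The main obstacle, to the extent there is one, is merely verifying the uniform bound $|\phi_\alpha - \text{ReLU}| \leq \alpha/2$; the remainder is a direct scaling and triangle-inequality argument that piggybacks on Lemma \ref{lemma:relu_approx}. One minor care point: the hidden size $H$ is inherited from the ReLU construction (which used $H = 2\lceil (b-a)/\epsilon_R\rceil$ at accuracy $\epsilon_R = \epsilon/2$), so the statement of Lemma \ref{lemma:felu_approx} is implicitly read as ``for every $\epsilon$ there exists an $H$ sufficient to approximate within $\epsilon$,'' consistent with the definition of sup-approximation used throughout the paper.
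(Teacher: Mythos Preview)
Your argument is correct. The uniform bound $\sup_{y}|\phi_\alpha(y)-\text{ReLU}(y)|\le \alpha/2$ is exactly right on all three pieces, and the triangle-inequality finish is sound; the resulting function is a bona fide member of $\mathcal{F}_{\text{FELU},H}$ with the parameters you read off.

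Your route differs from the paper's. The paper does \emph{not} invoke \myref{Lemma}{lemma:relu_approx} as a black box; instead it redoes the interval construction directly with FELU units, setting $w_{1,2i}=w_{1,2i+1}=2n/\epsilon$ (i.e.\ your $1/\alpha$) and tracking how the quadratic transition region introduces an additive $\epsilon/(2n)$ error at each interval endpoint, which then accumulates to at most $\epsilon/2$ across all $n$ intervals before applying \myref{Lemma}{lemma:lip_eps}. Both proofs rest on the same insight---the paper even states it as $\lim_{w_1\to\infty} (w_2/w_1)\,\text{FELU}(w_1(x+b))=w_2\,\text{ReLU}(x+b)$---but your reduction is more modular and avoids the inductive bookkeeping. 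One incidental byproduct the paper's construction retains and yours does not is that its $\hat f$ is itself $1$-Lipschitz; this is not required by the lemma as stated (the approximating class is $\mathcal{F}_{\text{FELU},H}$, not $\mathcal{L}_{\text{FELU},H}$), but it is worth being aware of if you later want the approximant to live in $\mathcal{P}^\infty_1$ as well.
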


\begin{proof}
Given some $\epsilon > 0$ and a compact set $K \subset [a,b]$, we choose $H = 2 \lceil{2 \frac{b - a}{\epsilon}}\rceil$ and divide the range $[a,b]$ into evenly spaced intervals $n=H/2$
 intervals: $(x_0=a,x_1),(x_1,x_2),\dots, (x_{n-1}, x_{n}=b)$. We denote the target function as $f$ and the function we are learning:
% % 
\begin{equation}\label{eqn:tgt_felu_fn}
\hat{f}(x) = b_2 + \sum_{i=1}^{H} w_{2,i}\ \text{FELU}(w_{1,i}\ x + b_{1,i}) 
\end{equation}
% % 

Unlike for ReLU, we cannot set the parameters such that $\hat{f}(x_i) = f(x_i)$ and $\hat{f}(x_{i+1}) = f(x_{i+1})$ for all intervals. 

Instead, inductively over the intervals, we want to set the parameters such that if $\abs{\hat{f}(x_i) - f(x_i)} < \epsilon_1$, then $\abs{\hat{f}(x_{i+1}) - f(x_{i+1})} < \epsilon_1 + \frac{\epsilon}{2n}$ at each of the end points of the evenly spaced intervals. Further, we want $\frac{\partial \hat{f}}{\partial x} = 0$ for $x < a$ and $\frac{\partial \hat{f}}{\partial x} = 0$ for $x > b$ (points to the right of the rightmost interval). We first set $b_2 = f(a)$ and all other parameters to zero. 

The intuition of the following construction comes from:
$$
\lim_{w_1 \rightarrow \infty} w_2 / w_1 \text{FELU}(w_1 (x + b)) = w_2 \text{ReLU}(x + b)
$$

For the interval $(x_i, x_{i+1})$, we have $\abs{\hat{f}(x_i) - f(x_i)} < \epsilon_1$. We denote $L_i = \frac{f(x_{i+1}) - f(x_{i})}{x_{i+1} - x_{i}}$.
\begin{align*}
w_{1,2i} = &w_{1,2i+1} = 2n / \epsilon \\
w_{2,2i} = L_i / w_{1,2i}  &\qquad b_{1,2i} = -1 - x_i w_{1,2i} \\
w_{2,2i+1} =L_i / w_{1,2i+1}  &\qquad b_{1,2i+1} = - x_{i+1} w_{ 1,2i}
\\ b_2 = b_2 + (w_{2,2i} &+ w_{2,2i + 1}) / 2
\end{align*}
where we update the value for $b_2$ since $\text{FELU}(x) = -1/2$ for $x < -1$.

From construction, all values $x \leq x_i$ are unaffected and $\frac{\partial \hat{f}}{\partial x} = 0$ for $x > x_{i+1}$. Further, by construction, $\hat{f}$ is 1-Lipschitz and $\abs{\hat{f}(x_{i+1}) - f(x_{i+1})} < \epsilon_1 + 1 / w_{1,2i} = \epsilon_1 + \frac{\epsilon}{2n}$. 

Using \myref{Lemma}{lemma:lip_eps}:
\begin{align*}
\\ \sup_{x\in[a,b]} \abs{f(x) - \hat{f}(x)}  &\leq \sup_{i \in [0,\dots,n-1]} \sup_{x\in[x_i,x_{i+1}]} \abs{f(x) - \hat{f}(x)} 
\\ &\leq \sup_{i \in [0,\dots,n-1]} (i + 1) \frac{\epsilon}{2n} + \epsilon / 2
\\ &\leq n \frac{\epsilon}{2n} + \epsilon / 2
\\ &\leq \epsilon
\end{align*}

\end{proof}

\subsection{Universality of Higher Dimensional Distribution}

\begin{theorem}\label{thm:main_theorem}
Let $x \in [a,b]^{d}$ where $a, b \in \RR$. Given any $0 < \epsilon < 1$ and any multivariate continuously differentiable function $P(x)_t = P_{t}(x_t, x_{<t})$ for $t \in [1, d]$ that is strictly monotonic and 1-Lipschitz with respect to the first argument when the second argument in fixed (i.e. $P(x) \in \mathcal{P}^{\infty}_d$), then there exists a multivariate function $\hat{P} \in \mathcal{P}^{\infty}_d$, such that $\norm{\hat{P}(x) - P(x)} < \epsilon$ for all $x$, of the following form:
$$\hat{P}(x)_t = \frac{M_{x_{<t}}(x_t)}{\max (1, \text{Lip}(M_{x_{<t}}))}$$
where:
\begin{equation}\label{eqn:tgt_relu_fn}
 M_{x_{<t}}(x) = b_2 + \sum_{i=1}^{H} w_{i}\ \text{ReLU}(x + b_{1,i}) 
 \end{equation}
where $b_2$, $w_i$ and $b_1$ may depend on $x_{<t}$, and $\text{Lip}(M_{x_{<t}})$ is the Lipschitz constant of \myref{Equation}{eqn:tgt_relu_fn}.
\end{theorem}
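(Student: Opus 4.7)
My plan is to prove \myref{Theorem}{thm:main_theorem} by reducing the $d$-dimensional approximation to $d$ independent one-dimensional problems via slicing. For each $t$ and each fixed context $x_{<t}\in[a,b]^{t-1}$, the map $x_t\mapsto P_t(x_t,x_{<t})$ belongs to $\mathcal{P}^{\infty}_1$, so \myref{Lemma}{lemma:relu_approx} yields a 1-Lipschitz $\mathcal{G}_{\text{ReLU},H}$ network that approximates it uniformly on $[a,b]$. The theorem then follows by assembling these per-slice approximations coherently into a single triangular map and checking that the resulting weights depend on $x_{<t}$ in the allowed way.

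\textbf{Construction and per-slice bound.} I would fix $\epsilon_1=\epsilon/\sqrt{d}$ and take $H = 2\lceil (b-a)/\epsilon_1\rceil$. For each $t$ and each $x_{<t}$, apply the construction in \myref{Lemma}{lemma:relu_approx} to the slice $P_t(\cdot,x_{<t})$: the biases $b_{1,i}=-x_i$ are anchored at a \emph{fixed} grid of $H/2$ points in $[a,b]$ (hence independent of $x_{<t}$), while the output weights and constant bias are the signed forward-difference quotients
\[
w_{2j} \;=\; -w_{2j+1} \;=\; \frac{P_t(x_{j+1},x_{<t})-P_t(x_j,x_{<t})}{x_{j+1}-x_j},\qquad b_2 \;=\; P_t(a, x_{<t}).
\]
Since $P$ is $C^{\infty}$, these quantities are smooth functions of $x_{<t}$, so the family $M_{x_{<t}}$ defines a coherent multivariate map. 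Because each $M_{x_{<t}}$ is 1-Lipschitz in $x_t$ by \myref{Lemma}{lemma:relu_approx}, the normalizer $\max(1,\text{Lip}(M_{x_{<t}}))$ equals $1$ identically and $\hat{P}(x)_t=M_{x_{<t}}(x_t)$ already has the form prescribed in the theorem.

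\textbf{Global bound.} Triangularity of $\hat{P}$ is immediate because coordinate $t$'s parameters depend only on $x_{<t}$. Within $[a,b]$, the local slope of $M_{x_{<t}}$ on segment $(x_j,x_{j+1})$ equals the quotient $L_j\in(0,1]$ (earlier ReLU units are saturated and later ones inactive), so $M_{x_{<t}}$ is both 1-Lipschitz and strictly increasing on $[a,b]$. Combining the per-slice sup bound $|\hat{P}(x)_t-P(x)_t|<\epsilon_1$ across coordinates yields $\|\hat{P}(x)-P(x)\|_2\le \sqrt{d}\,\epsilon_1=\epsilon$ uniformly on $[a,b]^d$, which is the bound claimed.

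\textbf{Main obstacle.} The chief technical friction will be aligning the construction with the literal $\mathcal{P}^{\infty}_d$ membership requirement: ReLU networks are only $C^0$, and outside the grid $[a,b]$ the construction is constant in $x_t$, violating global strict monotonicity. I expect to resolve both issues with small perturbations that stay within the prescribed form. First, adding a tiny linear drift $\eta\,x_t$ (which can be absorbed into two additional far-away ReLU units on the grid) with $\eta<\epsilon/[2(b-a)]$ and then renormalizing through $\max(1,\text{Lip}(\cdot))$ restores strict global monotonicity while enlarging the sup error by at most $\epsilon/2$. Second, mollifying $w_{2j}(x_{<t})$ by a compactly supported $C^{\infty}$ kernel of small width upgrades $C^0$ to $C^{\infty}$ dependence at negligible cost to the sup error. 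Replacing $\epsilon$ by $\epsilon/2$ throughout absorbs these perturbations; combined with \myref{Lemma}{lemma:t_to_lipschitz} and \myref{Lemma}{lemma:cfinn}, this yields the distributional universal approximation claim promised in the paper.
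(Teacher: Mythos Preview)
Your argument is correct for the theorem as literally stated, but it takes a genuinely different route from the paper and in doing so bypasses the content the paper actually cares about. You instantiate the parameters $b_2,w_i,b_{1,i}$ directly as the finite-difference construction of \myref{Lemma}{lemma:relu_approx}, so that $M_{x_{<t}}$ is exactly $1$-Lipschitz and the normalizer $\max(1,\text{Lip}(M_{x_{<t}}))$ is identically $1$. The paper instead treats that construction as a \emph{target} map $C:x_{<t}\mapsto(\text{parameters})$, invokes Cybenko to realize $C$ by a finite neural hypernetwork $M$ up to $\delta$, and then spends almost all of its effort bounding the perturbation this introduces: since $M$ only approximates $C$, the resulting ReLU network may have Lipschitz constant slightly above $1$, and the proof controls $(L_M-1)\sup|M_c|$ and $|M_c-C_c|$ via uniform continuity of the network and its Lipschitz constant in the parameters. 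Your shortcut is cleaner; the paper's detour is what justifies that the concrete ELF-AR architecture of \myref{Section}{sec:elf_ar}---where the parameters are outputs of a MADE network rather than arbitrary functions of $x_{<t}$---is itself a universal approximator, and why the Lipschitz normalization in the architecture does no harm.

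One small slip in your obstacle discussion: the weights you wrote down are already smooth in $x_{<t}$ (they are finite differences of a $C^\infty$ function), so mollifying them in $x_{<t}$ is unnecessary and in any case does not touch the failure of $C^\infty$ in $x_t$, which comes from ReLU itself. The paper's proof does not address this either, so the $\hat P\in\mathcal{P}^\infty_d$ clause should be read loosely in both cases.
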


\begin{proof}

We denote $C(x_{<t})$ as the function that maps from $x_{<t}$ to the parameters of \myref{Equation}{eqn:tgt_relu_fn} based on the construction in \myref{Lemma}{lemma:relu_approx} for the univariate function $P_t(\cdot, x_{<t})$.

For shorthand, we will use $c$ to denote $x_{<t}$,
use $M(c)$ to denote a neural (hyper)network that outputs the parameters of \myref{Equation}{eqn:tgt_relu_fn}, use $C_c$ and $M_c$ to denote the univariate function created by the outputs of $C(c)$ and $M(c)$ respectively, and use $L_M$ to denote $\max (1, \text{Lip}(M_c))$.

The crux of the proof is that from \myref{Lemma}{lemma:relu_approx}, we can construct an $\epsilon_1$-close approximation $C_c$ to $P_{t}(\cdot, x_{<t}):\RR\rightarrow\RR$ when the second argument is fixed, and we can then approximate $C(c)$ with a neural network to get an $\delta$-close approximation.
Since the specification in \myref{Lemma}{lemma:relu_approx} depends only on the function value at specific pre-defined points, the parameters of \myref{Equation}{eqn:tgt_relu_fn} (the output of the target function $C(c)$) is continuous with respect to $x_{<t}$. 
Using the fact that $C(c)$ is continuous, we can apply the classic results of \cite{cybenko1989approximation} which states that a multilayer perceptron can approximate any continuous function on a compact subset of $\RR^{d}$, giving us $M(c)$ as a $\delta$-close approximation to $C(c)$.

More specifically, given a fixed $c$:
\begin{align}
   \sup_{x\in[a,b]} \abs{M_c(x)/L_M - {P}(x)_t} &\leq  \sup_{x\in[a,b]}\abs{M_c(x)/L_M - C_c(x)} + \sup_{x\in[a,b]} \abs{ C_c(x) - {P}(x)_t}
\\ &\leq \sup_{x\in[a,b]} \abs{M_c(x)/L_M - C_c(x)} + \epsilon_1 \label{eqn:relu_proof_s1}
\end{align}

where $C_c$ is from \myref{Lemma}{lemma:relu_approx}.

Simplifying the first term:
\begin{align}
   \sup_{x\in[a,b]} \abs{M_c(x)/L_M - C_c(x)} &\leq \sup_{x\in[a,b]}  \abs{M_c(x) / L_M - M_c(x) } + \sup_{x\in[a,b]} \abs{M_c(x) - C_c(x) }
\\ &\leq \frac{\abs{L_M - 1}}{\abs{L_M}} \sup_{x\in[a,b]}  \abs{M_c(x)} + \sup_{x\in[a,b]}\abs{M_c(x) - C_c(x)}
\\ &\leq (L_M - 1) \sup_{x\in[a,b]}  \abs{M_c(x)} + \sup_{x\in[a,b]}\abs{M_c(x) - C_c(x)}\label{eqn:relu_proof_s2}
\end{align}

To bound the effect of having a $\delta$-close approximation to $C(c)$, we need to bound $L_M $ and $\sup_{x\in[a,b]} \abs{M_c(x)}$.

Since \myref{Equation}{eqn:tgt_relu_fn} and its Lipschitz constant are both continuous with respect to the parameters and on a compact set, the two are uniformly continuous. 
By uniform continuity, for some $\epsilon_2$, there exists a $\delta_1$ such that when $\norm{M(c) - C(c)}_2 < \delta_1$, $\abs{L_M - L_C} < \epsilon_2$. Thus,
\begin{align}
  L_M 
   &\leq \max(1, \text{Lip}(M_c))
\\ &\leq \max(1, \text{Lip}(C_c) + \epsilon_2)
\\ &\leq 1 + \epsilon_2
\end{align}

Similarly, for some $\epsilon_3$, there exists a $\delta_2$ such that when $\norm{M(c) - C(c)}_2 < \delta_2$, $\abs{M_c(x) - C_c(x)} < \epsilon_3$. Thus,
\begin{align}
   \sup_{x\in[a,b]} \abs{M_c(x)} &\leq \sup_{x\in[a,b]} \abs{M_c(x) - C_c(x)} + \sup_{x\in[a,b]} C_c(x) 
\\ &\leq \epsilon_3 + \norm{C_c}_{\infty}
\end{align}
where $\norm{C_c}_{\infty}$ is finite due to the compactness of the domain.

Using $\delta=\min(\delta_1, \delta_2)$ and that $M(c)$ is $\delta$-close $C(c)$, plugging all these into \myref{Equation}{eqn:relu_proof_s2},
\begin{align}
   \sup_{x\in[a,b]} \abs{M_c(x)/L_M - C_c(x)} &\leq  \epsilon_2 (\epsilon_3 + \norm{C_c}_{\infty}) + \epsilon_3 
\end{align}

Finally, using \myref{Equation}{eqn:relu_proof_s1}, $\epsilon_1 < \frac{\epsilon}{2}$, and $\epsilon_2 = \epsilon_3 < \min(\frac{\epsilon}{2}, \frac{1}{2} \frac{1}{2 + \norm{C_c}_{\infty}})$, we have
$$ \sup_{x,c} \abs{M_c(x)/L_M - {P}(x)_t}  < \epsilon $$

Having proved the univariate case, from the above, we know that given any $\epsilon$ > 0 for each $t$, there exists $\delta_t$ such that $\sup_{x\in[a,b]} \abs{P(x)_t - \hat{P}(x)_t} < \epsilon$ for all $x_{< t}$. Choosing $\delta_{\text{full}} = \max_{t\in[1,d]} \delta_t$,  we have $\norm{\hat{P}(x) - P(x)} < \epsilon$ for all $x$.

\end{proof}

Though we proved universality for ReLU, due to the construction of \myref{Lemma}{lemma:felu_approx}, the proof of universality is a straightforward generalization of the above proof since \myref{Equation}{eqn:tgt_felu_fn} and its Lipschitz constant are both continuous with respect to the parameters.

\section{Complexity Analysis of Lipschitz Constant Computation}\label{sec:complexity_analysis}

In \myref{Section}{sec:one_d_dist_elf}, we introduce \myref{Equation}{eqn:elf_lip} which shows the computation we perform to compute the Lipschitz constant of ELF. In this section, we perform a more thorough analysis of the runtime complexity of this computation. For optimization, we implemented the computation using CUDA to fully utilize the parallel capabilities of GPUs (\myref{Appendix}{sec:cuda_code}).

As was discussed in \myref{Section}{sec:one_d_lipschitz}, for a quadratic piecewise activation function with $N$ pieces for a one-layer network with hidden size $H$, the number of gradient evaluations required to compute the Lipschitz constant is $(N - 1) \cdot H + 1$. In the case of FELU (with three pieces), \myref{Equation}{eqn:elf_lip} requires only $2 \cdot H$ evaluations. The reason for the removal of the $+1$ is that the gradient is continuous for FELU and the two ends of the activation function is linear (i.e. constant gradient). 

If the gradient of the activation were not continuous (e.g. ReLU), evaluation at the points where the gradient changes (e.g. $w_{1,i} x_i + b_{1,i} = 0$ for ReLU) could still be used; however, a convention must be chosen for the gradient at that point. The choice of convention would inform which of the $N\cdot H+1$ gradients has not been calculated.

Focusing on FELU, we can see that the runtime complexity of $\frac{\partial g}{\partial x}$ is $O(H)$ (compute the gradient coming from each neuron). Further, since the number of gradient evaluations we need to do is $2H= O(H)$, the full runtime complexity is $O(H^2)$.

Empirically, we evaluate the runtime as a function of hidden size which we expect to be quadratic (\myref{Figure}{fig:runtime_hiddensize}) and as a function of batch size which we expect to be linear (\myref{Figure}{fig:runtime_batchsize}). Further, we verify the assumption by fitting a polynomial curve and checking the $R^2$ of the fit (0.998 and 0.993 respectively). The batch size in \myref{Figure}{fig:runtime_batchsize} indicates the number of one-dimensional Lipschitz constants computed.

To put into context the importance of the batch size, given the description of the model in \myref{Section}{sec:elf_ar}, the number of Lipschitz constants to be computed for a batch size $B$ for $D$-dimensional input would be $D\cdot B$ (this number can be thought of as the batch size in \myref{Figure}{fig:runtime_batchsize}). For example, for our MNIST model, for a batch size of 64, the number of Lipschitz constants computed per flow is $784\cdot 64=50176$.

\begin{figure}[!tb]
\begin{subfigure}{0.45\linewidth}
\centering
\centerline{\includegraphics[width=\columnwidth]{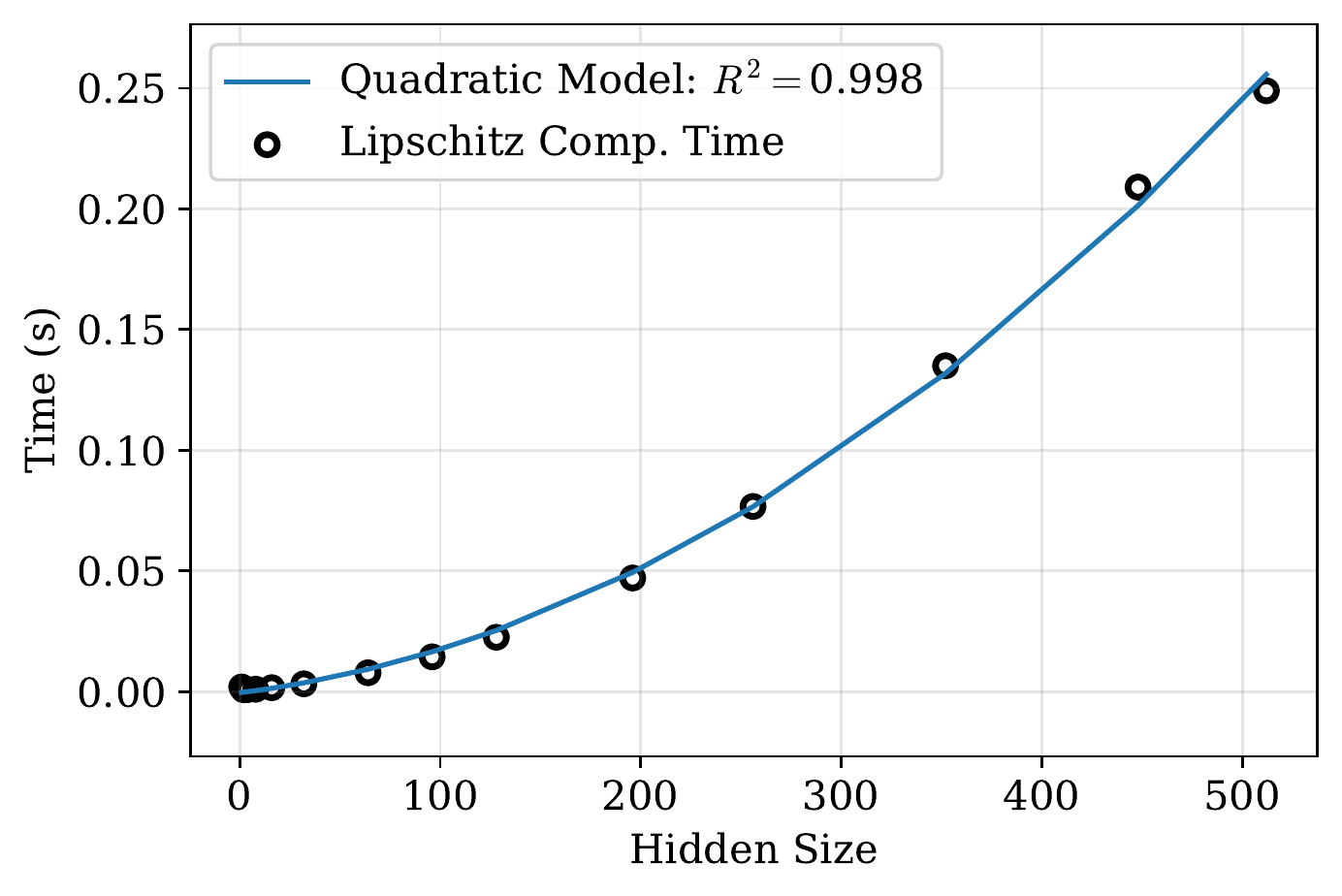}}
\caption{Runtime versus hidden size. The results of fitting a curve with polynomial 2 is shown.}\label{fig:runtime_hiddensize}
\end{subfigure} \hfill
\begin{subfigure}{0.45\linewidth}
\centering
\centerline{\includegraphics[width=\columnwidth]{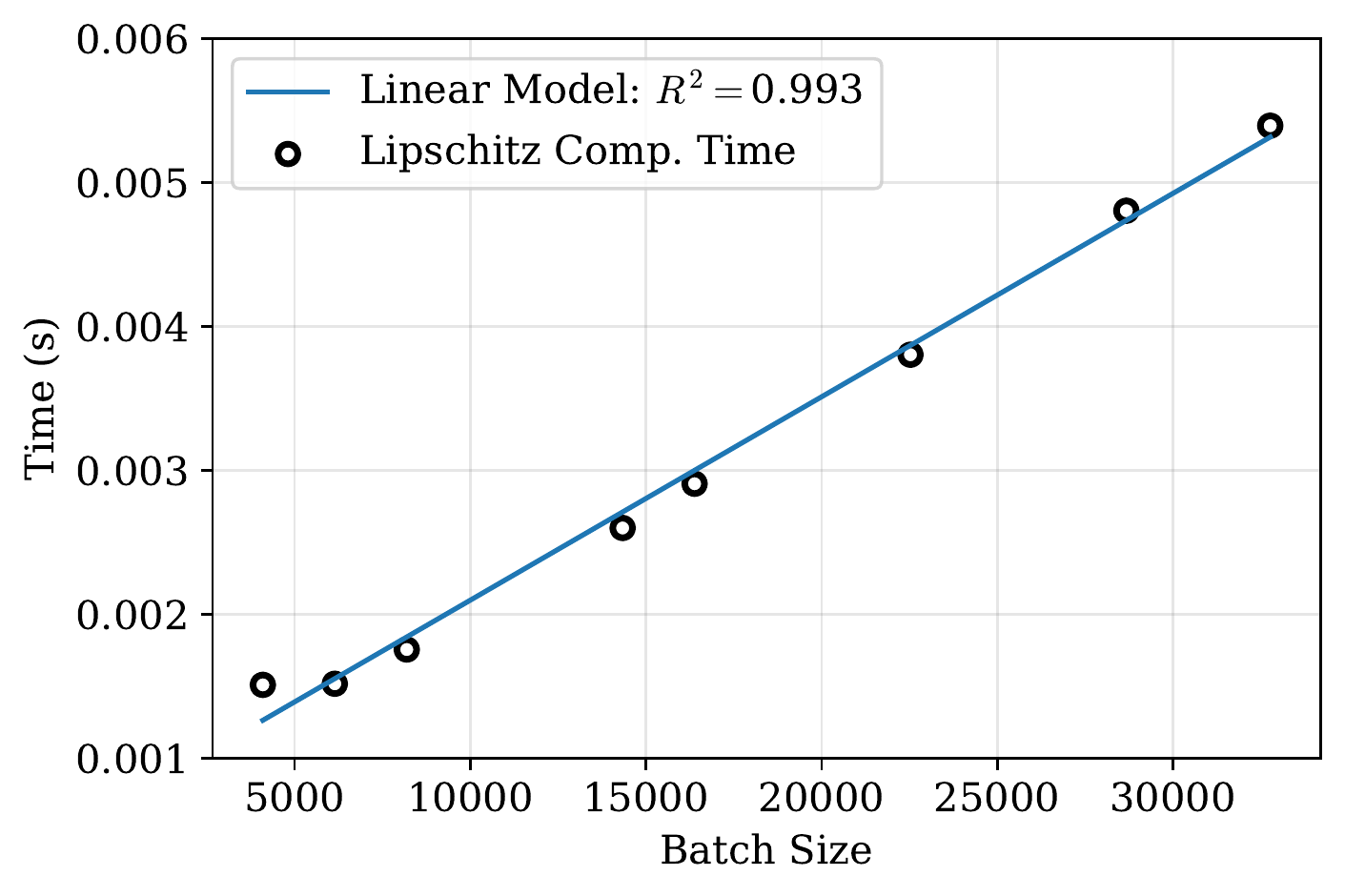}}
\caption{Runtime versus batch size. The results of fitting a linear model is shown.}\label{fig:runtime_batchsize}
\end{subfigure}

\caption{Empirical runtime analysis of Lipschitz computation on a GPU. As a function of hidden size, the runtime is quadratic; as a function of batch size, the runtime is linear. The batch size used for \myref{Figure}{fig:runtime_hiddensize} is $64 \cdot 784$. }\label{fig:lip_runtime}
\end{figure}

\section{Parameter Efficiency for Variationally Dequantized Flows}\label{sec:vflow_comparison}

Most models shown for parameter efficiency have yet to be combined with variational dequantization, thus not allowing for a fair comparison. However, in terms of parameter efficiency, \citet{VFlow2020} performed an ablation study under a fixed parameter budget, specifically with approximately 4 million parameters. The results on a validation set were reported (shown in \myref{Table}{tbl:img_deq_eff}); however, the validation set used by \citet{VFlow2020} was a random subset of 10,000 images from the training set. We performed a similar experiment with approximately four million parameters; however, we use the original test set of CIFAR-10 and report the performance on this set.

\begin{table*}[!tb]
\caption{Comparison of efficiency among variationally dequantized discrete flow-models on CIFAR-10. Flow++ and VFlow results taken from \citet{VFlow2020}. \textit{The validation set used by \citet{VFlow2020} is different from the set we evaluated our model on; specifically,  \citet{VFlow2020} used a random subset of the training set whereas we used CIFAR-10's official test set.}}\label{tbl:img_deq_eff}
\centering
\begin{tabular}{lcc}
\toprule
 & {Bits/dim} & {Param. Count}
\\
\midrule
3-channel Flow++ & $3.23$ & 4.02M \\ 
4-channel VFlow & $3.16$ & 4.03M \\ 
6-channel VFlow & $\mathbf{3.13}$ & 4.01M \\ 
\midrule
% Exact lipschitz ablation & $ $  & 1.9M & $ $ & 1.9M  \\ 
ELF-AR (Ours) & $3.18$ & 4.1M \\ 
% AR Residual Flow & -  & - & $3.27$ & 31.9M  \\ 
% UDA QuAR Flow & -  & - & $3.24$ & 31.9M  \\ 
\bottomrule
\end{tabular}
\end{table*}

\section{ReLU Flow}\label{sec:relu_flow}

To show that residual flows and ELF with ReLU does not learn, we trained a flow on a uniform distribution. In \myref{Figure}{fig:relu_vs_elu_flow}, we can see that both ELF and residual flows with ReLU are unable to learn anything meaningful, even though we gave a hidden size of 2048. On the other hand, simply replacing ReLU with ELU improved performance for the residual flow.

All the flows here are placed in between two affine flows. The only difference between ELF and residual flow is that ELF uses the exact Lipschitz computation whereas residual flow uses spectral norm.

\begin{figure*}[!bt]
% \vskip 0.2in
\begin{center}
\begin{subfigure}{0.5\linewidth}
\centerline{\includegraphics[width=\textwidth]{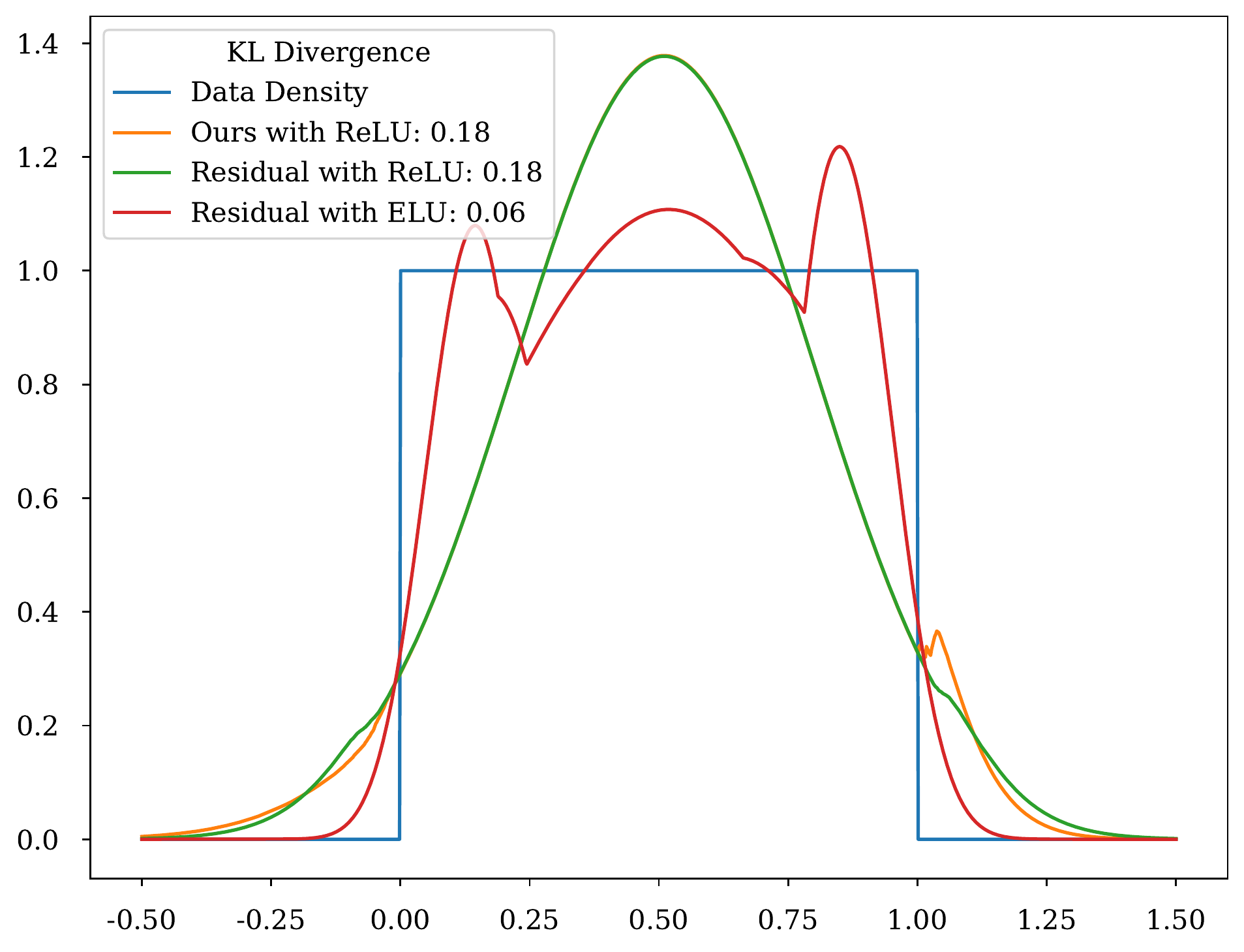}}
\end{subfigure}
\end{center}
\caption{Comparison of using ReLU vs ELU in flows}\label{fig:relu_vs_elu_flow}
% \vskip -0.2in
\end{figure*}

Speculatively, we believe that the fact ReLU does not work comes from a combination of the fact that we optimize flows using gradient based methods, the loss function of flows requires first derivatives (meaning gradient based methods require second derivatives of the flow) and that ReLU has zero second derivative everywhere.

\section{More Synthetic Data Results}\label{sec:synth_checkboard}

\begin{figure}[!tb]
\begin{subfigure}{0.45\linewidth}
\centering
\centerline{\includegraphics[width=\columnwidth]{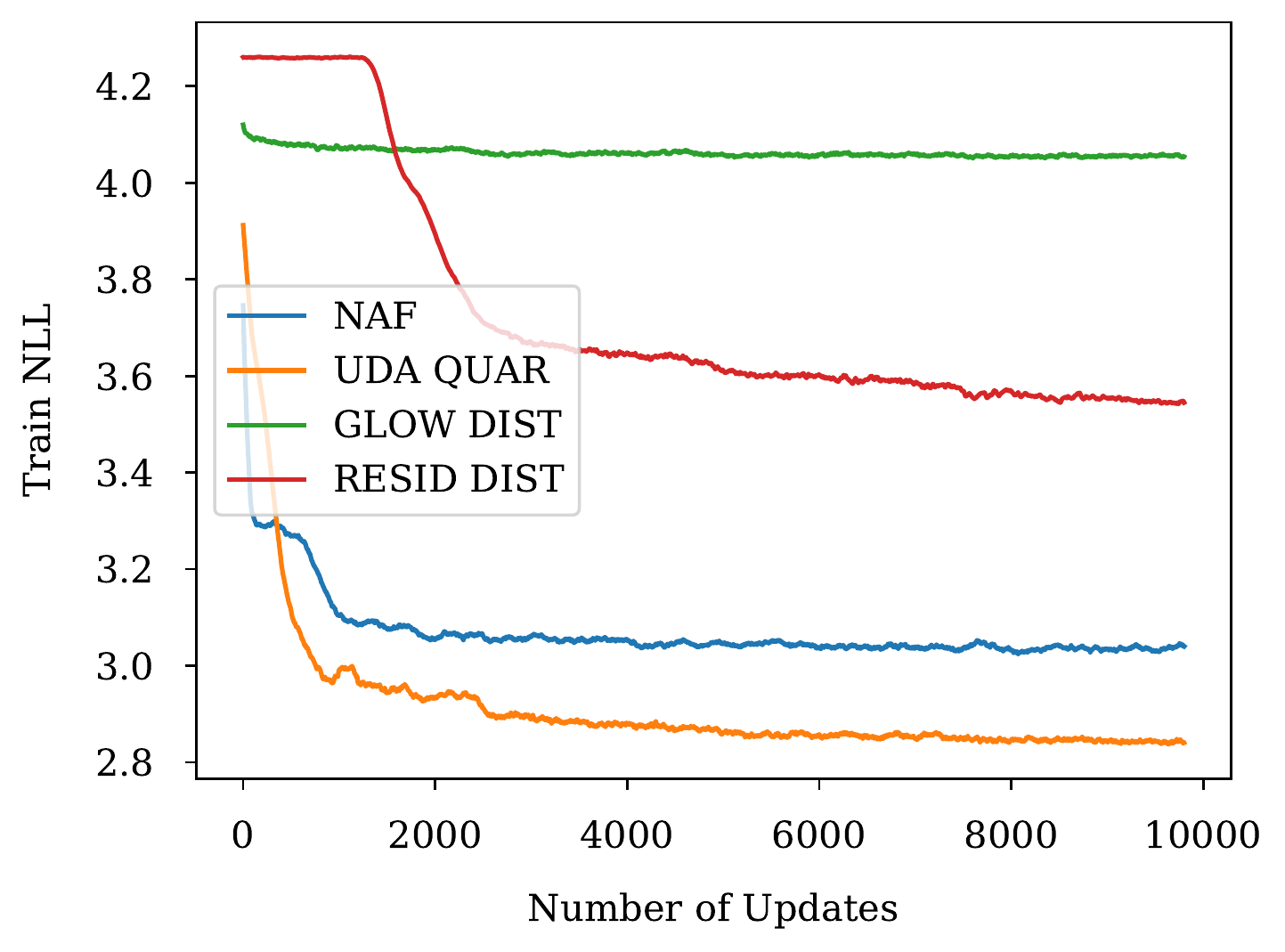}}
\end{subfigure} \hfill
\begin{subfigure}{0.45\linewidth}
\centering
\centerline{\includegraphics[width=\columnwidth]{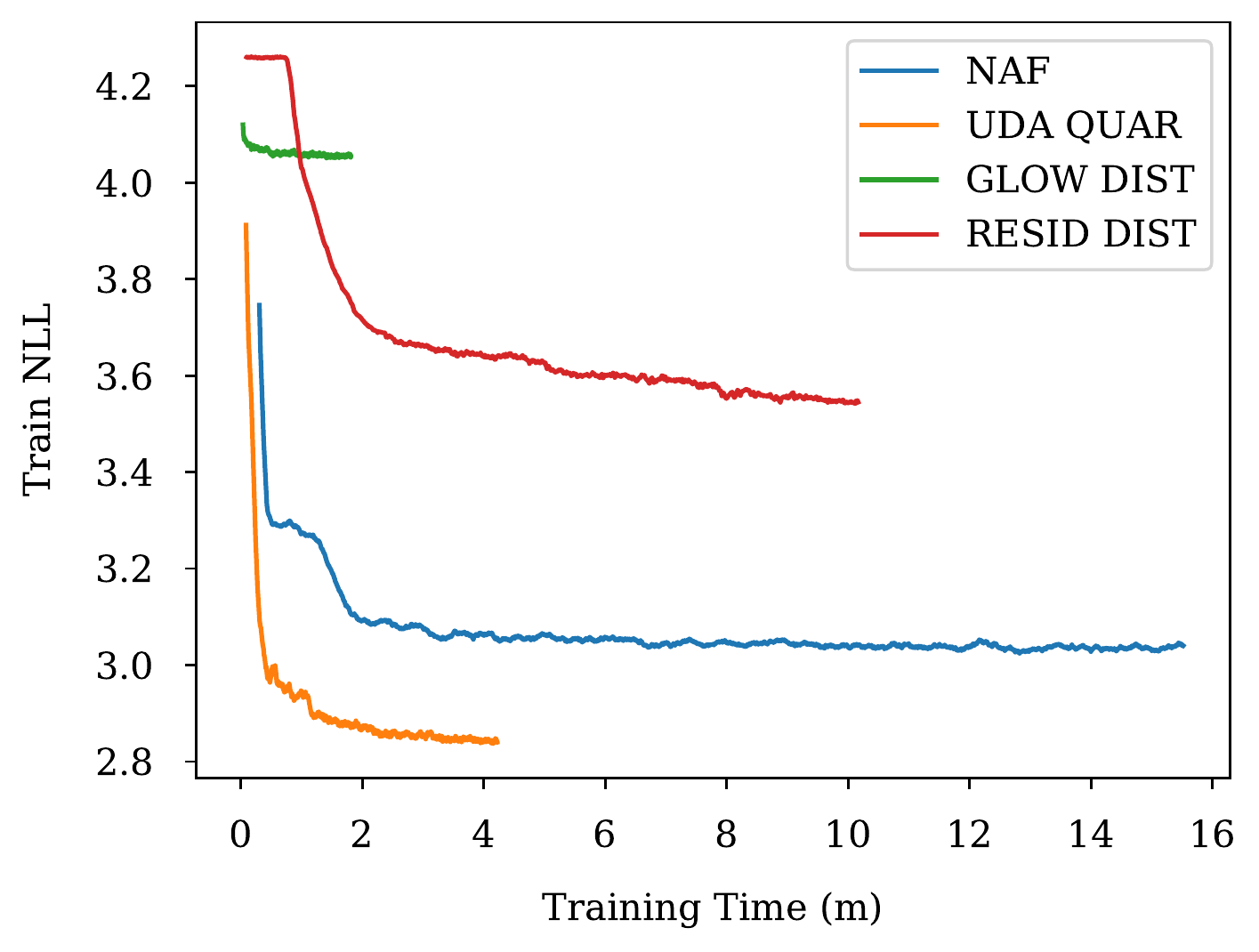}}
\end{subfigure}
\caption{Training loss curves for mixture of eight gaussians}\label{fig:train_eight}
\end{figure}

\begin{figure}[!tb]
\begin{subfigure}{0.45\linewidth}
\centering
\centerline{\includegraphics[width=\columnwidth]{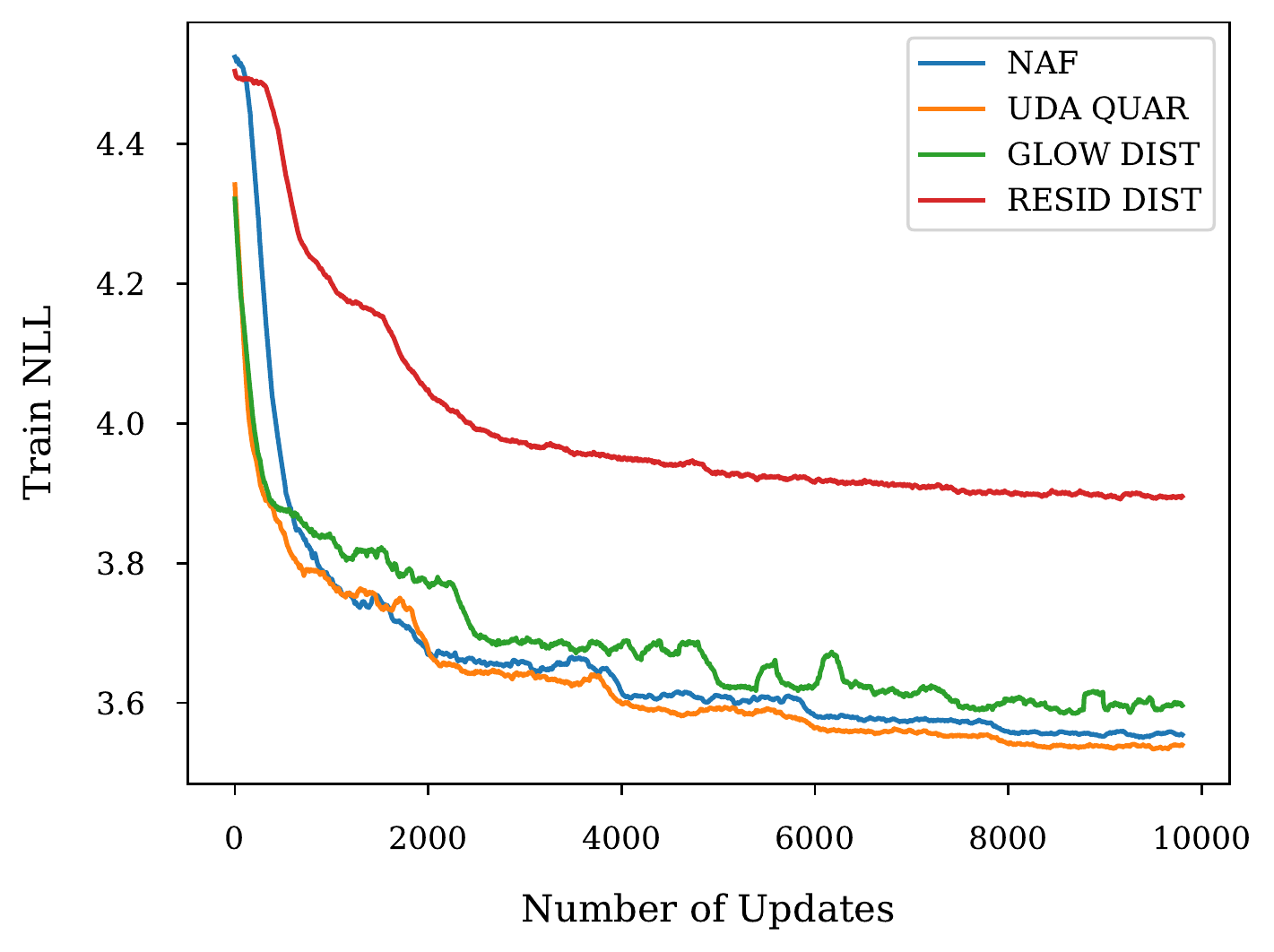}}
\end{subfigure} \hfill
\begin{subfigure}{0.45\linewidth}
\centering
\centerline{\includegraphics[width=\columnwidth]{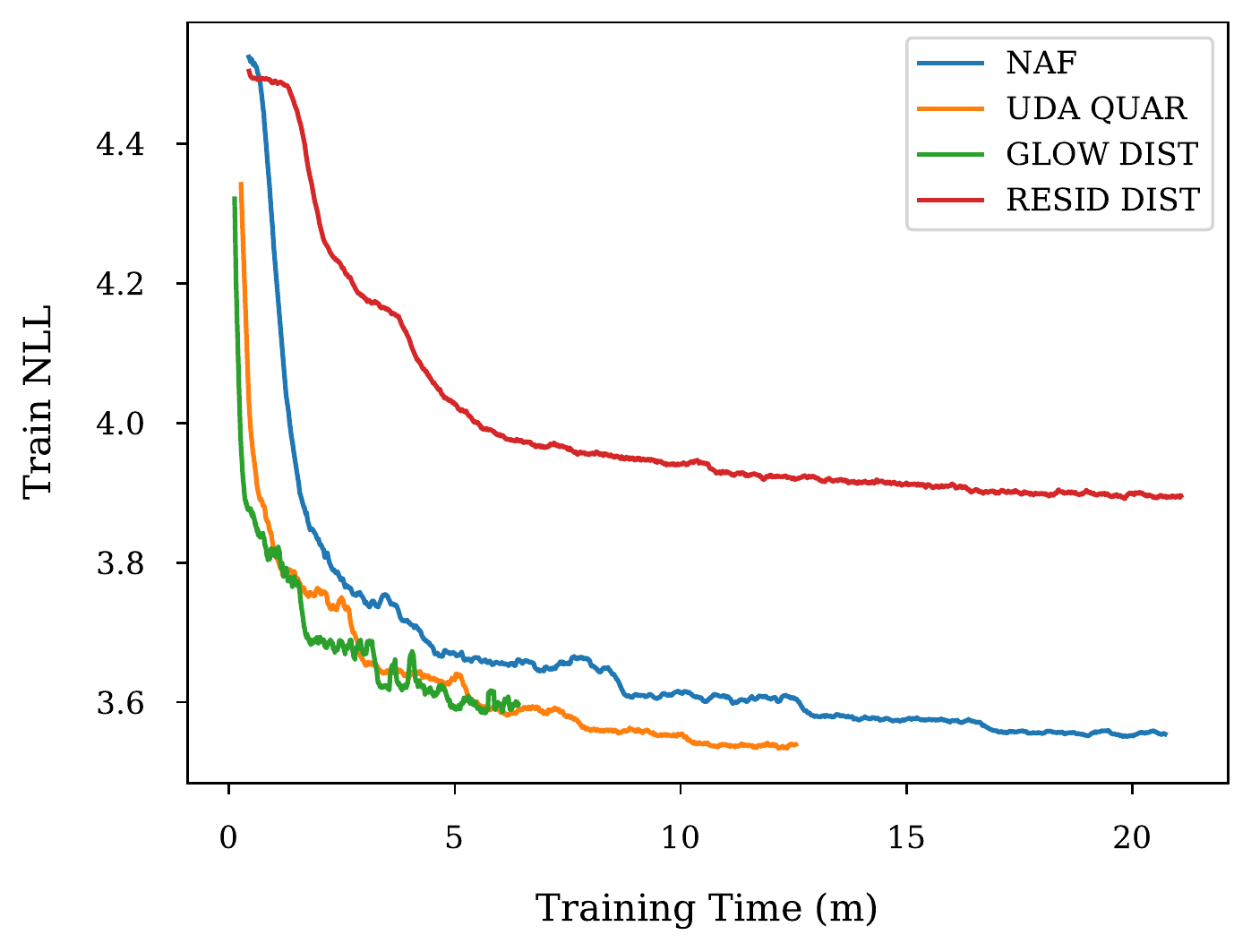}}
\end{subfigure}
\caption{Training loss curves for checkerboard data}\label{fig:train_checker}
\end{figure}

\begin{figure}[!tb]
\begin{subfigure}{0.3\linewidth}
\centering
\centerline{\includegraphics[width=\columnwidth]{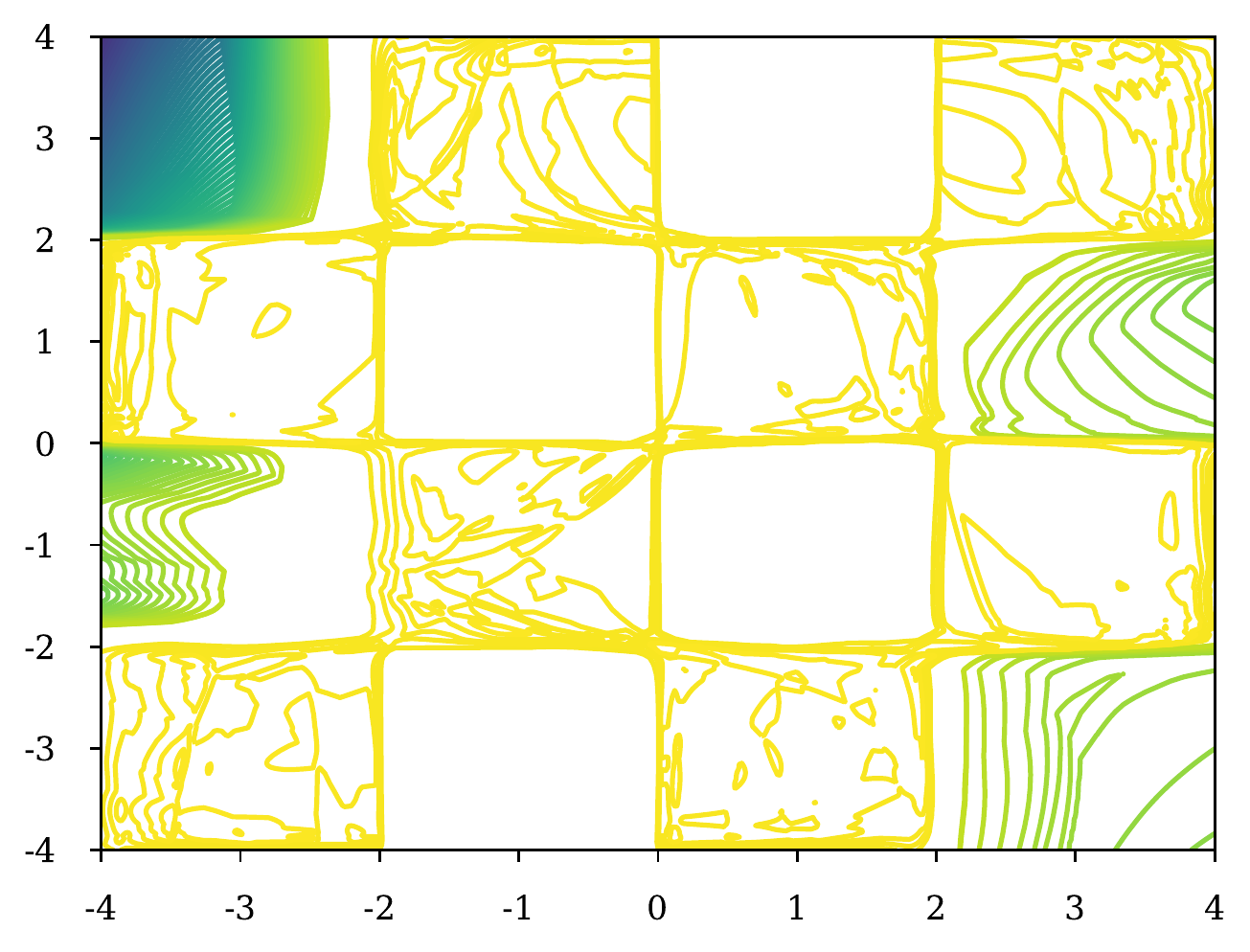}}
\caption{ELF-AR (Ours)}
\end{subfigure} \hfill
\begin{subfigure}{0.3\linewidth}
\centering
\centerline{\includegraphics[width=\columnwidth]{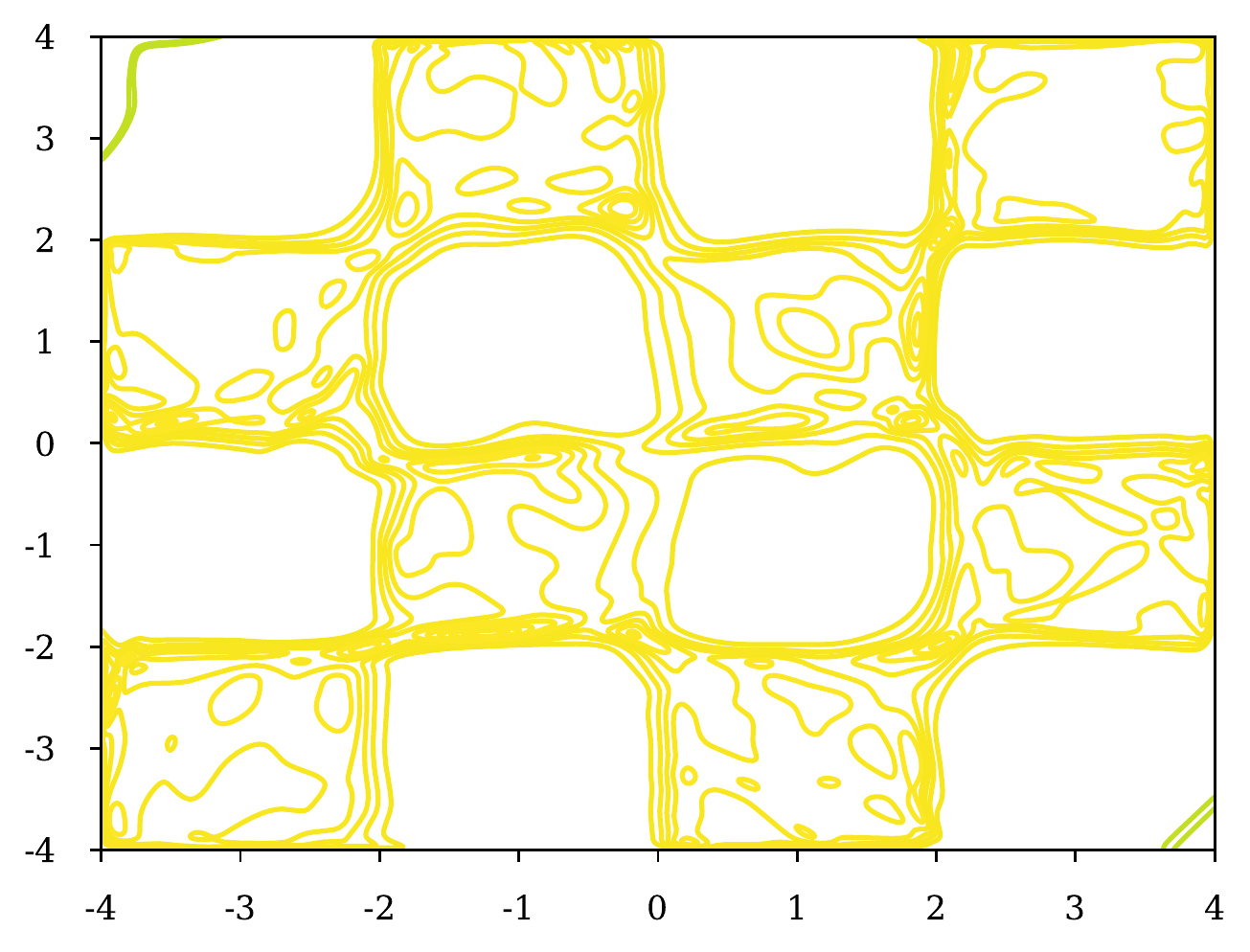}}
\caption{CP-Flow}
\end{subfigure}

\begin{subfigure}{0.3\linewidth}
\centering
\centerline{\includegraphics[width=\columnwidth]{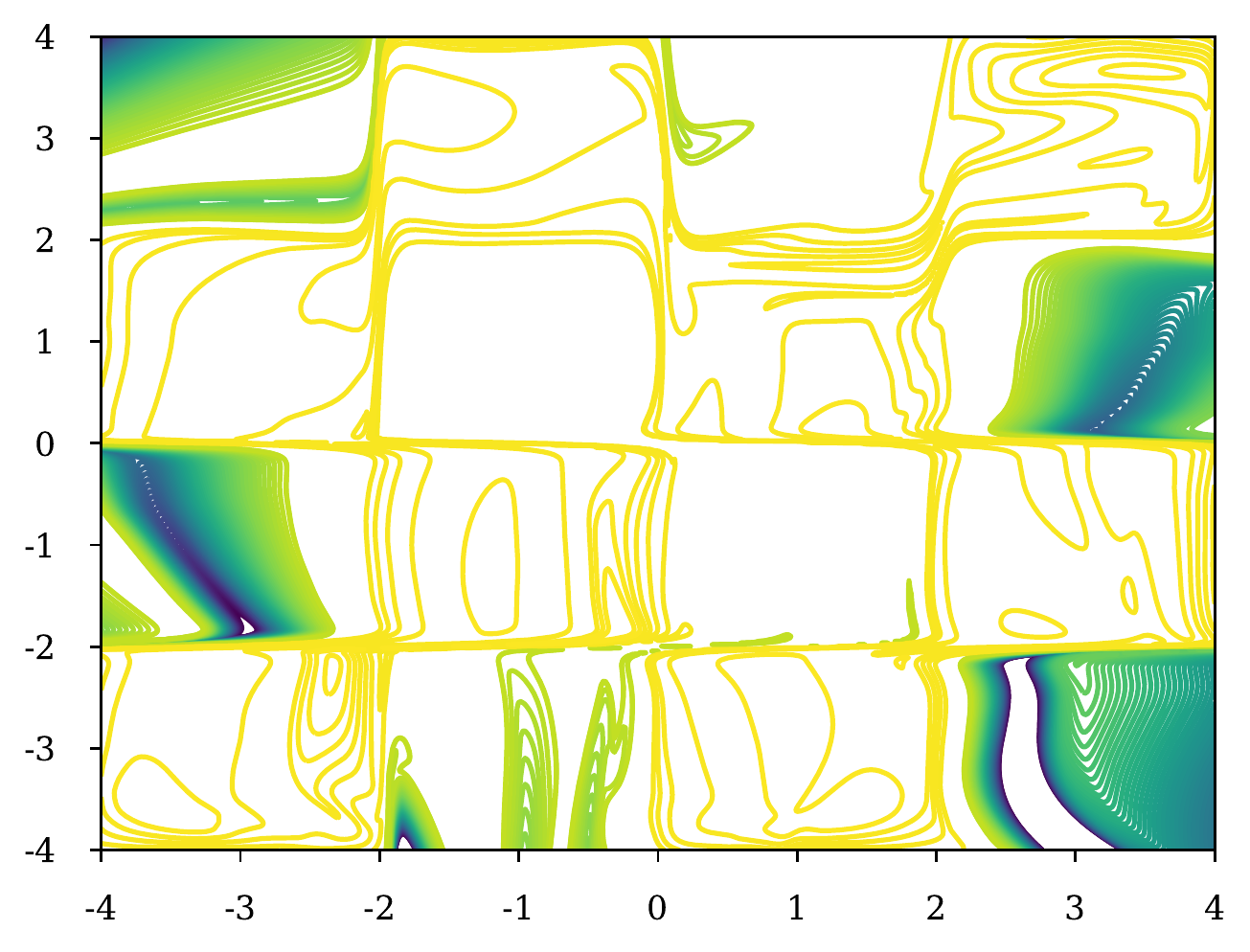}}
\caption{Glow (IAF)}
\end{subfigure} \hfill
\begin{subfigure}{0.3\linewidth}
\centering
\centerline{\includegraphics[width=\columnwidth]{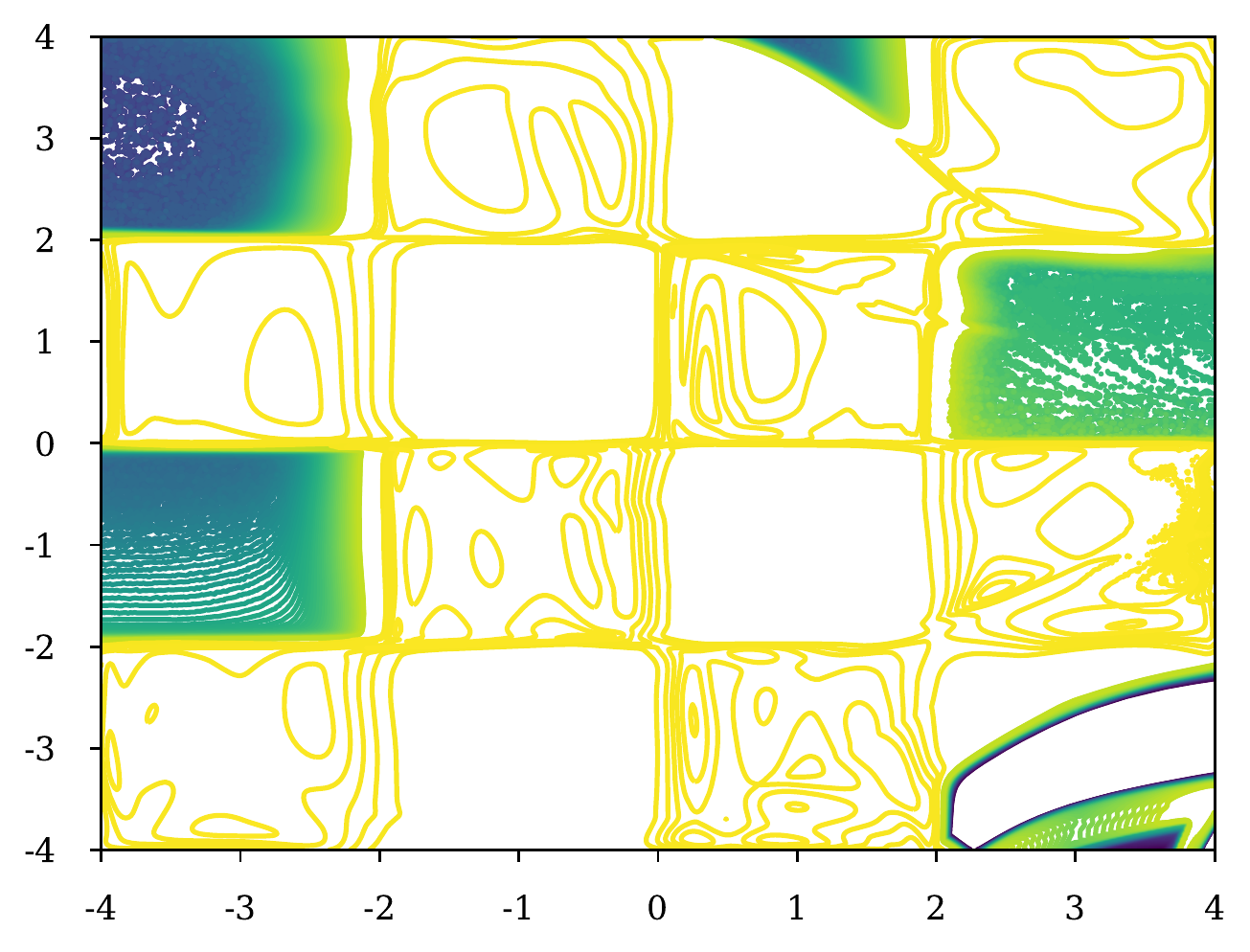}}
\caption{NAF}
\end{subfigure} \hfill
\begin{subfigure}{0.3\linewidth}
\centering
\centerline{\includegraphics[width=\columnwidth]{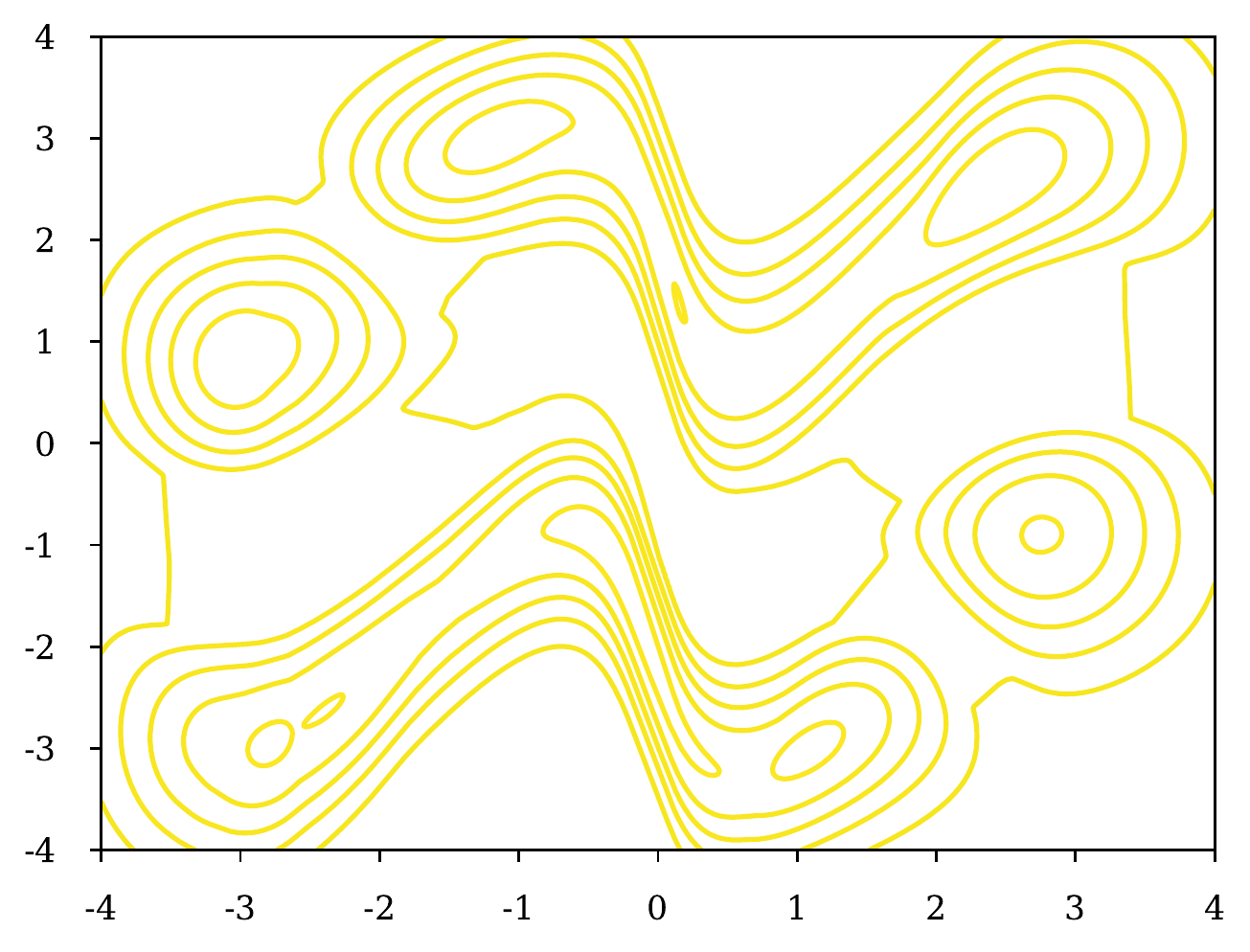}}
\caption{Residual Flow}
\end{subfigure}
\caption{The contour plots in log space of mixture of Gaussians. The levels shown in each subfigure are the same.}\label{fig:synth_checkerboard}
\end{figure}

\begin{table*}[!bt]
\caption{Comparison of different flows on fitting to checkerboard data where the architectures for each were chosen so that they had the same number of layers and approximately equivalent number of parameters (500K). $^\dagger$Implementation taken from \href{https://github.com/CW-Huang/CP-Flow}{https://github.com/CW-Huang/CP-Flow}.}\label{tbl:synth_checkerboard}
% \hspace*{-1.0cm}
\centering
\begin{tabular}{lcccr}
\toprule
     & \multicolumn{4}{c}{Checkerboard}
\\
 \cmidrule(lr){2-5} 
 & {Log-Likelihood} & {Train Time (m)}& {Inference Time (s)} & {Sample Speed (s)}  
\\
\midrule
Affine Coupling &  -3.60 & 6.4 & 0.04 & 0.004 \\ 
NAF & \textbf{-3.56} & 20.7 & 0.06 & \na \\ 
Residual Flow & -3.90 & 21.1 & 0.10 & 0.31  \\ 
CP-Flow & -3.60 & 143 & 0.69 & 3.5  \\ 
ELF-AR (Ours)  & \textbf{-3.56} & 12.6 & 0.20 & 0.54 \\ 
\bottomrule
\end{tabular}
\end{table*}

Similar to the experiment in \myref{Section}{sec:synthetic_data}, we tested out numerous flows on checkerboard data (\myref{Figure}{fig:synth_checkerboard}). However, one difference is that we stacked five flows instead of just one. In \myref{Table}{tbl:synth_checkerboard}, we see that ELF-AR and NAF are both able to perform strongly; however, a key difference being that we can sample from ELF-AR.

We further show the training NLL as a function of time and updates in \myref{Figure}{fig:train_eight} and \myref{Figure}{fig:train_checker}. We do not show CP-Flow as the loss used during training is a surrogate loss.

\section{Experimental Setup}\label{sec:architecture}

For all experiments, we trained on a single Tesla V100-SXM2-32GB GPU, except for the Imagenet experiments where we used four GPUs for Imagenet 32 and eight GPUs for Imagenet 64. All image experiments were run for up to 2 weeks.

\subsection{Synthetic Data}\label{sec:synth_architecture}

The mixture of Eight Gaussians was created using \href{https://github.com/CW-Huang/CP-Flow}{https://github.com/CW-Huang/CP-Flow}. The checkerboard dataset was created using \href{https://github.com/rtqichen/residual-flows}{https://github.com/rtqichen/residual-flows}.

For each method in our synthetic data, we used four hidden layers within each flow. For the results in \myref{Table}{tbl:synth_data}, we used one flow; for the results in \myref{Table}{tbl:synth_checkerboard}, we used five flows.

For mixture of eight gaussians, we used a hidden size of 256 for Affine Coupling and Residual Flow, 192 for ELF-AR, 384 for CP-Flow, and 256 for NAFs. For checkerboard data, we used a hidden size of 196 for Affine Coupling and Residual Flow, 128 for ELF-AR, 256 for CP-Flow, and 160 for NAFs. 

For each method, we optimized using Adam \citep{Adam2015}, starting with a learning rate of 2e-3 or 5e-3 (depending on whichever gave stronger performance). We trained for 10K steps with a batch size of 128, halving the learning rate every 2.5K steps.

\subsection{Tabular Data}\label{sec:tabular_architecture}

For all five datasets, we used five flows with five hidden layers. We hyperparameter tuned using a hidden size of $8d$, $16d$, $32d$ or $64d$ where $d$ is the dimensionality of the dataset. We further explored a weight decay of 1e-4, 1e-5, and 1e-6. For each dataset, we trained for up to 1000 epochs, early stopping on the validation set. We used a batch size of 1024 for every dataset except MINIBOONE where we used a batch size of 128. 

We optimized using Adam \citep{Adam2015} starting with a learning rate of 1e-3.
Further, we clipped the norm of gradients to the range $[-1, 1]$ and reduced the learning rate by half (to no lower than 1e-4) whenever the validation loss did not improve by more than 1e-3, using patience of five epochs.

Of the five datasets, MINIBOONE and BSDS300 eventually started to overfit whereas the other three did not. We expect larger parameterizations with stronger regularization might further improve performance on these two datasets.

\subsection{Image Data}\label{sec:image_architecture}

For the image models, we used a multiscale architecture. For all four datasets, we used three scales, four flows per scale for MNIST and CIFAR-10 and eight flows per scale for the two Imagenet datasets. We used a hidden size of 256 for ELF. For the hypernetwork, we used the convolutional residual blocks from \cite{Oord2016PixelRNN}; the residual block is:
\begin{equation}\label{eqn:pixelcnn_resid}
 \text{ReLU} \rightarrow \text{3x3x2HxH Conv} \rightarrow \text{ReLU} \rightarrow \text{1x1xHxH Conv} \rightarrow \text{ReLU} \rightarrow \text{3x3xHx2H Conv} 
 \end{equation}
where the notation used for the convolution is the first two are the kernel sizes in the height and width direction, and the last two numbers are the input and output size respectively. We used $H=192$ for all our full scale image experiments. Further, we used five residual blocks per flow.

For variational dequantization, we used a similar architecture for each flow except we condition on the image we are dequantizing. We used only one scale for dequantization instead of multiscale. To condition on an image, we first passed it through 4 residual blocks with hidden size 32:
$$ \text{ReLU} \rightarrow \text{3x3x32x32 Conv} \rightarrow \text{ReLU} \rightarrow \text{3x3x32x32  Conv} $$
To condition each flow on this representation, we concatenated it to the beginning of each residual blocks (\myref{Equation}{eqn:pixelcnn_resid}).

For training, we optimized using Adam \citep{Adam2015} with a learning rate of 1e-3 with a batch size of 64. We clipped the norm of gradients to the range $[-1, 1]$. We warmed up the learning rate over 100 steps; after 100K updates, we start to exponentially decay the learning rate by 0.99999 at every step until a learning rate of 2e-4, similar to the schedule used by \cite{VFlow2020}.

For our small architectures (\myref{Section}{sec:image_param_eff}), we used three residual blocks in each flow with a hidden size of 80 for CIFAR-10 and 84 for MNIST. For the experiment in \myref{Section}{sec:vflow_comparison}, we used a hidden size of 88 and the same dequantization model as was used for our full image model.

To handle the range of the data being $[0, 1]$, we used a logit transformation as the first layer in the network \citep{dinh2014nice}. 
However, for MNIST, we went one step further. Since majority of the pixels values are 0, before the logit transformation, we first transformed the data using the CDF of a mixture of $\text{Uniform}(0, 1/256)$ and $\text{Uniform}(1/256, 1)$, weighed by the probability of a pixel value being in the corresponding range in the training set. This trick helps to make the pixel values more uniform. We found this improved the ease of learning.

For all image models, we used Polyak averaging \citep{Polyak1992} with decay 0.999.

\section{Bits Per Dimension}\label{sec:bpd}

The performance of log-likelihood models for images is often defined using bits per dimension. Given a dequantization distribution $q(x)$ for $x \in \RR^d$, the bits per dimension is defined as
$$ \frac{\log p(x) - \log q(x)}{d \log 2} $$
\section{Image Generations}\label{sec:img_gen}

We generated samples from our model with the best bits per dimensions for MNIST (\myref{Figure}{fig:mnist_generations}) and CIFAR-10 (\myref{Figure}{fig:cifar_generations}).
\begin{figure*}[ht]
% \vskip 0.2in
\begin{center}
\begin{subfigure}{0.9\linewidth}
\centerline{\includegraphics[width=\textwidth]{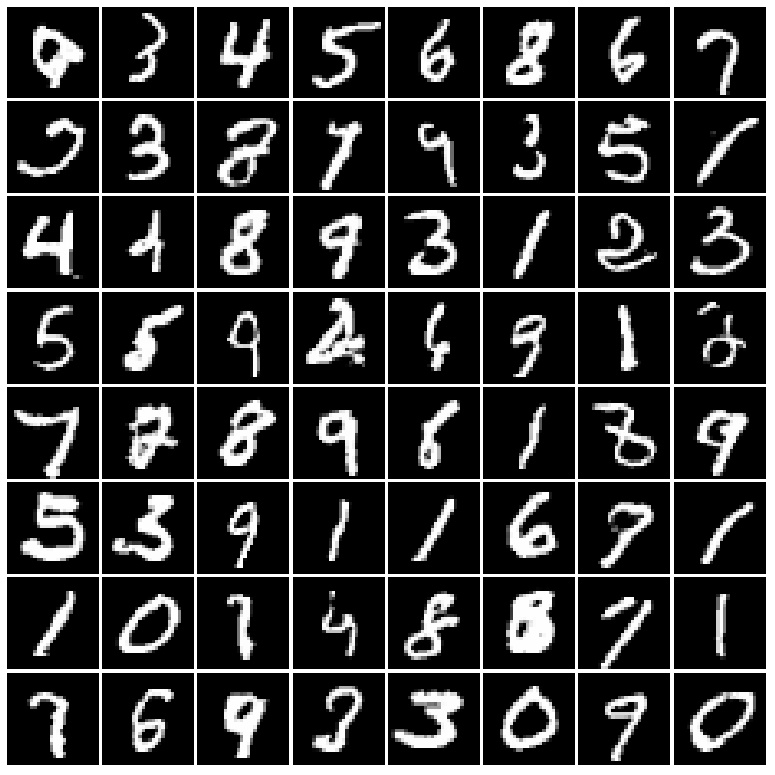}}
\end{subfigure}
\caption{Random samples from MNIST}
\label{fig:mnist_generations}
\end{center}
% \vskip -0.2in
\end{figure*}
\begin{figure*}[ht]
% \vskip 0.2in
\begin{center}
\begin{subfigure}{0.9\linewidth}
\centerline{\includegraphics[width=\textwidth]{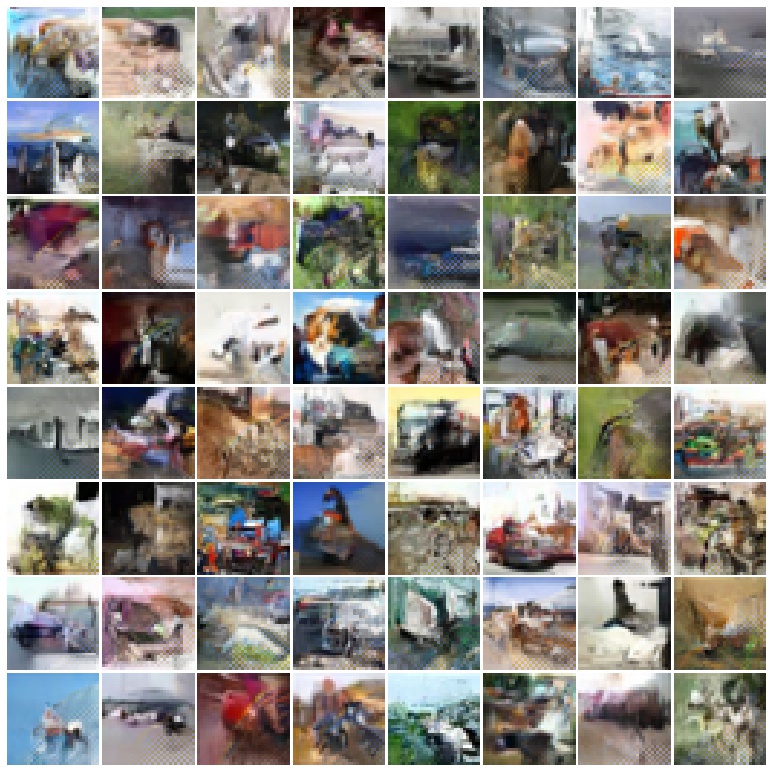}}
\end{subfigure}
\caption{Random samples from CIFAR-10}
\label{fig:cifar_generations}
\end{center}
% \vskip -0.2in
\end{figure*}

\section{Lipschitz Computation Code}\label{sec:cuda_code}

In this section, we give our custom CUDA code for Lipschitz computation.

\newpage
\inputminted[fontsize=\footnotesize]{cpp}{code/elf_lip.cpp}

\end{document}